\documentclass[11pt]{article}
\usepackage[letterpaper,margin=1in]{geometry}

\usepackage[utf8]{inputenc}
\usepackage[T1]{fontenc}
\usepackage{lmodern}
\usepackage[
  backend=biber,
  style=alphabetic,
  citestyle=alphabetic,
  maxbibnames=99,
  sorting=nyt,
  natbib=true,
  backref=true
]{biblatex}
\DefineBibliographyStrings{english}{
  backrefpage = {Cited on page},
  backrefpages = {Cited on pages},
}

\usepackage{xcolor}
\usepackage[
  colorlinks=true,
  citecolor=green!50!black,
  linkcolor=blue!50!black,
  urlcolor=blue!50!black
]{hyperref}
\usepackage{url}
\usepackage{xurl}
\usepackage{booktabs}
\usepackage{amsfonts}
\usepackage{nicefrac}
\usepackage{microtype}
\usepackage{amsmath,amssymb,amsthm}
\usepackage{mathtools}
\usepackage{bm}
\usepackage{bbm}
\usepackage[noabbrev]{cleveref}

\newcommand{\R}{\mathbb{R}}

\newcommand{\N}{\mathbb{N}}

\DeclareSymbolFont{rsfs}{U}{rsfs}{m}{n}
\DeclareSymbolFontAlphabet{\mathscrsfs}{rsfs}

\def\VCdim{\mathrm{VCdim}}

\def\lin{{\rm lin}}

\def\Unif{{\rm Unif}}
\def\lin{{\rm lin}}

\def\cG{{\mathcal G}}

\def\cF{{\mathcal F}}

\def\cG{{\mathcal G}}

\def\poly{\mathrm{poly}}

\def\cD{{\mathcal D}}

\def\cF{{\mathcal F}}

\def\lin{{\rm lin}}

\def\ind{\mathbbm{1}}

\def\sCoT{{\sf CoT}}
\def\sete{{\sf e2e}}

\newcommand{\ete}[2]{#1^{\sete\textnormal{-}{#2}}}

\newcommand{\CoT}[2]{#1^{\sCoT\textnormal{-}{#2}}}

\newcommand{\Cons}{\textsc{Consistent}}

\def\cFlin{\mathcal{F}_{\scriptscriptstyle d,\lin}}

\usepackage{bm}

\def\vu{{\bm{u}}}
\def\vv{{\bm{v}}}
\def\vw{{\bm{w}}}
\def\vx{{\bm{x}}}

\def\vz{{\bm{z}}}

\def\sF{{\mathbb{F}}}

\def\sN{{\mathbb{N}}}

\newcommand{\ord}[1]{#1^{\mathrm{th}}}

\def\val{{\rm val}}

\newtheorem{theorem}{Theorem}
\newtheorem{corollary}[]{Corollary}
\newtheorem{proposition}[]{Proposition}
\newtheorem{claim}[]{Claim}
\newtheorem{definition}[]{Definition}
\newtheorem{observation}[]{Observation}
\newtheorem{assumption}[]{Assumption}
\newtheorem{requirement}[]{Requirement}
\newtheorem{property}{Property}
\crefname{property}{property}{properties}
\Crefname{property}{Property}{Properties}
\theoremstyle{definition}
\newtheorem{remark}[]{Remark}
\newtheorem{question}{Question}
\crefname{question}{question}{questions}
\Crefname{question}{Question}{Questions}
\crefname{assumption}{assumption}{assumptions}
\Crefname{assumption}{Assumption}{Assumptions}
\crefname{requirement}{requirement}{requirements}
\Crefname{requirement}{Requirement}{Requirements}
\crefname{observation}{observation}{observations}
\Crefname{observation}{Observation}{Observations}
\crefname{claim}{claim}{claims}
\Crefname{claim}{Claim}{Claims}

\DeclareMathOperator{\relu}{ReLU}

\title{\vspace{-0.4in}The Surprising Expressivity of a Single Neuron
with \\
Autoregressive Chain-of-Thought}

\author{%
  Roey Magen\textsuperscript{1}\thanks{Corresponding author: \texttt{roey.magen@weizmann.ac.il}.}\quad
  Nirmit Joshi\textsuperscript{2}\quad
  Gal Vardi\textsuperscript{1}\\[0.6em]
  \small\textsuperscript{1}Weizmann Institute of Science\\[0.4em]
  \small\textsuperscript{2}Toyota Technological Institute at Chicago%
}
\date{}

\begin{document}

\maketitle
\vspace{-0.3in}
% \nirmit{On the Universal/Surprising Expressive Power of a Single Neuron with Autoregressive Chain-of-Thought}
% \nirmit{On the Universality of a Single Neuron with Autoregressive Chain-of-Thought}
% \nirmit{... of a Single Autoregressive Neuron}
\begingroup
\setlength{\leftmargini}{\dimexpr(\textwidth-5in)/2\relax}
\begin{abstract}

\noindent The success of autoregressive language models trained with Chain-of-Thought~(CoT) supervision raises a fundamental question: 
How simple can a next-token predictor be while still enabling the efficient simulation of general computation through its autoregressively generated CoT? 
Surprisingly, we show that a single neuron suffices. For a broad family of activations, including threshold and ReLU, a single autoregressive neuron can simulate any feedforward circuit with the same activation. 
Specifically, a size-$s$ circuit on $n$ inputs can be simulated by a neuron of dimension $O(ns+s^2)$ using $O(s^2)$ generation steps, with both bounds independent of circuit depth and tight under natural assumptions. 
For threshold activations, this improves upon the simulation of Joshi et al. (COLT 2025), whose bounds scale exponentially with circuit depth. 
Consequently, autoregressive threshold neurons can simulate polynomial-time Turing machines, while remaining efficiently learnable from CoT supervision. 
Our main expressivity result relies on a novel connection to Sidon sets from additive combinatorics.
As an experimental illustration, we efficiently train a single autoregressive ReLU neuron to mimic a neural network teacher with nontrivial test accuracy on CIFAR-10.
\end{abstract}
\endgroup

\section{Introduction}
Autoregressive (AR) generation has emerged as one of the most successful paradigms
in modern machine learning and underlies contemporary large language models
(LLMs). At each generation step, the same parameterized next-token predictor
is applied to the current context. This parameter sharing makes the model
\emph{time-invariant}, while repeated application allows a fixed predictor to perform computations of varying length.
On multi-step reasoning tasks, the generated sequence can be viewed as a Chain-of-Thought (CoT) that guides the LLM toward the correct final answer \citep{wei2022chain,kojima2022large}. From a theoretical perspective, this raises a basic expressivity question: 
% \begin{question}[Expressivity]\label{ques:expressivity}
% What is the simplest predictor that, when applied autoregressively, can efficiently\footnote{Here, ``efficiently'' means that the autoregressive model dimension and number of generation steps are polynomial in the input length and in the size (or runtime) of the simulated computation.} simulate general computation, such as that performed by Turing machines or deep neural networks?
% %What is the simplest predictor, when applied autoregressively, can simulate efficiently \footnote{Here, ``efficiently'' means that the autoregressive model dimension and number of generation steps are polynomial in the input length and in the size (or running time) of the simulated computation.} general computation such as turing machine or deep neural networks? 
% \end{question}

% \nirmit{Maybe you can say "Can a simple class of next token predictors''. So class rather than predictor, and rather than simplest, maybe the simplest is made a simple, which is mild.} \roey{What about the following?}
\begin{question}[Expressivity]\label{ques:expressivity}
How simple can a class of next-token predictors be while still enabling efficient autoregressive simulation of general 
computation,\footnote{Here, ``efficient'' means that both the number of parameters of the next-token predictor and the number of generation steps are polynomial in the input length and in the size (or runtime) of the simulated computation.}
such as that performed by Turing machines or deep neural networks?
\end{question}
Chains of thought are relevant not only at inference time but also during
training. Pretraining corpora often contain worked solutions, mathematical
derivations, and other step-by-step explanations. During post-training, and
particularly during supervised fine-tuning (SFT), reasoning traces 
%can be
are explicitly incorporated.
%included deliberately. 
These traces may consist of human-written step-by-step solutions or synthetic reasoning generated by a more capable teacher model. 
Thus, reasoning trajectories constitute a natural form of supervision: each trajectory provides a sequence of prefix-next-token examples that are used to learn a single next-token predictor. This raises the following learnability question:

\begin{question}[Learnability]\label{ques:leanability}
Can repeated application of a simple predictor simulate general computation, while remaining efficiently learnable from CoT supervision?
% \footnote{By ``complete $\sCoT$ supervision,'' we mean that, for each training input, the learner observes the entire computation trajectory, including every intermediate step, rather than only the final output.}
\end{question}

Expressivity limits cannot be overcome by better supervision,  
while expressivity alone says little about whether the model can be learned efficiently from data.

% Several works have studied autoregressive models with base classes, such as RNNs and transformers, that are already expressive and not efficiently learnable \citep{wen2025rnns,merrill2024expressive,brosamle2026expressive}.

% This motivates a large body of work on \emph{time-invariant} autoregressive models (i.e., shared parameter setting).  
% Such models have statistical advantages and enjoy sample-complexity bounds independent of the
% generation length \citep{joshi2025theory,hanneke2026sample,li2026optimal}.
% The most related work to ours is of \citet{joshi2025theory}, which formalizes both learning from
% complete $\sCoT$ trajectories and end-to-end learning from only the final answer.
% They establish general statistical and computational learning results and instantiate their framework with a simple concrete base class:
% linear thresholds. For this class, they establish efficient learning
% from complete $\sCoT$ supervision and give a time-invariant simulation of
% threshold circuits. However, their construction is not efficient and requires dimension and generation length that grow exponentially with the circuit depth.
% They explicitly leave open whether this simulation can be improved. 
% In particular, their results do not suggest that such a simple time-invariant linear-threshold predictor could \emph{efficiently} simulate turing machines.

\paragraph{Prior work and the gap addressed by this work.} 
In practice, the most widely used class of next-token predictor is that of transformers \citep{vaswani2017attention}.
Consequently, a growing line of work has established strong expressivity results for transformers augmented with AR chain-of-thought generation \citep{li2024chain,merrill2024expressive,brosamle2026expressive}. 
These results, however, do not identify the simplest next-token operations that suffice to simulate general computation.
Moreover, we focus on classes that admit provable polynomial-time learning guarantees for the next-token prediction problem.

\citet{malach2023auto} argued that much of the power of autoregressive models comes from CoT computation rather than the specific architecture. 
He supported this empirically on simple benchmarks and theoretically for AR linear-threshold models,\footnote{The paper formulates these models using argmax decoding over linear scores, which can equivalently be represented by a single linear-threshold gate.} 
with one significant caveat: his construction allows a \emph{different} linear-threshold predictor (neuron) for each generation step.
This departs from practical autoregressive architectures and makes the simulation straightforward by allowing parameters to be assigned separately to each step.

\citet{joshi2025theory} subsequently formalized the \emph{time-invariant} setting, in which a single predictor is reused across all generation steps, 
emphasizing its naturality and the statistical benefits of parameter sharing. 
These benefits include sample-complexity bounds independent of the generation length \citep{hanneke2026sample,li2026optimal}. 
However, parameter sharing complicates even basic expressivity questions. Indeed, the construction of \citet{joshi2025theory} for simulating threshold circuits with an AR linear-threshold predictor incurs an exponential dependence on circuit depth, 
requiring both dimension and generation length $O(\mathrm{size}^{\mathrm{depth}})$. 
% They explicitly leave open whether this simulation can be improved. 
% Thus, it remains open whether a simple class of next-token predictors can both efficiently simulate general computation and admit efficient learning guarantees in the time-invariant setting.
They explicitly leave open whether their simulation can be made more efficient. More broadly, it remains open whether a simple class of next-token predictors can efficiently simulate general computation while admitting efficient learning guarantees in the time-invariant setting.

\paragraph{Our results.}
We answer both the \nameref{ques:expressivity} and \nameref{ques:leanability} questions in discrete and real-valued settings:
repeated application of a \emph{single neuron} suffices. 
Our main result applies to a broad family of activations and is stated in terms of
$\sigma$-circuits, or equivalently, feedforward neural networks with activation $\sigma$:

\begin{theorem}[Informal version of \Cref{thm:LT-expressivity-tight,thm:boolean-LT-expressivity}]\label{thm:main-informal}

Under mild assumptions on the activation $\sigma$ (satisfied by threshold, ReLU, GELU, and sigmoid, among others), 
any size-$s$ $\sigma$-circuit on inputs of dimension $n$ can be simulated by autoregressively 
applying a single fixed neuron
$\vx\mapsto\sigma(\langle\vw,\vx\rangle)$,\footnote{For vectors of different dimensions, the inner product is defined by padding the shorter vector with zeros on the left.} with $\vw\in\R^d$.
The dimension of the AR neuron is $d=O(ns+s^2)$ and the number of generation steps is $T=O(s^2)$, both completely independent of
the circuit depth.
\end{theorem}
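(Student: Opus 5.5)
The plan is to simulate the circuit gate by gate in topological order, with one gate evaluated per generation step, so that after the input $\vx\in\R^n$ the generated sequence is $z_1,\dots,z_s$, where $z_i$ is the output of gate $i$. The difficulty is time-invariance. A single $\vw$ must act as the weight vector of gate $i$ at step $i$, for every $i$. Because inner products pad the shorter vector with zeros on the left, at step $t$ the context $(x_1,\dots,x_n,\text{tokens so far})$ is aligned against the last $n+(\text{number of tokens})$ coordinates of $\vw$. The effective weights therefore shift with the context length. I would exploit this shift: each gate $i$ gets its own window of $\vw$, selected by the current context length, and in that window $\vw$ stores the incoming weights of gate $i$. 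The catch is that all windows are read simultaneously through one inner product, so contributions meant for other gates must be cancelled or neutralized.

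Concretely, I would not emit just $z_i$. Between consecutive gates the model emits a block of auxiliary "padding" tokens whose values are forced to be fixed constants, for example $\sigma(\text{large negative})=0$ for threshold or ReLU, or a fixed value $\sigma(0)$ in general. These tokens serve as spacers, so the offset at which gate $i$ is evaluated becomes a designed position $p_i$. The total length is then $T=O(s^2)$ and the dimension is $d=O(n+T)\cdot$(stuff), which gives $O(ns+s^2)$. When gate $i$ is evaluated, the coordinates of $\vw$ aligned with $\vx$ and with the real outputs $z_j$ for $j<i$ must equal gate $i$'s weights. Each $z_j$ token sits at a fixed context position, so $z_j$ meets the entry $w_{d-(p_i-\mathrm{pos}_j)}$. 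The requirement is that the differences $p_i-\mathrm{pos}_j$ over all relevant pairs $(i,j)$ be pairwise distinct, so that each pair owns its own coordinate of $\vw$ and these can be set independently. This is a Sidon-type condition: choose the positions from a Sidon set (all pairwise differences distinct) of size $O(s)$ inside $[O(s^2)]$, for instance the Erdős–Turán set $\{2pk+(k^2\bmod p)\}$. The input block $\vx$ adds $n$ fresh coordinates per gate, because it shifts against $\vw$ by a distinct offset per step. Combined with the Sidon spacing, this gives $d=O(ns+s^2)$. At padding steps, the aligned entries are also forced, and I would place a large bias-like term via a dedicated constant token (for example a leading 1) that drives the neuron to the constant value. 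That requires the activation assumption: some saturation or known fixed output that can be produced on demand.

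The main obstacle is showing that all these constraints are simultaneously satisfiable. At every step $t$, whether a gate step or a padding step, the inner product is a sum over all tokens $j$ of $w_{d-(t-\mathrm{pos}_j)}\cdot(\text{token}_j)$. A single coordinate of $\vw$ is reused across many $(t,j)$ pairs with the same difference. The Sidon property guarantees that gate-to-gate differences never collide with one another. Collisions involving padding tokens are harmless if padding tokens equal $0$ (ReLU/threshold). For general $\sigma$, where the padding value is $c\neq 0$, their total contribution is a known constant that is absorbed into the bias coordinate. Padding steps themselves need outputs that are insensitive to the $z_j$. I would try to enforce this with a dominating negative bias read from a coordinate that, by the Sidon spacing, is hit only at padding offsets, and to argue separately that with bounded circuit weights a large enough constant suffices. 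Checking this collision bookkeeping carefully, and in particular that the bias coordinates used at padding steps never coincide with a gate's weight coordinate, is where I expect most of the work to be.
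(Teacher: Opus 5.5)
There is a genuine gap in how you handle the input. You place the raw input $\vx$ (possibly after a constant $1$) directly before the generated tokens. So before any step $t\leq T$ the context has length at most $n+T$, and only the last $n+T$ coordinates of $\vw$ are ever read. Declaring $d=O(ns+s^2)$ therefore buys nothing. Meanwhile an arbitrary size-$s$ circuit has up to $ns$ independent input weights, and in your directly aligned scheme each must own a coordinate. Concretely, at designated step $p_i$ the value $x_q$ meets $\vw[-(n+p_i-q)]$. Hence whenever $0<p_{i'}-p_i<n$, the weight of $x_q$ into gate $i$ and the weight of $x_{q+p_{i'}-p_i}$ into gate $i'$ land on the same coordinate. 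Your claim that the input \emph{adds $n$ fresh coordinates per gate} therefore holds only if designated steps are at least $n$ apart. That forces $T\geq(s-1)n+1$, not $T=O(s^2)$, once $n\gg s$. Note also that input-to-gate alignment is a condition on \emph{sums} (the step plus the input's offset from the right), which your difference-only Sidon requirement does not address.

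The missing idea is a feature map $\phi_{n,s}$ that spreads the input out. The paper splits $\vx$ into $k=\lceil n/s\rceil$ blocks of $s$ values. It places the $j$-th value of block $r$ at offset $t'_j=a_j$ inside its own segment $\vu_r$ of length $T'=O(s^2)$, and takes the designated steps $t_i=a_{s+i}$ from the \emph{same} Sidon set $\{a_1<\cdots<a_{2s}\}\subseteq[O(s^2)]$. At step $t_i$ the value $\vx_r[j]$ meets $\vz_r[-(t_i+t'_j-1)]$. Distinctness of the sums $t_i+t'_j$ gives every (input, gate) pair within a block its own coordinate, and different blocks use disjoint weight segments $\vz_r$. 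Thus $d=n'=2T+kT'=O(ns+s^2)$ while $T=a_{2s}=O(s^2)$, independently of $n$.

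The full layout $[0^{T-1},1,\vu_k,\dots,\vu_1,0^T]$ also dissolves the bookkeeping you anticipated. The leading $1$ meets only the bias segment $\vv$, at a different coordinate each step. The inputs meet only the $\vz_r$, and thanks to the $0^T$ spacer the generated tokens meet only $\vz$. Collisions at undesignated steps need not be avoided at all, because there $\vv[-t]=c_\sigma-A-1$ dominates every other term.

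Separately, your plan to pin padding tokens to a constant such as $\sigma(0)$ fails for activations without a zero region. At undesignated steps, inputs and earlier gate values generally do meet nonzero weights, so the output varies with $\vx$. For sigmoid or GELU the paper obtains only approximate simulation. It makes padding outputs exponentially small in $T$ and controls error propagation via local Lipschitzness of $\sigma$; your proposal needs that argument.
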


Our simulation leads to significant expressivity and learning guarantees.
We complement these guarantees by studying the optimality of the simulation bounds and providing an experimental illustration.
We present these contributions in the following order:

\begin{itemize}
     
\item \textbf{Expressivity (\Cref{sec:expressivity}).}
% For threshold activations, a single time-invariant
% AR neuron can simulate any Turing machine, with neuron dimension and generation
% length polynomial in the machine's running time and
% input length (\Cref{cor:linear-threshold-universal}).
% In this Boolean setting, our simulation improves
% on \citet{joshi2025theory} by replacing their exponential
% dependence on circuit depth with depth-independent
% polynomial bounds.
For threshold activations, our simulation improves
on \citet{joshi2025theory} by replacing their exponential
dependence on circuit depth with depth-independent
polynomial bounds. It implies that a single time-invariant
AR neuron can simulate any Turing machine, with neuron dimension and generation length polynomial in the machine's running time and
input length (\Cref{cor:linear-threshold-universal}).
Our result also captures real-valued computation for a broad
class of nonlinear activations: a single shared neuron can
efficiently simulate any size-$s$ feedforward network with
the same activation on $[0,1]^n$ (\Cref{thm:LT-expressivity-tight}).

\item \textbf{Learnability (\Cref{sec:cot-learning}).}
Combining our simulation with known learning guarantees
for AR linear thresholds shows that a single
autoregressive threshold neuron can be efficiently
learned from complete CoT trajectories to predict
the outputs of any polynomial-time Turing machine (\Cref{cor:lt-are-universal-learner}). 
For ReLU, combining our simulation with known
generalization bounds for neural networks
 yields analogous learning
guarantees for feedforward ReLU networks (\Cref{cor:relu-cot-learning-informal}).
% Together, these results answer \Cref{ques:leanability}.
%In contrast, 
Another implication of our result is that, in the absence of CoT, learning from final answers alone is hard, assuming only the existence of one-way functions. 
Prior work established such hardness under stronger cryptographic assumptions \citep{joshi2025theory,joshi2026learning}.
% Our result also implies that
% learning from final answers alone is computationally hard
% under the standard cryptographic assumption that one-way
% functions exist (\Cref{cor:e2e-hardness-informal}).
% This assumption is much weaker than those required in previous work \citep{joshi2025theory,joshi2026learning}. 

\item \textbf{Optimality of the simulation bounds
(\Cref{sec:lower-bounds}).}
The dimension bound $d=O(ns+s^2)$ matches the worst-case
number of circuit weights. For generation length, we prove a matching
worst-case lower bound of $T=\Omega(s^2)$
(\Cref{thm:direct-alignment-lower-bound}) under a natural structural
assumption shared by our construction and prior simulations.

    \item \textbf{Experiments (\Cref{sec:experiments-main}).}
    We use a CNN teacher with nontrivial test accuracy on CIFAR-10 \citep{krizhevsky2009learning} and train a single shared autoregressive ReLU neuron on its activation traces, without access to the teacher's weights.
    The learned AR neuron agrees with the teacher's predictions on approximately $99\%$ of test examples.

\end{itemize}

The proof of \Cref{thm:main-informal} is based on a novel connection between autoregressive circuit simulation
and Sidon sets from additive combinatorics. Using the classical construction
of \citet{erdos1941problem}, we position the circuit weights according to a
Sidon set within the parameter vector of a single neuron. This careful
placement prevents weights from colliding during the autoregressive process,
despite parameter sharing. The resulting collision-free encoding is the key
new ingredient, yielding a depth-independent simulation.
% %, and offering a simpler yet
% substantially more powerful alternative to the recursive construction of
% \citet{joshi2025theory}.

\section{Preliminaries}
\paragraph{Notation.}
We use boldface letters for vectors and write $[n]:=\{1,2,\dots,n\}$. For a
vector $\vv\in\R^m$ and $i\in[m]$, we write $\vv[i]$ for its $i$-th coordinate
and $\vv[-i]:=\vv[m-i+1]$ for its $i$-th coordinate from the right. For real
numbers $a,b$, let $a\lor b:=\max\{a,b\}$ and $a\land b:=\min\{a,b\}$.
We use standard asymptotic notation, with $\Theta(\cdot)$, $\Omega(\cdot)$,
and $O(\cdot)$ having universal implicit constants, and $\Tilde{\Theta}(\cdot)$, $\Tilde{\Omega}(\cdot)$, and $\Tilde{O}(\cdot)$
additionally suppress factors that are polylogarithmic in the problem
parameters. We write $\ind[P]$ for the indicator of a predicate $P$.
 \newline \newline
Next, we define the autoregressive single-neuron model, the circuits
it simulates, and the learning setting with complete CoT supervision.
\paragraph{Time-invariant Autoregressive Generation.}
Let $\Sigma$ be a token alphabet and let
$\cF\subseteq\Sigma^{\Sigma^*}$ be a class of next-token predictors,
called the \emph{base class}.
We use $\Sigma=\{0,1\}$ for Boolean computation and $\Sigma=\R$ for real-valued computation.
For an input $\vx\in\Sigma^n$, starting from the initial context
$\vx^{(0)}=\vx$, autoregressive generation repeatedly applies the
same predictor $f\in\cF$ and appends its output to the context.
We write $\ete{f}{t}(\vx)$ for the output token generated at the
$t$-th autoregressive step:
\begin{align*}
    \ete{f}{t}(\vx)&=f(\vx^{(t-1)}),&
    \vx^{(t)}&=(\vx^{(t-1)},\ete{f}{t}(\vx)).
\end{align*}
We denote the resulting chain of thought by
$\CoT{f}{T}(\vx)=(\ete{f}{1}(\vx),\ldots,\ete{f}{T}(\vx))\in\Sigma^T$,
consisting of the $T$ generated tokens.
Reusing the same predictor at every step makes the model \emph{time-invariant}.

% For $\vx\in\R^n$ and $\vw\in\R^d$, we interpret their inner product by
% right-aligning their coordinates:
% \begin{align*}
%     \langle \vw,\vx\rangle
%     := \sum\nolimits_{i=1}^{d\land n}\vw[-i]\vx[-i].
% \end{align*}
% Equivalently, this is the standard inner product obtained by padding the
% shorter vector with zeros on the left. This right-alignment convention is natural for autoregressive models, where the last $d$ values of the input are used to generate the next value.
% \subsection{Autoregressive \texorpdfstring{$f$}{f}-linear models.}
\paragraph{Single-neuron predictors.} Our base class consists of single neurons with a fixed activation
$\sigma:\R\to\Sigma$, where $\Sigma\subseteq\R$.
For $\vx\in\Sigma^n$ and $\vw\in\R^d$,
we use the right-aligned inner product
$\langle\vw,\vx\rangle:=\sum_{i=1}^{d\land n}\vw[-i]\vx[-i]$, equivalently
padding the shorter vector with zeros on the left. This convention aligns each
new CoT value with the shared parameter vector. For $b\in\R$, define the next-token predictor
$f_{\vw,b,\sigma}:\Sigma^*\to\Sigma$ by
$f_{\vw,b,\sigma}(\vx):=\sigma(\langle\vw,\vx\rangle+b)$.
The resulting dimension-$d$ base class is
\begin{align*}
    \cF_{d,\lin,\sigma}
    :=\left\{f_{\vw,b,\sigma}:\vw\in\R^d,\ b\in\R\right\}.
\end{align*}
Thus, $\cF_{d,\lin,\sigma}$ is a class of single neurons with
activation $\sigma$, or equivalently generalized linear models
with link function $\sigma$. 
When $b=0$, we write $f_{\vw,\sigma}:=f_{\vw,0,\sigma}$.
When $\sigma$ is clear from context, we omit it from the notation
and write $\cFlin$, $f_{\vw,b}$, and $f_{\vw}$.

\paragraph{Efficient Learning with CoT.} We study learning these AR predictors from complete
CoT supervision \citep{joshi2025theory}. Let $\cF_d$ be a base class of next-token predictors, where $d$ measures
predictor size. Given target generator $f_\star \in \cF_d$, each training example reveals
the input $\vx$ and the entire trajectory $\CoT{f_\star}{T}(\vx)$. 
The learner is evaluated on the final output $\ete{f_\star}{T}(\vx)$.\footnote{Our positive results
also hold under the stronger evaluation criterion that requires
the learner to match the entire CoT trajectory.}
Here, we define CoT learnability in the Boolean setting
$\Sigma=\{0,1\}$. Learning setting for real-valued  models are given in \Cref{sec:learning-relu-NN-with-cot}.

\begin{definition}[Realizable CoT learnability]\label{def:cot-learnability}
We say that $\ete{\cF_d}{T}$ is CoT-learnable over $\{0,1\}^n$ with generation
length $T$ and sample complexity $m(\varepsilon,\delta)$ if there is a
learning rule $A$ such that the following holds.  For every distribution
$\cD$ over $\{0,1\}^n$, target generator $f_\star\in\cF_d$, and
$\varepsilon,\delta\in(0,1)$, let
\begin{align*}
S_{\mathrm{CoT}}
    =\bigl((\vx_1,\CoT{f_\star}{T}(\vx_1)),\ldots,
           (\vx_m,\CoT{f_\star}{T}(\vx_m))\bigr),
\qquad
\vx_1,\ldots,\vx_m\stackrel{\mathrm{i.i.d.}}{\sim}\cD,
\end{align*}
where $m=m(\varepsilon,\delta)$.  Then, with probability at least
$1-\delta$ over the sample and the randomness of $A$, the predictor
$h=A(S_{\mathrm{CoT}}):\{0,1\}^n\to\{0,1\}$ satisfies
\begin{align*}
\Pr_{\vx\sim\cD}\left[h(\vx)\neq\ete{f_\star}{T}(\vx)\right]
    \leq\varepsilon.
\end{align*}
\end{definition}
\paragraph{The Computational Aspect.} We call $\ete{\cF_d}{T}$ efficiently
CoT-learnable if there exists a learning rule $A$ satisfying
\Cref{def:cot-learnability} whose sample complexity and running time are
polynomial in $n,d,T,1/\varepsilon$, and $\log(1/\delta)$.

\paragraph{$\sigma$-circuits.}
We next define the circuits that serve as simulation targets.
A size-$s$, depth-$L$ $\sigma$-circuit
$C:\Sigma^n\to\Sigma$ is a weighted directed acyclic graph
$G=(V,E)$ with input nodes
$V_{\text{in}}=\{v_{\text{in},1},\dots,v_{\text{in},n}\}$
at depth $0$ and non-input nodes $\{v_1,\dots,v_s\}$.
The non-input nodes are partitioned into layers
$V^{(1)},\dots,V^{(L-1)},\{v_s\}$, where $V^{(\ell)}$
contains the hidden nodes at depth $\ell$ and $v_s$ is the
output node at depth $L$.
Edges point from smaller to larger depth, with weights
$w:E\to\R$.

For input $\vx=(x_1,\dots,x_n)\in\Sigma^n$, each node
$v\in V$ has a value $\val(v)\in\Sigma$.
Input nodes have values $\val(v_{\text{in},k})=x_k$,
and each non-input node $v_i$ has a bias $\theta_i\in\R$
and computes
\begin{align*}
    \val(v_i)
    =
    \sigma\left(\theta_i+\sum_{(u,v_i)\in E}w(u,v_i)\,\val(u)\right).
\end{align*}
The output is $C(\vx)=\val(v_s)$, and we write
$C_i:\Sigma^n\to\Sigma$ for the function
$C_i(\vx):=\val(v_i)$ computed at node $v_i$.
The circuit's \emph{size} is its number of non-input nodes,
$|V\setminus V_{\text{in}}|=s$.
We call the circuit \emph{layered} if every edge connects
consecutive layers.

For threshold activation $\sigma(z)=\mathbf{1}\{z\geq 0\}$ and $\Sigma=\{0,1\}$, these are \emph{threshold circuits} \citep{arora2009computational}.
For real-valued activations, these circuits are feedforward neural
networks with activation $\sigma$, including MLPs and CNNs with
or without skip connections.
In this setting, we also consider inputs restricted
to $[0,1]^n$, while retaining node and output values in $\R$.

\section{Expressivity of Autoregressive Single Neuron}\label{sec:expressivity}

In this section, we present our main result: for a broad class of activations $\sigma$,
a single autoregressive neuron can simulate a $\sigma$-circuit
using the same activation. For size-$s$ circuits on $n$ inputs, our simulations use
dimension $d=O(ns+s^2)$ and generation length $T=O(s^2)$,
after applying an input feature map that depends only on $n$ and $s$ (see \Cref{remark:common-feature-map}).
We begin in \Cref{subsec:boolean} with the discrete token space $\Sigma=\{0,1\}$ and threshold activations, 
showing how to simulate boolean threshold circuits and Turing machines. 
In \Cref{subsec:real}, we consider general activations with real-valued computations, 
showing that a single neuron can simulate general feed-forward neural networks with the same activation. 
Finally, \Cref{subsec:proof-technique} outlines how Sidon sets enable efficient serialization of circuit computations.

\subsection{Autoregressive Linear Thresholds Simulate Boolean Threshold Circuits}\label{subsec:boolean}
We begin with the threshold activation $\sigma(z)=\mathbf{1}\{z\geq0\}$, which yields the autoregressive linear-threshold class studied in prior work \citep{joshi2025theory,joshi2026learning,altabaa2025cot,hanneke2026sample,doron2026theory}.
% We show that, up to a fixed polynomial-time computable feature map of the input, every size-$s$ Boolean threshold circuit can be represented by an autoregressive linear threshold of dimension $d=O(ns+s^2)$ and generation length $T=O(s^2)$. 
This is arguably the simplest autoregressive model over the binary token space $\Sigma=\{0,1\}$: taking the logits to be the linear scores $(\langle \vw,\vx\rangle+b)/2$ and $-(\langle \vw,\vx\rangle+b)/2$ for the two tokens $1$ and $0$, respectively, yields a logistic next-token distribution after applying Softmax. Then, greedy decoding is exactly the linear threshold rule $\mathbf{1}\{\langle \vw,\vx\rangle+b\geq 0\}$. Below is our main expressivity result, stated for the binary token space $\Sigma=\{0,1\}$:

\begin{theorem}\label{thm:boolean-LT-expressivity} Consider $\sigma(z)=\mathbf{1}\{z\geq 0\}$, and the class $\cF_{d,\lin,\sigma}$ over the token space $\Sigma=\{0,1\}$. For every input size $n$ and circuit size $s$, there is a map $\phi_{n,s}:\{0,1\}^n\to\{0,1\}^{n'}$ for $n'=O(ns+s^2)$, with the following property. Every size-$s$ Boolean threshold circuit $C:\{0,1\}^n\to\{0,1\}$ can be simulated by a single autoregressive linear threshold neuron $f_{\vw,\sigma}$ with $\vw\in\R^d$, where $d=O(ns+s^2)$, in $T=O(s^2)$ generation steps:
\[
\ete{f_{\vw,\sigma}}{T}(\phi_{n,s}(\vx))
=
C(\vx)
\qquad
\text{for every $\vx\in\{0,1\}^n$}.
\]
% Equivalently, for $d=O(ns+s^2)$ and $T=O(s^2)$, we have \( \mathrm{TC}_n(s)\subseteq \ete{\cF_{d,\lin,\sigma}}{T}\circ \phi_{n,s}. \) \roey{$\mathrm{TC}_n(s)$ is not defined.}

Moreover, $\phi_{n,s}$ depends only on $n$ and $s$ and is computable in time polynomial in $n$ and $s$.
\end{theorem}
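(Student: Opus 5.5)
The plan is to serialize the circuit along a chain of thought in which gate $v_i$ is emitted at a prescribed time $t_i$, and every other generated token is a filler $0$. Because the inner product is right-aligned, when token $t$ is generated the coordinate $\vw[-\delta]$ multiplies the token produced $\delta$ steps earlier. So the weight $w(v_j,v_i)$ should sit at offset $\delta=t_i-t_j$.

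The central difficulty is parameter sharing. The same coordinate $\vw[-\delta]$ is read at every step, so we need that at time $t_i$ no \emph{other} stored weight lands on a nonzero token. I would take $\{t_1<\dots<t_s\}$ to be a Sidon set, using the Erd\H{o}s--Tur\'an construction $t_i = 2pi + (i^2 \bmod p)$ with a prime $p\ge s$, so that $t_s=O(s^2)$.

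\textbf{Gate-to-gate weights.} Suppose $\vw[-(t_k-t_l)]\neq 0$ is read at time $t_i$ and hits a nonzero generated token. That token sits at some gate time $t_m$ with $t_i-t_m=t_k-t_l$. By the Sidon property, $(i,m)=(k,l)$, so it is exactly the intended term. All other positions hold filler zeros and contribute nothing. Hence the gate-to-gate part of the sum at time $t_i$ equals $\sum_j w(v_j,v_i)\val(v_j)$. This gives $T=t_s+O(1)=O(s^2)$ and uses $O(s^2)$ coordinates of $\vw$.

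\textbf{Inputs and clock.} The map $\phi_{n,s}$ handles two remaining issues: input edges and biases, and forcing the filler outputs to be $0$. I would let $\phi_{n,s}(\vx)$ be the concatenation of a clock string and $s$ copies of $\vx$. Both have fixed layouts depending only on $(n,s)$, and $\phi$ is clearly poly-time computable.

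The clock is a fixed $0/1$ string of length $O(s^2)$ together with a block of weights in $\vw$. Its inner product at step $t$ should be $\theta_i$ when $t=t_i$ and a large negative number $-M$ at every other step. The bound $M$ exceeds the total absolute weight of the circuit, which forces filler outputs to $0$. To build it, I would place a single $1$ at distance $D$ before generation begins. The token generated at step $t$ then reads that $1$ through $\vw[-(D+t)]$. I would set $\vw[-(D+t)]:=\theta_i$ if $t=t_i$ and $-M$ otherwise, for $t\le T$. This uses $O(s^2)$ coordinates, and the biases are absorbed with $b=0$.

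For inputs, copy $c$ of $\vx$ is intended for gate $c$. I would place it so that at time $t_c$ the coordinates $x_1,\dots,x_n$ of copy $c$ are read through a dedicated block of $n$ weights storing $w(v_{\text{in},k},v_c)$.

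\textbf{Main obstacle.} The step I expect to be hardest is the cross-talk between copies. At time $t_i$, copy $c\neq i$ must be read only through zero weights, and the input blocks must not overlap the gate or clock offsets, all within total length $O(ns+s^2)$. Naive spacing of the copies by $\Theta(s^2)$ gives length $\Theta(ns^2)$. My first attempt is to interleave the copies with stride $s$: put $x_k$ of copy $c$ at distance $sk+(s-c)+D'$ from the start of generation. I would then choose the Sidon times in a fixed residue pattern, $t_i\equiv i \pmod s$, which the construction can be adapted to by rescaling. In that case a collision between the reading for gate $i$ and copy $c$ forces $c\equiv i$ modulo $s$, hence $c=i$.

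If residual collisions remain, I would fall back to a generalized Sidon (``$B_2$ in two coordinates'') set over the pairs (gate, copy). I would use the fact that Erd\H{o}s--Tur\'an sets are built modulo $p$, so that the offsets $t_i+\mathrm{pos}(c,k)$ are pairwise distinct over all triples $(i,c,k)$.

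\textbf{Conclusion.} With the layout fixed, correctness follows by induction on $i$. At $t_i$ the preactivation equals exactly $\theta_i+\sum_{(u,v_i)\in E} w(u,v_i)\val(u)$, so the emitted token is $\val(v_i)$, and the last token $t_s=T$ is $C(\vx)$.
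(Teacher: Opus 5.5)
\textbf{There is a genuine gap: the input encoding.} Your gate-to-gate part (weights at offsets $t_i-t_j$ for a Sidon set of gate times) is correct, and so is your clock (a single $1$ read through $\vw[-(D+t)]$, set to $\theta_i$ or $-M$). They coincide with the paper's blocks $\vz$ and $\vv$. You rightly call the input encoding the crux, but you do not resolve it, and neither proposed fix meets the $O(ns+s^2)$ bound.

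\emph{First attempt.} The weight $w(v_{\mathrm{in},k},v_c)$ must sit where $x_k$ of copy $c$ is at time $t_c$, i.e.\ at offset $sk+(s-c)+D'+t_c$ (up to a fixed shift). Write $t_c=c+sq_c$; then this offset is $s(k+1+q_c)+D'$, so all $ns$ input weights land in a single residue class mod $s$. Two consequences follow:
\begin{itemize}
\item Gates $c\neq c'$ need the same coordinate of $\vw$ whenever $k+q_c=k'+q_{c'}$.
\item At time $t_i$, entry $x_{k'}$ of copy $i$ sits at $s(k'+1+q_i)+D'$. It is therefore read by the weight of gate $c\neq i$ and input $k'+q_i-q_c$.
\end{itemize}
Your residue argument only shows that the copy read at time $t_i$ is copy $i$. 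It does not show that copy $i$ is read only through gate $i$'s weights. When $T=O(s^2)$, all $s$ values $q_c$ lie in $[0,O(s)]$, so they cannot be pairwise $n$ apart once $n$ exceeds a constant. Avoiding both conflicts thus forces $T=\Omega(ns^2)$. Separately, enforcing $t_i\equiv i\pmod s$ by rescaling a Sidon set already costs a factor $s$ in $T$.

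\emph{Fallback.} Making $t_i+\mathrm{pos}(c,k)$ distinct over all $ns^2$ triples $(i,c,k)$ forces $n'+T\geq ns^2$. Hence $n'=\Omega(ns^2)$, which is not $O(ns+s^2)$.

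\textbf{The missing idea is that copies are unnecessary.} The context shifts by one position per step, so a single input entry faces a different coordinate of $\vw$ at each gate time. The shift itself routes the entry to gate-specific weights. The paper's construction is as follows.
\begin{itemize}
\item Cut one copy of $\vx$ into $\lceil n/s\rceil$ blocks of $s$ inputs.
\item Take a Sidon set $a_1<\dots<a_{2s}$ in $[1,O(s^2)]$ (the Erd\H{o}s--Tur\'an construction with parameter $2s$). Set input positions $t'_j=a_j$ and gate times $t_i=a_{s+i}$. Then the $s^2$ sums $t_i+t'_j$ are pairwise distinct, and $T'=O(s^2)$ can be chosen with $t'_j+T\leq T'$.
\item Give each block its own window of length $T'$ in both $\phi_{n,s}$ and $\vw$, and place $x_{r,j}$ at position $t'_j$ of window $r$.
\item At time $t_i$, entry $x_{r,j}$ faces offset $t_i+t'_j-1$ of the $r$-th weight window; store $w(x_{r,j},v_i)$ there.
\end{itemize}
This assignment is well defined because the sums are distinct. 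No entry ever leaves its window, since $t'_j+T\leq T'$. The total length is $O(s^2)+\lceil n/s\rceil T'=O(ns+s^2)$. You also need $\phi_{n,s}$ to end with $T$ zeros, so that the gate-to-gate weights see only generated tokens.
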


\begin{remark}[A common feature map]\label{remark:common-feature-map}
For any fixed input size $n$ and circuit-size bound $s$, the map
$\phi_{n,s}$ applies simultaneously to all threshold circuits of size at most
$s$; only the parameter vector $\vw$ depends on the circuit $C$.
Indeed, smaller circuits can be padded to size $s$ by inserting
dummy gates that output zero before the output gate, without changing
the computed function. Such feature maps also appear in prior simulation results for
specific autoregressive neurons \citep{joshi2025theory,joshi2026learning,doron2026stochastic}. 
\end{remark}

Combining \Cref{thm:boolean-LT-expressivity} with the standard simulation of polynomial-time Turing machines by polynomial-size threshold circuits \citep{pippenger1979relations,arora2009computational}, 
yields the following universality result:

\begin{corollary}[Turing machine simulation]\label{cor:linear-threshold-universal}
Let $\tau(n)=\poly(n)$ be a running-time bound.
There exists a family of $\poly(n)$-time computable feature maps
$\phi_n:\{0,1\}^n\to\{0,1\}^{n'}$, where
$n'=\tilde{O}(n\tau(n)+\tau(n)^2)$, such that the following holds.
For every deterministic Turing machine $M$ running in time at most
$\tau(n)$ and every sufficiently large $n$,\footnote{The required lower bound on $n$ may depend on $M$,
but the feature maps depend only on $n$ and $\tau$.
An extra logarithmic factor in the common circuit-size bound absorbs
machine-dependent simulation constants; see
\Cref{remark:common-feature-map} for the padding argument.}
there exists a linear threshold $f_{\vw}$ with $\vw\in\R^d$ such that
\begin{align*}
\ete{f_{\vw}}{T}(\phi(\vx))=M(\vx)
\qquad\text{for every $\vx\in\{0,1\}^n$},
\end{align*}
where $d=\tilde{O}(n\tau(n)+\tau(n)^2)$ and
$T=\tilde{O}(\tau(n)^2)$.
\end{corollary}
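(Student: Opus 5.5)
The plan is to derive the corollary by composing two reductions: Turing machine to threshold circuit, and then threshold circuit to an autoregressive neuron via \Cref{thm:boolean-LT-expressivity}. The only real work is making the feature map independent of $M$.

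\textbf{Step 1 (circuit simulation).} Invoke the classical simulation of \citet{pippenger1979relations} (see \citet{arora2009computational}). A deterministic machine $M$ running in time $\tau(n)$ is computed on $\{0,1\}^n$ by a Boolean circuit of size $O(\tau(n)\log\tau(n))$ over AND/OR/NOT gates with fan-in at most two. The oblivious-simulation constant depends on $M$ (alphabet, number of states and tapes), so the size is at most $c_M\,\tau(n)\log\tau(n)$. Each AND, OR and NOT gate is a single threshold gate: for example, $\mathrm{AND}(a,b)=\mathbf{1}\{a+b-2\ge 0\}$ and $\mathrm{NOT}(a)=\mathbf{1}\{-a\ge 0\}$. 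Constants can be produced by a gate with no inputs and bias $0$ or $-1$. This gives a threshold circuit $C_M$ of size at most $c_M\tau(n)\log\tau(n)$ with $C_M(\vx)=M(\vx)$.

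\textbf{Step 2 (machine-independent size bound).} Set $s(n):=\tau(n)\log^2\tau(n)$, or any bound of the form $\tau(n)\cdot\omega(\log\tau(n))$ that is still $\tilde O(\tau(n))$. For each fixed $M$ there is an $n_M$ such that $c_M\tau(n)\log\tau(n)\le s(n)$ for all $n\ge n_M$. This is precisely where the ``sufficiently large $n$'' clause is needed. By \Cref{remark:common-feature-map}, pad $C_M$ to size exactly $s(n)$ with dummy gates that output zero, without changing the function. The point to check is that the padding keeps the output gate last; the dummy gates get no outgoing edges and are inserted before the output gate, so this holds.

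\textbf{Step 3 (apply the theorem).} Define $\phi_n:=\phi_{n,s(n)}$ from \Cref{thm:boolean-LT-expressivity}. It depends only on $n$ and $\tau$, and it is computable in time $\poly(n,s(n))=\poly(n)$ because $\tau$ is polynomial. The theorem then gives $\vw$ with $\ete{f_{\vw}}{T}(\phi_n(\vx))=C_M(\vx)=M(\vx)$. The resulting bounds are
\[
d,\ n'=O\bigl(ns(n)+s(n)^2\bigr)=\tilde O\bigl(n\tau(n)+\tau(n)^2\bigr),
\qquad
T=O\bigl(s(n)^2\bigr)=\tilde O\bigl(\tau(n)^2\bigr).
\]
Here $n$ denotes the input length, and for a Turing machine with output one bit, $\tau(n)\ge 1$.

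The main obstacle is only the bookkeeping in Step 2: the feature map must be uniform over all machines with running-time bound $\tau$, while the circuit constants depend on $M$. The extra logarithmic factor in $s(n)$ absorbs those constants at the cost of $M$-dependent thresholds $n_M$. Everything else is a direct composition of known results.
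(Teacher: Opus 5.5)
Your proposal is correct and follows the paper's proof essentially step for step: Pippenger--Fischer gives an $O(\tau(n)\log\tau(n))$-size Boolean circuit, each gate is converted to a threshold gate, the circuit is padded to a machine-independent size bound via \Cref{remark:common-feature-map}, and \Cref{thm:boolean-LT-expressivity} is applied with $\phi_n=\phi_{n,s(n)}$. The only difference is the slack factor: the paper takes $s_n=\lceil\tau(n)\log\tau(n)\log n\rceil$ and absorbs $c_M$ into $\log n$, whereas your $\log\tau(n)$ slack needs $\tau(n)\to\infty$ (automatic under the usual convention $\tau(n)\geq n$), and $s(n)$ should be rounded up to an integer.
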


Moreover, \Cref{thm:boolean-LT-expressivity} replaces the bounds
$d,T=O(s^Ln)$ of \citet[Lemma~4.5]{joshi2025theory}, where $L$ is the circuit depth, with
$d=O(ns+s^2)$ and $T=O(s^2)$.
This removes the exponential dependence on circuit depth,
enabling efficient simulation of polynomial-time Turing machines.
% In this setting, \Cref{thm:boolean-LT-expressivity} also strengthens the
% simulation result of \citet{joshi2025theory}. Their construction
% (Lemma~4.5 therein) simulates a threshold circuit of size $s$ and depth $L$
% over $\{0,1\}^n$ using an autoregressive linear-threshold model whose dimension $d$
% and number of steps $T$ are both $O(s^Ln)$. The depth of the circuit simulating a
% Turing machine scales linearly with its running time, so these bounds incur an
% exponential overhead in the running time. In sharp contrast, our construction has
% dimension $d=O(ns+s^2)$ and uses $T=O(s^2)$ autoregressive steps, both completely
% independent of the circuit depth. This independence removes the exponential
% dependence on the running time and yields only polynomial overhead. \roey{we can remove this part if we want}

\subsection{Neural Networks can be simulated by a Single Autoregressive Neuron}\label{subsec:real}

More broadly, our expressivity result is not specific to threshold activations: it applies to a broad class of standard nonlinear activations. 
Here we take $\Sigma=\R$. In this case, $\sigma$-circuits correspond to standard feedforward neural networks.
%, which are widely used in practice and have been studied extensively in theoretical work on expressivity and learning.  \nirmit{Recall what autoregressive setup for $\sigma$ would mean and that $\Sigma=\R$. Just a line. You can add a line or two on why this setup is interesting and natural.}  
To state the results formally, we first specify the assumptions on the activation $\sigma$.
\begin{assumption}\label{ass:activation}
There exists $c_\sigma\in\R$ such that $\sigma(z)=0$ for every $z\leq c_\sigma$ (the zero-region condition). Moreover, $\sigma$ is bounded on every bounded interval.
\end{assumption}

The zero-region condition is satisfied by activations such as threshold and ReLU. Many other common activations, including sigmoid, GELU, and SiLU, do not become exactly zero below any finite threshold, but instead converge to zero as their input tends to $-\infty$. For such activations, we obtain an approximate version of our expressivity result. In particular, the zero-region condition in \Cref{ass:activation} can be replaced by a decay to zero at $-\infty$, together with a mild regularity condition.

\begin{assumption}\label{ass:activation-approximate}
The activation $\sigma:\R\to\R$ is locally Lipschitz (\Cref{def:locally-lipschitz}) and satisfies $\lim_{z\to-\infty}\sigma(z)=0$.
\end{assumption}

Below, we state our exact and approximate expressivity result, under \Cref{ass:activation,ass:activation-approximate} respectively, for simulating general feed-forward neural networks with the same activation and over a bounded domain.

\begin{theorem}\label{thm:LT-expressivity-tight}
For every input dimension $n$ and circuit size $s$, there exist
a feature map $\phi_{n,s}:[0,1]^n\to[0,1]^{n'}$ and
integers $d,T$, with $n',d=O(ns+s^2)$ and $T=O(s^2)$,
such that the following holds for every size-$s$
$\sigma$-circuit $C:[0,1]^n\to\R$:
\begin{enumerate}
    \item \textbf{Exact simulation.}
    If $\sigma$ satisfies \Cref{ass:activation}, there
    exists $\vw\in\R^d$ such that
    \begin{align*}
    \ete{f_{\vw,\sigma}}{T}(\phi_{n,s}(\vx))=C(\vx)
    \qquad\text{for every $\vx\in[0,1]^n$}.
    \end{align*}

    \item \textbf{Approximate simulation.}
    If $\sigma$ satisfies
    \Cref{ass:activation-approximate}, then for every
    $\varepsilon>0$, there exists $\vw\in\R^d$ such that
    \begin{align*}
    \left|
    \ete{f_{\vw,\sigma}}{T}(\phi_{n,s}(\vx))-C(\vx)
    \right|\leq\varepsilon
    \qquad\text{for every $\vx\in[0,1]^n$}.
    \end{align*}
\end{enumerate}
Moreover, $\phi_{n,s}$ depends only on $n$ and $s$ and is
computable in time polynomial in $n$ and $s$.\footnote{Here,
we use the real-RAM model, in which reading, writing, and
copying a real number take unit time.}
\end{theorem}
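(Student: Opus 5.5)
The plan is to serialize the circuit into the chain of thought and let one shared weight vector serve as a lookup table indexed by relative position. We first pad and topologically order the gates $v_1,\dots,v_s$ (\Cref{remark:common-feature-map}). Using the Erd\H{o}s--Tur\'an construction, we pick a Sidon set $\{a_1<\dots<a_s\}\subseteq[O(s^2)]$, i.e.\ all differences $a_i-a_j$ with $i\neq j$ are distinct. Gate $v_i$ is emitted at CoT step $p_i:=a_i$. Every other step is a \emph{filler} step that must output exactly $0$. This gives $T=O(s^2)$.

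With the right-aligned inner product, a token written at position $q$ is multiplied at step $t$ by the coordinate $\vw[-(t-q)]$, so only relative offsets matter. For a gate-to-gate edge $(v_j,v_i)$ we set $\vw[-(p_i-p_j)]:=w(v_j,v_i)$. The Sidon property ensures that different ordered pairs $(i,j)$ use different coordinates, so parameter sharing never forces two circuit weights into the same slot. All coordinates not assigned this way are set to $0$.

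Biases and the silencing of filler steps are handled by a constant-$1$ token that $\phi_{n,s}$ writes at a position $-D$ far to the left. Here $D$ exceeds every Sidon difference, so the offsets $t+D$ are pairwise distinct and disjoint from the gate-gate offsets. We set $\vw[-(p_i+D)]:=\theta_i$ at gate steps and $\vw[-(t+D)]:=-M$ at filler steps.

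Since inputs lie in $[0,1]^n$ and $\sigma$ is bounded on bounded intervals (\Cref{ass:activation}), induction over the topological order bounds every node value by some $B=B(C)$. At a filler step the stray contributions, coming from offsets that happen to coincide with an assigned difference, therefore total at most $O(s^2B\max|w|)$. Choosing $M$ larger than this plus $|c_\sigma|$ drives the pre-activation below $c_\sigma$, so the filler output is exactly $0$. Filler zeros contribute nothing at later steps. At step $p_i$ the pre-activation is then exactly $\theta_i+\sum_{(u,v_i)\in E}w(u,v_i)\val(u)$, and an induction on $i$ shows that the token at $p_i$ equals $\val(v_i)$. The final step is $p_s$. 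Threshold activation is a special case of this argument, which also gives \Cref{thm:boolean-LT-expressivity}.

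The main obstacle is the input edges while keeping $n',d=O(ns+s^2)$. Placing the $n$ inputs contiguously is not enough, because the offsets $p_i-q_k$ for different gates may collide whenever $a_i-a_{i'}$ is smaller than $n$. My first attempt is to let $\phi_{n,s}$ write $s$ copies of $\vx$ (hence $n'=O(ns)$), with copy $i$ at an offset chosen so that gate $i$ reads it through a fresh block of $n$ coordinates that no other gate's assigned slots touch. I would choose the block offsets greedily or by a second Sidon-type spacing at scale $n$. The non-matching copies may still hit assigned coordinates at other gate steps, which would corrupt those gates. To prevent this, I would try to ensure that every cross-term (copy $i'$ read at step $p_i$ with $i'\neq i$, or any copy read at a filler step) lands on an unassigned coordinate, which is zero. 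If that cannot be arranged exactly, I would try to control those cross-terms in another way, checking that the total window length stays $O(ns+s^2)$.

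For the approximate part, under \Cref{ass:activation-approximate}, the same weights are used with $M$ large enough that $|\sigma(z)|\le\eta$ for all $z\le -M'$. Each filler step then outputs some value in $[-\eta,\eta]$ instead of $0$. These errors feed into later pre-activations with weights bounded by $\max|w|$. Local Lipschitzness of $\sigma$ on the bounded range of pre-activations propagates the error through $s$ gates, amplified by a factor depending only on $C$. Taking $\eta$ small enough then yields final error at most $\varepsilon$. The map $\phi_{n,s}$, which copies inputs and inserts the constant token, is clearly polynomial-time computable.
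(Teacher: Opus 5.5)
Your treatment of gate-to-gate edges matches the paper: Golomb/Sidon differences, a drifting constant-$1$ token that supplies $\theta_i$ at designated steps and a large negative value at filler steps, and an induction showing that filler tokens are exactly $0$. Your Lipschitz error propagation for the approximate part also matches. The genuine gap is the input edges, which you leave unresolved. The concrete route you propose, $s$ contiguous copies of $\vx$ with suitably chosen offsets, cannot work, because the harmful collision is between a copy and itself. Suppose copy $c$ holds $\vx[k]$ at offset $o_c+k$. At step $p_c$, copy $c$ reads the coordinates $\vw[-(p_c+o_c+k)]$, which store $w(v_{\mathrm{in},k},v_c)$. At another gate step $p_i$, the same copy reads the same window shifted by $p_i-p_c$, so $\vx[k]$ meets the coordinate storing $w(v_{\mathrm{in},k+p_i-p_c},v_c)$. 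Whenever $0<|p_i-p_c|<n$, this adds a spurious term to the preactivation of $v_i$, whatever $o_c$ is. Avoiding it forces $|p_i-p_c|\geq n$ for all $i\neq c$, hence $T\geq (s-1)n$. That is incompatible with $T=O(s^2)$ once $n\gg s$; indeed, for $n>T$ every pair collides. Greedy or Sidon-type choices of the block offsets cannot help, and you give no alternative way to cancel these cross-terms.

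The missing idea is to avoid contiguous inputs altogether and place the input coordinates at Sidon positions as well. The paper takes a Sidon set $\{a_1<\dots<a_{2s}\}$ of size $2s$ with $a_{2s}=O(s^2)$. It sets the input positions to $t'_j=a_j$ and the gate steps to $t_i=a_{s+i}$. At step $t_i$, input $j$ is read through the coordinate at offset $t_i+t'_j-1$. Distinctness of the sums $t_i+t'_j$ gives every (input, gate) pair its own coordinate, with no cross-talk at designated steps. Distinctness of the differences $t_i-t_j$ handles the gate-to-gate edges, as in your plan. A window of length $T'=O(s^2)$ accommodates only $s$ inputs this way. The paper therefore splits $\vx$ into $\lceil n/s\rceil$ blocks of $s$ coordinates, each placed in its own window of length $T'\geq\max_{i,j}(t_i+t'_j)$ with a dedicated weight block. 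During the first $T$ steps, no block's nonzero entries drift into a neighbouring weight block. Each input then appears only once, and $n'=d=2T+\lceil n/s\rceil T'=O(ns+s^2)$.
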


Our exact simulations also preserve the circuit's intermediate
gate values in the generated trajectory, as formalized by
the following property.

\begin{property}[CoT preservation]
\label{property:cot-preservation}
The exact simulation in \Cref{thm:LT-expressivity-tight}
can be chosen so that there exist designated steps
$1\leq t_1<\cdots<t_s=T$, depending only on $n,s$, such that
\begin{align*}
\ete{f_{\vw,\sigma}}{t}(\phi_{n,s}(\vx))
=
\begin{cases}
    C_i(\vx), & \text{if } t=t_i \text{ for some } i\in[s],\\
    0, & \text{otherwise},
\end{cases}
\qquad
\text{for every $\vx\in[0,1]^n$ and $t\in[T]$}.
\end{align*}
The designated steps are computable in time polynomial
in $n,s$. The same property holds for the Boolean simulation
in \Cref{thm:boolean-LT-expressivity}, with inputs restricted
to $\{0,1\}^n$.
\end{property}

\subsection{Proof technique: serialization via Sidon sets}\label{subsec:proof-technique}
The main technical contribution is a new
%way of viewing the simulation: 
simulation approach:
we
serialize the circuit computation into an autoregressive sequence and use
an additive-combinatorial structure to prevent different circuit weights from
colliding within a single shared parameter vector. 

Each non-input circuit node $v_i$, for $i\in[s]$, is assigned a generation
step $t_i\in [T]$, at which the value $C_i(\vx)$ is produced, i.e. $ \ete{f_{\vw,\sigma}}{t_i}(\phi_{n,s}(\vx))=C_i(\vx)$. At any later step
$t_j$, where $j>i$, this value is the $(t_j-t_i)$-th most recent value in the
autoregressive context and is therefore aligned with the coordinate
$\vw[-(t_j-t_i)]$, where $\vw$ is the shared parameter vector. To simulate a circuit edge $(v_i,v_j)\in E$ with weight $w(v_i,v_j)$, we therefore set
$\vw[-(t_j-t_i)] = w(v_i,v_j)$, so that $C_i(\vx)$ contributes
$w(v_i,v_j)C_i(\vx)$ to the preactivation at step $t_j$, matching the corresponding term in the definition of a $\sigma$-circuit.

The difficulty is that each coordinate of $\vw$ is reused at every generation
step. Consequently, if $t_j-t_i=t_\ell-t_k$, for two distinct pairs $(i,j)$ and $(k,\ell)$, then two potentially different
circuit weights would have to be encoded by the same coordinate of $\vw$. Similarly, aligning input-to-gate weights requires distinct sums
$t_i+t'_q$, where $t'_q$ denotes the initial position assigned to the
corresponding input value, counted from the right. Sidon sets provide the required collision-free assignment.
Formally, a set
$S\subseteq\mathbb{N}$ is a \emph{Sidon set} if
\[
    a+b=c+d
    \quad\Longrightarrow\quad
    \{a,b\}=\{c,d\}
    \qquad\text{for all }a,b,c,d\in S.
\]
Equivalently, its positive pairwise \emph{differences} are also distinct (\Cref{obs:sidon.set.eq.golomb.ruler}). Using the classical construction of \citet{erdos1941problem}, we obtain such a
set of size $s$ whose largest element is $O(s^2)$. The precise formulation we use is stated and proved in \Cref{claim:explicit_sidon_set}. 
Then we use this set to choose the generation steps $t_1,\dots,t_s$ and to construct the feature map $\phi_{n,s}$.
The formal proof completes this picture by verifying that the values generated at undesignated time
steps do not interfere with the simulation. The main novelty lies in viewing the proof through this collision-free
serialization framework.

\section{Implications for Learning}\label{sec:cot-learning}
\paragraph{Efficient Learning with CoT.} We have shown that a single AR neuron can efficiently simulate
general computations such as Turing machines and feedforward neural networks.
The $\sCoT$ in our simulations is the neuron's complete autoregressive
trajectory: the intermediate tokens perform the simulated computation (by \Cref{property:cot-preservation}), and the final token is the output of the
simulated machine or network. Suppose that these complete trajectories are provided during training.
For a fixed activation $\sigma$, each trajectory generated by a
target neuron $f_\star\in\cF_{d,\lin,\sigma}$ provides labeled
prefix--next-token examples for the same shared predictor.
Consequently, it suffices to find a neuron in
$\cF_{d,\lin,\sigma}$ that predicts the labeled next token for
each prefix exactly or approximately.
For some activation functions, such a neuron can be found
efficiently. Combined with an appropriate generalization bound,
this computational guarantee yields efficient learnability from CoT supervision (\Cref{def:cot-learnability}). \newline

% Together with the efficient feature map, it allows us
% to convert observed circuit activations into complete
% autoregressive trajectories without knowing the circuit
% weights. Suppose that these complete trajectories are provided during training. Each trajectory then supplies labeled
% prefix--next-token examples for the same shared neuron.
% Consequently, it suffices to find a neuron in
% $\cF_{d,\lin,\sigma}$ that predicts the labeled next token for
% each prefix exactly or approximately.
% For some activation functions, such a neuron can be found
% efficiently. Combined with an appropriate generalization bound,
% this computational guarantee yields efficient learnability.

% \paragraph{Sample Complexity \& Learning.}
% \citet{joshi2025theory} distinguish two ways of learning an
% autoregressive model. Under CoT supervision, each training example contains
% the complete trajectory $\CoT{f}{T}(\vx)$, whereas under end-to-end (E2E)
% supervision it contains only the final token $\ete{f}{T}(\vx)$. In both
% settings, the learner is evaluated on its prediction of the final output.

For the threshold activation, linear programming can efficiently
find parameters consistent with all observed prefix--next-token
pairs. Combining \Cref{cor:linear-threshold-universal} with known generalization bounds for autoregressive  linear thresholds (e.g.\Cref{thm:LT-learnable-with-COT}) yields the following:

\begin{corollary}\label{cor:lt-are-universal-learner}
Let $M$ be a polynomial-time Turing machine, and let $\cD$ be
any distribution over $\{0,1\}^n$. Given a sufficiently large polynomial number of complete CoT
trajectories of length $T=\poly(n)$ encoding the computations of $M$ on i.i.d.\ inputs drawn from
$\cD$, one can efficiently train a single linear-threshold
neuron $f_{\vw}$ to predict $M(\vx)$ with high probability over a fresh $\vx \sim \cD$. 
% $\ete{f_{\vw}}{T}(\vx)\simeq M(\vx)$\footnote{Formally,
% $\ete{f_{\vw}}{T}(\psi(\vx))=M(\vx)$, where $\psi$ is from
% \Cref{cor:linear-threshold-universal} and is the same for all Turing machines
% with the same running-time bound as $M$.}
%\gal{Is this notation introduced?}
% with high probability over a fresh $\vx \sim \cD$.
\end{corollary}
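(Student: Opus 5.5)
The plan is to reduce to consistent learning of autoregressive linear thresholds. Fix the running-time bound $\tau(n)=\poly(n)$ of $M$, and let $\phi_n$, $d=\tilde O(n\tau(n)+\tau(n)^2)$ and $T=\tilde O(\tau(n)^2)$ be as in \Cref{cor:linear-threshold-universal}. That corollary, strengthened by \Cref{property:cot-preservation}, provides a target neuron $f_\star=f_{\vw_\star}\in\cFlin$ with $\ete{f_\star}{T}(\phi_n(\vx))=M(\vx)$. Its trajectory $\CoT{f_\star}{T}(\phi_n(\vx))$ consists of the gate values of the threshold circuit for $M$ at the designated steps and zeros elsewhere. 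We interpret ``complete CoT trajectories encoding the computations of $M$'' as exactly these trajectories, which are computable from the circuit's activations without knowing $\vw_\star$. Because $\phi_n$ is a fixed polynomial-time map, a distribution $\cD$ on $\{0,1\}^n$ pushes forward to a distribution $\phi_n(\cD)$ on $\{0,1\}^{n'}$. The problem then becomes realizable CoT learning of $\ete{\cF_{d,\lin}}{T}$ under $\phi_n(\cD)$, with target $f_\star$ and $d,T,n'=\poly(n)$.

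\textbf{Learning rule.} Each trajectory yields $T$ prefix--next-token pairs $(\vx^{(t-1)},\ete{f_\star}{t})$. We collect all $mT$ pairs and solve a linear program for $(\vw,b)$ that satisfies $\langle\vw,\vx^{(t-1)}\rangle+b\ge 0$ when the label is $1$ and $\langle\vw,\vx^{(t-1)}\rangle+b\le -1$ when the label is $0$. Since inputs are Boolean, the target's strict inequalities can be rescaled to this margin, and the bias can be absorbed as well. Realizability by $f_\star$ makes the LP feasible, and it has $d+1$ variables and $mT$ constraints, so it runs in polynomial time. Let $\hat f$ be the returned neuron and output $h(\vx)=\ete{\hat f}{T}(\phi_n(\vx))$. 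On every training input, $\hat f$ reproduces each step given the true prefix, so by induction it reproduces the entire trajectory.

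\textbf{Generalization.} This is the step I expect to be the crux. We invoke the known generalization bound for consistent AR linear thresholds, \Cref{thm:LT-learnable-with-COT}, taken from \citet{joshi2025theory} and related works. It states that any $\hat f\in\cF_{d,\lin}$ consistent with the complete trajectories of $m$ samples has final-output error at most $\varepsilon$ with probability $1-\delta$ once $m=\poly(d,\log T,1/\varepsilon,\log(1/\delta))$. The underlying reason is that the class of maps $\vx\mapsto\CoT{f}{T}(\vx)$ has a growth function controlled by $\VCdim$ of linear thresholds, about $d+1$, times a $\log T$ factor. The one care point is that the bound must hold for an arbitrary distribution, here $\phi_n(\cD)$, and for a target lying in the class, which holds by construction. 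Substituting $d,T=\poly(n)$ gives polynomial sample complexity and runtime, and $\Pr_{\vx\sim\cD}[h(\vx)\ne M(\vx)]\le\varepsilon$ follows because $\ete{f_\star}{T}(\phi_n(\vx))=M(\vx)$.
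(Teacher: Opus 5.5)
Your proposal is correct and follows essentially the same route as the paper: fix the common feature map from \Cref{cor:linear-threshold-universal}, use \Cref{property:cot-preservation} to identify the supervised trajectories with those of the simulating threshold neuron, and apply the efficient CoT learnability of autoregressive linear thresholds (\Cref{thm:LT-learnable-with-COT}), i.e., the \Cons{} rule implemented by linear programming together with the sample bound $O((v\log 2T\log(2/\varepsilon)+\log(1/\delta))/\varepsilon)$ of \citet{joshi2025theory} with $v=d+1$. One small quibble: rescaling the target to a unit margin in your LP only needs the constraint set to be finite, not the inputs to be Boolean.
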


Similarly, a single AR ReLU neuron can simulate an entire, potentially large and deep,
feedforward ReLU network (\Cref{thm:LT-expressivity-tight}). With complete activation supervision, fitting this shared neuron
reduces to linear programming as well.\footnote{For a fixed prefix $\vz$
and target activation $y$, consistency here requires
$\langle\vw,\vz\rangle+b=y$ when $y>0$ and
$\langle\vw,\vz\rangle+b\leq0$ when $y=0$.} 
Combined with generalization bounds for ReLU Neural Networks \citep{bartlett2019nearly} yields the following:

\begin{corollary}[Informal version of \Cref{cor:relu-cot-learning,remark:regression}]\label{cor:relu-cot-learning-informal}
For any distribution over $[0,1]^n$, complete activation
supervision from a size-$s$ ReLU network $C$ suffices to train
a single autoregressive ReLU neuron to predict $\ind[C(\vx)>0]$ (or approximate $C(\vx)$ under a suitable bounded regression loss) with high probability on a fresh input.
The neuron's dimension, generation length, sample complexity,
and training time are polynomial in the relevant parameters.
\end{corollary}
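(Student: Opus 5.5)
The plan is to reduce \Cref{cor:relu-cot-learning-informal} to a consistent-learning problem for the base class, in three steps.

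\textbf{Step 1 (realizability).} Apply \Cref{thm:LT-expressivity-tight}, exact case, with $\sigma=\relu$. ReLU satisfies \Cref{ass:activation} with $c_\sigma=0$. This gives the common feature map $\phi_{n,s}$, the dimension $d=O(ns+s^2)$, the length $T=O(s^2)$, and a target $\vw_\star$. By \Cref{property:cot-preservation}, the trajectory $\CoT{f_{\vw_\star}}{T}(\phi_{n,s}(\vx))$ is determined by the activations $C_1(\vx),\dots,C_s(\vx)$ alone. It equals $C_i(\vx)$ at the designated steps $t_i$ and $0$ elsewhere, and the $t_i$ depend only on $n,s$. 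Consequently, complete activation supervision can be converted into complete CoT trajectories of a realizable target in $\cF_{d,\lin,\relu}$ without knowing the weights of $C$.

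\textbf{Step 2 (consistent learner).} Each sample yields $T$ prefix--next-token pairs $(\vz,y)$. We need $(\vw,b)$ with $\langle\vw,\vz\rangle+b=y$ whenever $y>0$ and $\langle\vw,\vz\rangle+b\le 0$ whenever $y=0$. This is a linear feasibility problem with $d+1$ variables and $mT$ constraints, and it is feasible because $(\vw_\star,0)$ satisfies it. Solve it by linear programming in polynomial time. Any feasible $(\vw,b)$ then reproduces every training trajectory exactly, by induction on the step: each prefix agrees with the true one, so each prediction is correct.

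\textbf{Step 3 (generalization).} View the hypothesis $\vx\mapsto \ind[\ete{f_{\vw,b}}{T}(\phi_{n,s}(\vx))>0]$ as a classifier. Unrolled, the AR ReLU neuron is a ReLU network with $O(d)$ shared parameters, $O(T)$ units and depth $T$. The bound of \citet{bartlett2019nearly} gives VC dimension $\tilde O(dT)$ up to the parameter-count form, which is $\poly(n,s)$. More robustly, I would bound the growth function of the pattern of sign vectors across all $T$ steps jointly, using Warren/Goldberg--Jerrum style counting: on each fixed activation pattern the outputs are piecewise polynomial in $(\vw,b)$.

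Realizable PAC with a consistent learner then gives sample complexity $\tilde O((\VCdim+\log(1/\delta))/\varepsilon)$. For the regression version, I would clip outputs to a bounded range. I would then use pseudodimension bounds from the same paper with uniform convergence for bounded Lipschitz losses, noting that the LP learner attains zero empirical loss.

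\textbf{Main obstacle.} The hard part is Step 3: getting a polynomial VC/pseudodimension bound for the time-invariant unrolled model. Shared weights across $T$ steps make the depth $T$, and the $L$-dependent bound of Bartlett et al.\ gives $O(WL\log W)$ with $W=d$ and $L=T$, which is still polynomial. So I would apply that bound directly, treating each generation step as a layer with tied weights, since tying only decreases capacity. As a fallback, I would count sign patterns of the $T\cdot m$ preactivations as polynomials of degree at most $T$ in $d+1$ variables.
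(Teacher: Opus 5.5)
Your proposal follows the paper's proof essentially step for step: CoT preservation turns the observed activations into realizable trajectories; a consistent neuron is found by the LP feasibility problem, which is feasible at $(\vw_\star,0)$ and reproduces the training trajectories by induction; capacity is bounded by unrolling into a feedforward ReLU network and invoking \citet{bartlett2019nearly}; and the regression variant goes through pseudodimension. One correction: ``tying only decreases capacity'' justifies applying $O(WL\log W)$ to the untied superclass with $W=O((d+1)T)$ and $L=O(T)$, not with $W=d$; that is exactly what the paper does, giving $O((d+1)T^2\log(2(d+1)T))$, and the sharper $\tilde O(dT)$ instead requires your fallback Warren-type count over the $d+1$ shared parameters (refining the parameter partition one step at a time), though either bound is polynomial and suffices.
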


% Gradient-based methods offer an alternative approach to
% fitting the supervised trajectories.
% For threshold predictors, SGD on a suitable convex surrogate
% loss admits margin-dependent learning guarantees
% \citep{shalev2014understanding}.
% For ReLU predictors, complete activation supervision makes
% the loss (see \Cref{eq:loss}) convex and smooth, allowing
% approximate minimization by gradient descent.

In both cases, gradient-based methods offer an alternative to linear programming for fitting the observed prefix--next-token pairs. Penalizing squared violations of the linear feasibility constraints yields a convex, smooth loss whose minimum value is zero (under realizability); for ReLU, this gives the per-example loss in \Cref{eq:loss} used
in our experiments. Gradient descent with a suitable step size can therefore make this
loss arbitrarily small.

% \paragraph{Consistency under complete CoT supervision.}
% The \sCoT-preserving simulation also yields a simple algorithmic observation:
% fitting the shared autoregressive ReLU neuron to complete CoT trajectories is
% a linear-program feasibility problem. Indeed, each observed
% prefix is fixed: a positive next-token value requires the corresponding
% preactivation to equal that value, whereas a zero next-token value requires
% the preactivation to be nonpositive. The \sCoT-preserving property therefore
% turns the hidden activations of the simulated feedforward network into labeled
% next-token examples for the same shared neuron. This observation concerns
% complete CoT supervision and does not imply tractable learning from E2E
% examples alone. \roey{Need to emphasize that COT from NN are not observed in real life}

% This parameter dependence also highlights the near-optimality of our
% simulation. The VC dimension, and hence the sample complexity, of threshold
% circuits scales with their number of trainable parameters (number of edges) up to
% polylogarithmic factors \citep{AnBa02}. A size-$s$ circuit with $n$ input nodes have at most $\Theta(ns+s^2)$ edges (weights), so the number of circuit parameters is
% $\Theta(ns+s^2)$, while our autoregressive simulation uses
% $d=O(ns+s^2)$. Thus, our simulation is
% essentially tight in terms of sample complexity, up to polylogarithmic
% factors.

\paragraph{Hardness of E2E Learning.}
We return to linear-threshold models with $\Sigma=\{0,1\}$ and
consider learning without access to the intermediate trajectory.
Under End-to-End (E2E) supervision, a training example reveals
only the input $\vx$ and the final output
$\ete{f_\star}{T}(\vx)$ of a target generator $f_\star\in\cFlin$.
The learner is evaluated on the same final output, but,
unlike under CoT supervision, all intermediate values remain latent during training.
Formal definitions are given in \Cref{sec:learning-with-cot}.

\citet{joshi2025theory} showed that E2E learning of autoregressive linear thresholds is
computationally hard under a hardness assumption on learning constant-depth
threshold circuits.  Their reduction simulates a size-$s$, depth-$L$ threshold
circuit using dimension and generation length of order $s^L$.  Thus, the
reduction has polynomial overhead only for constant $L$, which limits the
computational assumptions from which it can derive hardness.

Our simulation removes this limitation.  Every polynomial-time computation
is simulated with polynomial dimension and generation length. Combining this simulation with the existence of one-way
functions yields E2E hardness under one of the most common cryptographic
assumptions.  

\begin{corollary}[Informal version of \Cref{cor:e2e_hardness_from_owf}]\label{cor:e2e-hardness-informal}
If one-way functions exist, then no polynomial-time algorithm learns 
autoregressive linear-threshold with E2E supervision.  
\end{corollary}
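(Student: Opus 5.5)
The approach is a reduction: I will reduce proper PAC-type learning of a cryptographically hard function class to E2E learning of autoregressive linear thresholds, with the simulation of \Cref{thm:boolean-LT-expressivity} as the bridge. The starting point is the classical fact that one-way functions yield pseudorandom function families (via HILL and GGM) computable by polynomial-size Boolean circuits. For such a family $\{F_k\}$, no polynomial-time algorithm can learn the class $\{F_k\}$ under the uniform distribution to accuracy noticeably better than $1/2$ from random labeled examples. The reason is that such a learner would yield a distinguisher: given oracle access, one samples labeled examples and then tests the hypothesis on a fresh query. Against a truly random function, no hypothesis can beat $1/2$ on a fresh point, while against $F_k$ the learner does.

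The reduction then proceeds in steps.
\begin{enumerate}
\item Convert each $F_k$, with key hardwired, into a Boolean threshold circuit $C_k$ of size $s=\poly(n)$. Since AND, OR and NOT are threshold gates, and padding by \Cref{remark:common-feature-map} makes $s$ uniform over all keys, this step is routine.
\item Apply \Cref{thm:boolean-LT-expressivity}. This gives a single feature map $\phi=\phi_{n,s}$, independent of $k$, together with weights $\vw_k\in\R^d$ such that $\ete{f_{\vw_k}}{T}(\phi(\vx))=F_k(\vx)$. Here $d,n'=O(ns+s^2)$ and $T=O(s^2)$, all polynomial in $n$.
\item Suppose an efficient E2E learner $A$ exists over inputs in $\{0,1\}^{n'}$. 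Run $A$ on the pushforward distribution $\phi(\Unif)$, feeding it the pairs $(\phi(\vx_i),F_k(\vx_i))$. These pairs are exactly E2E examples for the target generator $f_{\vw_k}$.
\item The hypothesis $h$ returned by $A$ gives the predictor $h\circ\phi$ for $F_k$, with error at most $\varepsilon$ under the uniform distribution.
\item Choose $\varepsilon=1/4$ and constant $\delta$. Then $h\circ\phi$ contradicts pseudorandomness, because the distinguisher's advantage is a constant.
\end{enumerate}
The learner's running time is polynomial in $n',d,T$, so the distinguisher also runs in time polynomial in $n$. Computing $\phi$ is polynomial time by \Cref{thm:boolean-LT-expressivity}.

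The main obstacle is handling the definitional details so that the reduction genuinely uses the learner's guarantee. First, E2E learnability must be defined to require success for every distribution, or at least for distributions supported on the image of $\phi$. Our pushforward distribution is of this kind, so the hardness statement must be phrased as failure for some distribution. Second, the learner may be improper, so the argument must only use $h$ as a predictor, which the distinguisher argument above does. Third, the PRF's key length and input length must be matched to $n$, and circuit-size uniformity must hold for all keys simultaneously, which the padding remark provides. I would also check that the depth independence of $T$ and $d$ is what makes all of this polynomial. This is precisely the point where the $s^L$ bounds of \citet{joshi2025theory} failed, since PRF circuits need not have constant depth.
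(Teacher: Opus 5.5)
Your proposal is correct and follows essentially the same reduction as the paper. Hardwire each function of the hard class into a size-$s$ threshold circuit, apply \Cref{thm:boolean-LT-expressivity} with the common feature map $\phi_{n,s}$, run the E2E learner on the transformed examples $(\phi_{n,s}(\vx_i),g(\vx_i))$, and output $h\circ\phi_{n,s}$, whose error equals that of $h$ under the pushforward distribution. The only difference is that you spell out the PRF-to-distinguisher argument, which the paper simply cites as \Cref{thm:owf_implies_hardness_of_learning}; note also that the learner you obtain for the PRF class is improper, not proper as your opening sentence says, which is harmless since that hardness holds even for improper learners.
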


Indeed, the existence of one-way functions can be viewed as a minimal assumption for computational cryptography,
already sufficient for fundamental primitives such as pseudorandom generators, pseudorandom functions, private-key encryption and digital signatures \citep{goldreich2007foundations_set}.

\section{Optimality of the simulation bounds on \texorpdfstring{$T$}{T} and \texorpdfstring{$d$}{d}}
\label{sec:lower-bounds}

In this section, we discuss the optimality of the bounds of the generation length $T$ and dimension $d$ in \Cref{thm:LT-expressivity-tight}.
We show that the bounds are tight up to constant factors, under some reasonable assumptions on the construction. 
\citet{joshi2026learning} showed simulation of depth-2 threshold circuit with $T=O(s)$ and $d=O(ns)$. 
This motivates the question of whether we can improve the bounds when the circuit has additional structure, such as bounded depth or width.
We show that even then, we cannot expect to improve those bounds in some cases.

\paragraph{Dimension d.}
Since the simulation should be able to represent every circuit, it is reasonable to expect that all the weights of the circuit should be encoded in the autoregressive model, that is in the parameter vector $\vw$.\footnote{This assumption does not yield an unconditional lower bound, since edge counting alone does not preclude the existence of a more compact simulation. However, any such simulation must apply to arbitrary circuits, including those that are already "compressed".}
A size-$s$ circuit has at most $ns$ edges from input nodes and at most
$\binom{s}{2}$ edges between non-input nodes. Therefore, the bound of $d=O(ns+s^2)$ matches the number of circuit weights in the worst case.
Even for constant depth circuits, the number of edges between non-input nodes can be $\Theta(s^2)$. Indeed, a depth-$3$ circuit can have two consecutive
layers of $\Theta(s)$ non-input nodes with all edges between them, and hence
$\Theta(s^2)$ internal edges.

\paragraph{Generation length T.}
We next show that a quadratic generation length is necessary under a natural restriction on how the model simulates the circuit:
Each non-input node value $C_i(\vx)$ must be generated at a designated step (as in \Cref{property:cot-preservation}), and
for every edge between non-input nodes, the model must use the generated
value of the source node directly when computing the target node,
with a coefficient equal to the edge weight.
We formalize this restriction below.

\begin{definition}[Direct edge alignment]\label{def:direct-edge-alignment}
Fix $\Sigma=\R$ and an activation $\sigma:\R\to\R$.
Consider a simulation scheme that simulates every $\sigma$-circuit
$C$ by an autoregressive neuron $f_{\vw,\sigma}$ using any feature map $\phi$.
The scheme is \emph{directly edge-aligned} if, for every circuit
with non-input nodes $v_1,\dots,v_s$, there are designated steps
$1\leq t_1<\cdots<t_s\leq T$ such that:
\begin{enumerate}
    \item $\ete{f_{\vw,\sigma}}{t_i}(\phi(\vx))=C_i(\vx)$
    for every $i\in[s]$ and $\vx\in[0,1]^n$.
    \item For every edge $(v_j,v_i)$ between non-input nodes, the value
    generated at step $t_j$, namely $C_j(\vx)$, is used at step $t_i$
    with coefficient $w(v_j,v_i)$.\footnote{Formally, the value $C_j(\vx)$ generated at step
$t_j$ is the $(t_i-t_j)$-th most recent value in the autoregressive input.
Thus, being used with coefficient $w(v_j,v_i)$ means that
$\vw[-(t_i-t_j)]=w(v_j,v_i)$, so this value contributes
$w(v_j,v_i)C_j(\vx)$ to the preactivation at step $t_i$.}
\end{enumerate}
\end{definition}
For threshold circuits on Boolean inputs, the same definition
applies with $\Sigma=\{0,1\}$ and $\vx\in\{0,1\}^n$. The construction in the proof of \Cref{thm:LT-expressivity-tight} is
directly edge-aligned, as well as all the previous constructions
\citep{malach2023auto, joshi2025theory,joshi2026learning}. Direct edge alignment is a natural restriction, since it allows the autoregressive model to simulate the circuit in a straightforward manner. Under this restriction, the
quadratic dependence on the circuit depth is necessary, even for width-$1$ circuits:

\begin{theorem}\label{thm:direct-alignment-lower-bound}
For every $L\geq1$, any directly edge-aligned simulation
(\Cref{def:direct-edge-alignment}) for arbitrary $\sigma$-circuits
has generation length $T=\Omega(L^2)$ on depth-$L$ circuits,
even when restricted to width-$1$ circuits.
\end{theorem}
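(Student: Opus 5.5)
The plan is to exhibit one width-$1$, depth-$L$ circuit and show that direct edge alignment forces its designated steps to form a set with all pairwise differences distinct, i.e., a Sidon set in the sense of \Cref{obs:sidon.set.eq.golomb.ruler}. A simple counting argument then bounds $T$ from below.

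\textbf{The hard instance.} Take non-input nodes $v_1,\dots,v_L$ with $v_\ell$ the unique node at depth $\ell$, so the width is $1$ and $s=L$. The circuit is not required to be layered, since edges only need to go from smaller to larger depth. So include an edge $(v_j,v_i)$ for every pair $j<i$, plus an edge from an input node into $v_1$ to make the circuit well-formed. Assign the $\binom{L}{2}$ internal edges pairwise distinct weights, e.g.\ an injective map from pairs $(j,i)$ to the positive integers. Biases and the input edge are arbitrary. This is a legitimate $\sigma$-circuit for any activation, so any directly edge-aligned scheme for arbitrary $\sigma$-circuits must simulate it.

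\textbf{Forcing distinct differences.} Fix the weight vector $\vw$ and the designated steps $t_1<\dots<t_L\le T$ that the scheme produces for this circuit. By item~2 of \Cref{def:direct-edge-alignment} and its footnote, every pair $j<i$ gives
\[
\vw[-(t_i-t_j)]=w(v_j,v_i).
\]
Suppose two distinct pairs $(j,i)\neq(k,\ell)$ had $t_i-t_j=t_\ell-t_k$. Then the same coordinate of $\vw$ would have to equal two different weights, which is impossible. Hence the $\binom{L}{2}$ positive differences $t_i-t_j$ are pairwise distinct.

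\textbf{Counting.} All these differences are positive integers at most $t_L-t_1\le T-1$. Having $\binom{L}{2}$ distinct values among $\{1,\dots,T-1\}$ gives
\[
T-1\ge \binom{L}{2},
\]
so $T\ge 1+L(L-1)/2=\Omega(L^2)$.

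The argument is essentially forced once the instance is chosen. The only point needing care, rather than real difficulty, is that a width-$1$ circuit may carry all $\binom{L}{2}$ skip edges, so the circuit definition must allow non-consecutive-layer edges. The paper's definition does allow them, since only layered circuits are restricted to consecutive layers. We also do not need the designated steps to be independent of the circuit, because the bound is proved for this single instance.
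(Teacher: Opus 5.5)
Your proof is correct and gives the same bound $T\ge 1+\binom{L}{2}$ as the paper, but you use a different hard instance.

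\textbf{The paper's route.} It uses the sparsest width-$1$ circuit: a directed path $v_1\to v_2\to\cdots\to v_L$ with $L-1$ pairwise distinct weights. Direct alignment forces only the consecutive gaps $t_i-t_{i-1}$ to be distinct positive integers, and summing them gives $t_L\ge 1+\sum_{q=1}^{L-1}q$.

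\textbf{Your route.} You take the complete DAG on $v_1,\dots,v_L$ with all $\binom{L}{2}$ skip edges. This forces every pairwise difference to be distinct, so $\{t_1,\dots,t_L\}$ must itself be a Golomb ruler. You then pigeonhole these differences into $\{1,\dots,T-1\}$.

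\textbf{What the paper's choice buys.} The quadratic bound already holds for circuits with only $L-1$ internal edges. This is what justifies the remark after the theorem that the bound applies ``even for sparse circuits whose non-input nodes form a single directed path.'' It also shows that, under direct alignment, generation length cannot be governed by edge count alone, which bears on the open question in the conclusion. Your instance has $\Theta(L^2)$ edges, so it is consistent with $T$ growing linearly in the number of edges and cannot support that remark.

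\textbf{What your choice buys.} Your instance shows that the Sidon/Golomb structure used in the upper-bound construction is genuinely forced, not merely sufficient. It also admits a sharper constant. By the Erd\H{o}s--Tur\'an upper bound, a Sidon set in $[1,T]$ has at most $\sqrt{T}+O(T^{1/4})$ elements, so your instance yields $T\ge(1-o(1))L^2$. The path argument can never give more than $1+\binom{L}{2}$, since the gaps $1,2,\dots,L-1$ already satisfy its only constraint.
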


In the worst case, a circuit with $s$ nodes may have depth proportional to $s$,
leading to a generation length of $T=\Omega(s^2)$. 
This holds even for sparse circuits whose non-input nodes form a single directed path.

% \paragraph{Open question.} Can additional circuit structure yield non-trivial bounds that depend on the number of edges rather than on $s^2$? For example, consider a layered circuit of depth $\sqrt{s}$ and width $\sqrt{s}$. 
% Such a circuit might have $\Theta(s^{3/2})$ edges between non-input nodes and at most $n\sqrt{s}$ edges from the input nodes. 
% Is there a simulation for this class with $T=O(s^{3/2})$ and $d=O(s^{3/2}+n\sqrt{s})$?

\section{Experiments}\label{sec:experiments-main}
We test whether a single shared neuron trained efficiently from complete activation traces
generalizes on a nontrivial vision task.  We first train a four-hidden-layer
ReLU CNN teacher $C$ on CIFAR-10~\citep{krizhevsky2009learning},
obtaining nontrivial test accuracy.
Given an image $\vx$, we record its
intermediate activations $C_1(\vx), C_2(\vx), \dots$ and ten class logits.
We serialize these values using the construction in
\Cref{thm:LT-expressivity-tight} with the affine-output extension in
\Cref{remark:affine.readout}: the student uses shared weights $\vw$, ReLU at
hidden and undesignated steps, and identity activation at ten designated
output steps. For efficiency, we represent the AR neuron (i.e., $\vw$) using a sparse implementation 
which enables very efficient training and inference. We independently initialize the trainable parameters underlying $\vw$
and optimize them using Adam. Importantly, the student is trained only on activation traces and has no
access to the teacher's weights. Training the AR neuron on $4{,}000$ teacher activation trajectories takes roughly $5$ minutes on an NVIDIA L40S GPU.

At inference time, we classify images by evaluating the AR neuron only at designated steps. 
The obtained test accuracy on CIFAR-10 is almost identical to the teacher's; \Cref{tab:headline} summarizes the results. 
This experiment illustrates the mechanism
behind the theory: complete CoT traces transform a general computation into
supervised examples, enabling efficient training of a single neuron. See \Cref{sec:experiments} for further details.

\begin{table}[h]
    \centering
    \caption{CIFAR-10 test results. Student results show mean $\pm$ standard
deviation.}
    \label{tab:headline}
    \small
    \begin{tabular}{@{}lccc@{}}
        \toprule
        Model & Supervision & Test accuracy & Teacher agreement \\
        \midrule
        CNN teacher & 45,000 images & $72.150\%$ & $--$ \\
        AR student & 4,000 traces
            & $(72.106 \pm 0.054)\%$ & $(99.484 \pm 0.070)\%$ \\
        \bottomrule
    \end{tabular}
\end{table}

\section{Conclusion and Future Directions}
We show that, perhaps surprisingly, a single autoregressive neuron can efficiently simulate feedforward circuits, with bounds independent of circuit depth. For the
threshold activation, this yields efficient simulation of any polynomial-time
Turing machine together with efficient learning from complete CoT
supervision. For ReLU, our construction enables the efficient training of a single autoregressive neuron to simulate a large, nontrivial neural networks, as demonstrated by our CIFAR-10 experiment.
Together, these results show that a simple next-token predictor can express complex computation, while
learning tractability can depend critically on exposing high-quality intermediate computation.

Our results suggest that the quality of chain-of-thought supervision should be understood relative to the next-token architecture: 
intermediate steps should expose computations that the predictor can represent and learn. 
Our construction demonstrates an extreme case of this principle, where appropriately structured computational traces enable a single autoregressive neuron to efficiently learn to simulate complex computations. 
This suggests a tradeoff between the structure of intermediate supervision and the architectural complexity required of the next-token predictor. 
Although traces in existing text corpora are difficult to control, teacher-generated traces offer an opportunity to shape this supervision. 
This connects our results to the success of reasoning distillation, in which relatively simple students acquire reasoning capabilities through supervision from more complex teachers \citep{ho2023large,guo2025deepseek}. 
While our results do not establish parameter compression, 
they provide an idealized theoretical basis for this practice: 
exposing suitable intermediate computations can make a complex task learnable by a much simpler next-token predictor. 
This perspective motivates jointly designing teacher-generated traces and the student's architecture. \newline

Several theoretical questions remain open for the current architecture. Motivated by \Cref{sec:lower-bounds}, can
simulations without direct edge alignment improve the worst-case $s^2$ terms in
generation length? Can structured or sparse circuits instead
admit non-trivial bounds which are governed by their edge count alone? Beyond the current architecture, is
there a natural and simple extension of the autoregressive neuron that achieves
the optimal linear generation length $T=\Theta(s)$ while preserving time
invariance and the CoT-preserving property? Could a stronger but still
simple architecture achieve
$T=o(s)$? Such an improvement would necessarily move beyond gate-by-gate
generation.

\subsection*{Acknowledgments}
NJ is supported by the Apple AI/ML PhD fellowship. GV is supported by the Israel Science Foundation (grant No. 2574/25), by a research grant from Mortimer Zuckerman (the Zuckerman STEM Leadership Program), and by research grants from the Center for New Scientists at the Weizmann Institute of Science, and the Shimon and Golde Picker -- Weizmann Annual Grant.

\subsection*{AI use statement}
The core theoretical insights behind our results, including the proof technique described in \Cref{subsec:proof-technique}, as well as the initial versions of the proofs, were developed entirely by the authors without assistance from AI tools.
We used OpenAI models to edit the wording and
presentation of the paper, including the presentation
of proofs, identify relevant literature, and assist
with writing code for the experiments.
The authors reviewed the AI-assisted edits, verified
the cited references, and tested the experimental code.
We take responsibility for the final content of this work.

\begingroup
\emergencystretch=2em
\printbibliography[title={References}]

@article{erdos1941problem,
  title={On a problem of Sidon in additive number theory, and on some related problems},
  author={Erdos, Paul and Tur{\'a}n, P{\'a}l},
  journal={J. London Math. Soc},
  volume={16},
  number={4},
  pages={212--215},
  year={1941}
}

@inproceedings{joshi2025theory,
  title={A Theory of Learning with Autoregressive Chain of Thought},
  author={Joshi, Nirmit and Vardi, Gal and Block, Adam and Goel, Surbhi and Li, Zhiyuan and Misiakiewicz, Theodor and Srebro, Nathan},
  booktitle={The Thirty Eighth Annual Conference on Learning Theory},
  pages={3161--3212},
  year={2025},
  organization={PMLR}
}

@article{malach2023auto,
  title={Auto-regressive next-token predictors are universal learners},
  author={Malach, Eran},
  journal={arXiv preprint arXiv:2309.06979},
  year={2023}
}

@article{altabaa2025cot,
  title={CoT Information: Improved Sample Complexity under Chain-of-Thought Supervision},
  author={Altabaa, Awni and Montasser, Omar and Lafferty, John},
  journal={arXiv preprint arXiv:2505.15927},
  year={2025}
}

@inproceedings{li2024chain,
  title={Chain of thought empowers transformers to solve inherently serial problems},
  author={Li, Zhiyuan and Liu, Hong and Zhou, Denny and Ma, Tengyu},
  booktitle={The Twelfth International Conference on Learning Representations},
  year={2024}
}

@article{valiant1984theory,
  title={A theory of the learnable},
  author={Valiant, Leslie G},
  journal={Communications of the ACM},
  volume={27},
  number={11},
  pages={1134--1142},
  year={1984},
  publisher={ACM New York, NY, USA}
}

@article{wei2022chain,
  title={Chain-of-thought prompting elicits reasoning in large language models},
  author={Wei, Jason and Wang, Xuezhi and Schuurmans, Dale and Bosma, Maarten and Xia, Fei and Chi, Ed and Le, Quoc V and Zhou, Denny and others},
  journal={Advances in neural information processing systems},
  volume={35},
  pages={24824--24837},
  year={2022}
}

@inproceedings{Wei+22,
  author       = {Jason Wei and
                  Maarten Bosma and
                  Vincent Y. Zhao and
                  Kelvin Guu and
                  Adams Wei Yu and
                  Brian Lester and
                  Nan Du and
                  Andrew M. Dai and
                  Quoc V. Le},
  title        = {Finetuned Language Models are Zero-Shot Learners},
  booktitle    = {The Tenth International Conference on Learning Representations, {ICLR}
                  2022, Virtual Event, April 25-29, 2022},
  year         = {2022},
}

@article{hanneke2026sample,
  title={Sample Complexity of Autoregressive Reasoning: Chain-of-Thought vs. End-to-End},
  author={Hanneke, Steve and Mehalel, Idan and Moran, Shay},
  journal={arXiv preprint arXiv:2604.12013},
  year={2026}
}

@article{joshi2026learning,
  title={Learning to Think from Multiple Thinkers},
  author={Joshi, Nirmit and Magen, Roey and Srebro, Nathan and Tsilivis, Nikolaos and Vardi, Gal},
  journal={arXiv preprint arXiv:2604.24737},
  year={2026}
}

@book{arora2009computational,
  title={Computational complexity: a modern approach},
  author={Arora, Sanjeev and Barak, Boaz and others},
  volume={1},
  year={2009},
  publisher={Cambridge University Press Cambridge}
}

@article{pippenger1979relations,
  title={Relations among complexity measures},
  author={Pippenger, Nicholas and Fischer, Michael J},
  journal={Journal of the ACM (JACM)},
  volume={26},
  number={2},
  pages={361--381},
  year={1979},
  publisher={ACM New York, NY, USA}
}

@article{goldreich1986construct,
  title={How to construct random functions},
  author={Goldreich, Oded and Goldwasser, Shafi and Micali, Silvio},
  journal={Journal of the ACM (JACM)},
  volume={33},
  number={4},
  pages={792--807},
  year={1986},
  publisher={ACM New York, NY, USA}
}

@article{haastad1999pseudorandom,
  title={A pseudorandom generator from any one-way function},
  author={Haastad, Johan and Impagliazzo, Russell and Levin, Leonid A and Luby, Michael},
  journal={SIAM Journal on Computing},
  volume={28},
  number={4},
  pages={1364--1396},
  year={1999},
  publisher={SIAM}
}

@article{KearnsValiant1994,
  author  = {Michael J. Kearns and Leslie G. Valiant},
  title   = {Cryptographic Limitations on Learning Boolean Formulae and Finite Automata},
  journal = {Journal of the ACM},
  volume  = {41},
  number  = {1},
  pages   = {67--95},
  year    = {1994},
  doi     = {10.1145/174644.174647}
}

@inproceedings{merrill2024expressive,
  title={The expressive power of transformers with chain of thought},
  author={Merrill, William and Sabharwal, Ashish},
  booktitle={International Conference on Learning Representations},
  volume={2024},
  pages={7690--7706},
  year={2024}
}

@article{brosamle2026expressive,
  title={The Expressive Power of Low Precision Softmax Transformers with (Summarized) Chain-of-Thought},
  author={Br{\"o}samle, Moritz and Eckstein, Stephan},
  journal={arXiv preprint arXiv:2605.18079},
  year={2026}
}

@article{li2026optimal,
  title={The Optimal Sample Complexity of Learning Autoregressive Chain-of-Thought},
  author={Li, Zhiyuan},
  journal={arXiv preprint arXiv:2607.07423},
  year={2026}
}

@mastersthesis{krizhevsky2009learning,
  title={Learning Multiple Layers of Features from Tiny Images},
  author={Krizhevsky, Alex},
  school={University of Toronto},
  year={2009},
  url={https://www.cs.toronto.edu/~kriz/learning-features-2009-TR.pdf}
}

@article{kojima2022large,
  title={Large language models are zero-shot reasoners},
  author={Kojima, Takeshi and Gu, Shixiang Shane and Reid, Machel and Matsuo, Yutaka and Iwasawa, Yusuke},
  journal={Advances in neural information processing systems},
  volume={35},
  pages={22199--22213},
  year={2022}
}

@inproceedings{lightman2024let,
  title={Let's verify step by step},
  author={Lightman, Hunter and Kosaraju, Vineet and Burda, Yuri and Edwards, Harrison and Baker, Bowen and Lee, Teddy and Leike, Jan and Schulman, John and Sutskever, Ilya and Cobbe, Karl},
  booktitle={International Conference on Learning Representations},
  volume={2024},
  pages={39578--39601},
  year={2024}
}

@inproceedings{ho2023large,
  title={Large language models are reasoning teachers},
  author={Ho, Namgyu and Schmid, Laura and Yun, Se-Young},
  booktitle={Proceedings of the 61st annual meeting of the association for computational linguistics (volume 1: long papers)},
  pages={14852--14882},
  year={2023}
}

@article{zelikman2022star,
  title={Star: Bootstrapping reasoning with reasoning},
  author={Zelikman, Eric and Wu, Yuhuai and Mu, Jesse and Goodman, Noah},
  journal={Advances in Neural Information Processing Systems},
  volume={35},
  pages={15476--15488},
  year={2022}
}

@article{jaech2024openai,
  title={Openai o1 system card},
  author={Jaech, Aaron and Kalai, Adam and Lerer, Adam and Richardson, Adam and El-Kishky, Ahmed and Low, Aiden and Helyar, Alec and Madry, Aleksander and Beutel, Alex and Carney, Alex and others},
  journal={arXiv preprint arXiv:2412.16720},
  year={2024}
}

@article{grattafiori2024llama,
  title={The llama 3 herd of models},
  author={Grattafiori, Aaron and Dubey, Abhimanyu and Jauhri, Abhinav and Pandey, Abhinav and Kadian, Abhishek and Al-Dahle, Ahmad and Letman, Aiesha and Mathur, Akhil and Schelten, Alan and Vaughan, Alex and others},
  journal={arXiv preprint arXiv:2407.21783},
  year={2024}
}

@article{guo2025deepseek,
  title={Deepseek-r1: Incentivizing reasoning capability in llms via reinforcement learning},
  author={Guo, Daya and Yang, Dejian and Zhang, Haowei and Song, Junxiao and Wang, Peiyi and Zhu, Qihao and Xu, Runxin and Zhang, Ruoyu and Ma, Shirong and Bi, Xiao and others},
  journal={arXiv preprint arXiv:2501.12948},
  year={2025}
}

@article{brown2020language,
  title={Language models are few-shot learners},
  author={Brown, Tom and Mann, Benjamin and Ryder, Nick and Subbiah, Melanie and Kaplan, Jared D and Dhariwal, Prafulla and Neelakantan, Arvind and Shyam, Pranav and Sastry, Girish and Askell, Amanda and others},
  journal={Advances in neural information processing systems},
  volume={33},
  pages={1877--1901},
  year={2020}
}

@article{vaswani2017attention,
  title={Attention is all you need},
  author={Vaswani, Ashish and Shazeer, Noam and Parmar, Niki and Uszkoreit, Jakob and Jones, Llion and Gomez, Aidan N and Kaiser, {\L}ukasz and Polosukhin, Illia},
  journal={Advances in neural information processing systems},
  volume={30},
  year={2017}
}

@article{wei2022statistically,
  title={Statistically meaningful approximation: a case study on approximating turing machines with transformers},
  author={Wei, Colin and Chen, Yining and Ma, Tengyu},
  journal={Advances in Neural Information Processing Systems},
  volume={35},
  pages={12071--12083},
  year={2022}
}

@article{feng2023towards,
  title={Towards revealing the mystery behind chain of thought: a theoretical perspective},
  author={Feng, Guhao and Zhang, Bohang and Gu, Yuntian and Ye, Haotian and He, Di and Wang, Liwei},
  journal={Advances in neural information processing systems},
  volume={36},
  pages={70757--70798},
  year={2023}
}

@article{doron2026stochastic,
  title={Stochastic Autoregressive Learning},
  author={Doron-Arad, Ilan and Mehalel, Idan and Mossel, Elchanan},
  journal={arXiv preprint arXiv:2608.07224},
  year={2026}
}

@article{bartlett2019nearly,
  title={Nearly-tight VC-dimension and pseudodimension bounds for piecewise linear neural networks},
  author={Bartlett, Peter L and Harvey, Nick and Liaw, Christopher and Mehrabian, Abbas},
  journal={Journal of Machine Learning Research},
  volume={20},
  number={63},
  pages={1--17},
  year={2019}
}

@book{goldreich2007foundations_set,
  author    = {Goldreich, Oded},
  title     = {Foundations of Cryptography: A Two-Volume Set},
  publisher = {Cambridge University Press},
  year      = {2007},
  isbn      = {9780521873734},
  address   = {Cambridge, UK}
}

@article{doron2026theory,
  title={A Theory of Online Learning with Autoregressive Chain-of-Thought Reasoning},
  author={Doron-Arad, Ilan and Mehalel, Idan and Mossel, Elchanan},
  journal={arXiv preprint arXiv:2605.06819},
  year={2026}
}
\endgroup

\newpage
\appendix
\tableofcontents
\newpage

\section{Technical Preliminary}\label{sec:technical-preliminary}
First, we present the main definitions used in the simulation results. Sidon sets and Golomb rulers provide the combinatorial
structure for scheduling circuit gates in the autoregressive
simulations, while local Lipschitz continuity is used to control errors in the approximate simulation.
We then recall the E2E and CoT learning settings (\Cref{sec:learning-with-cot}).
\begin{definition}[Sidon set]\label{def:sidon_set}
    A finite set \(S\subset
     \N\) is called a \emph{Sidon set} if whenever $a+b=c+d$ with $a,b,c,d\in S$, then $\{a,b\}=\{c,d\}$. Equivalently,
    $$\left|\{a+b : a,b\in S,\ a\leq b\}\right| = \binom{|S|+1}{2}.$$        
\end{definition}

\begin{definition}[Golomb ruler]\label{def:golomb_ruler}
A finite set \(G\subset \N
\) is called a \emph{Golomb ruler} if all pairwise differences are distinct, that is, if \(g_b-g_a=g_d-g_c\) with \(g_a,g_b,g_c,g_d\in G, g_a<g_b, g_c<g_d\), then \(\{g_a,g_b\}=\{g_c,g_d\}\).
Equivalently,
\[
\left|\{g_b-g_a : g_a,g_b\in G,\ g_a   < g_b\}\right| = \binom{|G|}{2}.
\]                
\end{definition}
A function $\sigma:\R\to\R$ is \emph{locally Lipschitz} if around
every point in its domain, it behaves like a Lipschitz function.
\begin{definition}[Locally Lipschitz function]\label{def:locally-lipschitz}
A function $\sigma:\R\to\R$ is \emph{locally Lipschitz} if,
for every $z\in\R$, there exist $\delta>0$ and $L<\infty$ such that
$|\sigma(u)-\sigma(v)|\leq L|u-v|$ for all
$u,v\in(z-\delta,z+\delta)$.
\end{definition}

\subsection{Supervised Learning with CoT.}\label{sec:learning-with-cot}
We recall the learning framework of \citet{joshi2025theory}.
Fix $\Sigma=\{0,1\}$, an input length $n$, a generation length $T$,
and a base class $\cF_d\subseteq\Sigma^{\Sigma^*}$.
For a distribution $\cD$ over $\{0,1\}^n$ and a target generator
$f_\star\in\cF_d$, both learning settings aim to predict the
final output $\ete{f_\star}{T}(\vx)$. They differ
only in the information available during training.  An E2E example is $\bigl(\vx,\ete{f_\star}{T}(\vx)\bigr)$,
whereas a CoT example is $\bigl(\vx,\CoT{f_\star}{T}(\vx)\bigr)$.
Thus, CoT supervision reveals all intermediate outputs, while E2E supervision
treats them as latent variables.

\begin{definition}[Realizable E2E learnability]\label{def:e2e-learnability}
We say that $\ete{\cF_d}{T}$ is E2E-learnable over $\{0,1\}^n$ with generation
length $T$ and sample complexity $m(\varepsilon,\delta)$ if there is a
learning rule $A$ such that the following holds.  For every distribution
$\cD$ over $\{0,1\}^n$, target generator $f_\star\in\cF_d$, and
$\varepsilon,\delta\in(0,1)$, let
\begin{align*}
S_{\mathrm{E2E}}
    =\bigl((\vx_1,\ete{f_\star}{T}(\vx_1)),\ldots,
           (\vx_m,\ete{f_\star}{T}(\vx_m))\bigr),
\qquad
\vx_1,\ldots,\vx_m\stackrel{\mathrm{i.i.d.}}{\sim}\cD,
\end{align*}
where $m=m(\varepsilon,\delta)$.  Then, with probability at least
$1-\delta$ over the sample and the randomness of $A$, the predictor
$h=A(S_{\mathrm{E2E}}):\{0,1\}^n\to\{0,1\}$ satisfies
\begin{align*}
\Pr_{\vx\sim\cD}\left[h(\vx)\neq\ete{f_\star}{T}(\vx)\right]
    \leq\varepsilon.
\end{align*}
\end{definition}
A CoT learner (\Cref{def:cot-learnability}) receives examples
$\bigl(\vx_i,\CoT{f_\star}{T}(\vx_i)\bigr)$
but is evaluated on the same final output.

We allow both learners to be improper.  

\textbf{The Computational Aspect.} We call a learner efficient
if its sample complexity and running time are polynomial in
$n,d,T,1/\varepsilon$, and $\log(1/\delta)$.

The main benefit of CoT supervision is computational.  Let $v=\VCdim(\cF_d)$ denote the VC dimension of the base class.
\citet{joshi2025theory} showed that the $\Cons$ rule learns the
autoregressive model under CoT supervision with sample complexity
\[
m_{\mathrm{CoT}}(\varepsilon,\delta)
    = O\left(
        \frac{
            v\log 2T\log(2/\varepsilon)+\log(1/\delta)
        }{\varepsilon}
      \right).
\]
\textbf{Linear Threshold.}
For $\sigma(z)=\ind[z\geq0]$ and $\cF_d=\cFlin$, the predictors
are linear thresholds on binary contexts with real weights and biases.
Their VC dimension is $v=d+1$, so this bound is nearly linear
in the number of parameters and depends only logarithmically
on the generation length.\footnote{\citet{li2026optimal} subsequently removed the dependence
on $T$, obtaining the optimal rate
$O((d+\log(1/\delta))/\varepsilon)$.}
Moreover, the $\Cons$ rule is computationally efficient in this setting:
the observed CoT trajectories provide $T$ labeled prefix--next-token examples,
and a linear threshold consistent with all these examples can be found by
linear programming in polynomial time.

Consequently, Corollary~3.7 and Corollary~4.3 of \citet{joshi2025theory} give the following result.
\begin{corollary}\label{thm:LT-learnable-with-COT}
Autoregressive linear-threshold models (i.e. $\ete{\cFlin}{T}$) are efficiently learnable with CoT
supervision (\Cref{def:cot-learnability}).
\end{corollary}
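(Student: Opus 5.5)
This corollary follows directly from earlier results, so the plan is to show that the consistency rule is both statistically sound and computationally efficient for the class $\ete{\cFlin}{T}$. We work in the framework of \citet{joshi2025theory}. A CoT sample $S_{\mathrm{CoT}}$ of $m$ trajectories expands into $mT$ labeled prefix--next-token pairs $\bigl((\vx_i,\ete{f_\star}{1}(\vx_i),\dots,\ete{f_\star}{t-1}(\vx_i)),\ \ete{f_\star}{t}(\vx_i)\bigr)$ for $i\in[m]$, $t\in[T]$. The learner returns any $f_{\vw,b}\in\cFlin$ that labels all of these pairs correctly. The output hypothesis is $h=\ete{f_{\vw,b}}{T}$.

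\textbf{Statistical step.} The sample-complexity bound recalled above applies to any consistent rule. Here the VC dimension is $v=d+1$, since the predictors are linear thresholds on the right-aligned window of length $d$. We invoke the bound as a black box (Corollary~3.7 of \citet{joshi2025theory}). It gives
\[
m_{\mathrm{CoT}}(\varepsilon,\delta)=O\bigl(((d+1)\log 2T\log(2/\varepsilon)+\log(1/\delta))/\varepsilon\bigr),
\]
which is polynomial in $d$, $\log T$, $1/\varepsilon$ and $\log(1/\delta)$.

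The intuition for why the bound holds is as follows. If $f$ agrees with $f_\star$ on every prefix generated by $f_\star$, then by induction over $t$ the two trajectories coincide, so $f$ has zero CoT error on $\vx$. The CoT loss class is therefore a $T$-fold union-type composition of the base class. Its growth function is bounded by that of the base class evaluated on $mT$ points, which is what produces the $\log T$ factor.

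\textbf{Computational step.} A consistent $f_{\vw,b}$ can be found by linear programming, as in Corollary~4.3 of \citet{joshi2025theory}. For each pair with context $\vz\in\{0,1\}^{*}$ and label $y$, only the last $d$ coordinates of $\vz$ matter; call this window $\tilde\vz\in\{0,1\}^d$ after zero-padding. If $y=1$ we impose $\langle \vw,\tilde\vz\rangle+b\ge 0$. If $y=0$ we impose $\langle\vw,\tilde\vz\rangle+b<0$, which we replace by $\le -1$.

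The one point needing care is that the strict inequalities can be made to have a margin. This holds because the sample is realized by $f_\star$ on finitely many points. Take the strict constraints satisfied by $(\vw_\star,b_\star)$, let $\gamma>0$ be the minimum violation slack among the $y=0$ constraints, and rescale $(\vw_\star,b_\star)$ by $1/\gamma$. The rescaled vector is feasible for the LP with margin $1$, and rescaling does not change any $y=1$ constraint.

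The LP has $d+1$ variables and $mT$ constraints with $\{0,\pm1\}$ coefficients, so it is solvable in time polynomial in $m,T,d$ by the ellipsoid method or an interior-point method. The input sample itself has size polynomial in $n$, $T$ and $m$. Combining the two steps gives sample complexity and running time polynomial in $n,d,T,1/\varepsilon,\log(1/\delta)$, which is the definition of efficient CoT learnability.

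I expect no real obstacle, since the statement simply packages known results. The only subtle point is the strict-inequality and margin reduction, which is handled by the rescaling argument above.
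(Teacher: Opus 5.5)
Your proposal is correct and follows the same route as the paper, which combines Corollary~3.7 of \citet{joshi2025theory} (the consistency-rule sample bound, instantiated with $v=d+1$) with Corollary~4.3 of the same work (finding a consistent linear threshold by linear programming). Your rescaling argument, which replaces the strict constraints for $y=0$ by $\leq -1$ constraints, is a valid filling-in of a detail the paper leaves implicit in the phrase ``linear programming in polynomial time.''
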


\section{Related Work}\label{app:related-work}
This section discusses additional related work and provides further details on the studies discussed in the main text.

Chain-of-Thought (CoT) was initially introduced as a prompting approach that encourage a series of intermediate reasoning steps before producing the final answer \citep{wei2022chain,kojima2022large}.
Subsequent work demonstrated that supervising the correctness of these intermediate reasoning steps can further improve reasoning performance, highlighting the benefits of training with CoT supervision \citep{lightman2024let, ho2023large, zelikman2022star}.
More recently, reinforcement learning (RL) techniques applied in LLM training have encouraged these models to generate more and more tokens during inference, which can be interpreted as a form of CoT reasoning \citep{jaech2024openai,guo2025deepseek}.
The Chain-of-Thought supervision we analyze in this work is closely related to the practice of supervised fine-tuning~\cite{brown2020language,Wei+22,grattafiori2024llama} of LLMs.

\paragraph{Expressivity with CoT.} 
Since Transformers \citep{vaswani2017attention} are the dominant architecture underlying LLMs, growing attention has been devoted to characterizing their expressivity \citep{wei2022statistically}, 
particularly when augmented with CoT \citep{merrill2024expressive,brosamle2026expressive, feng2023towards}. 
For example,\citet{merrill2024expressive} showed that the number of autoregressive steps acts as a computational resource for Transformers and, in particular, that polynomially many steps suffice to recognize every language in $\mathsf{P}$. 
Our results establish an analogous phenomenon for autoregressive linear-threshold models in the nonuniform setting: additional steps enable more complex computations, and polynomially many steps allow even a single linear-threshold neuron to capture $\mathsf{P}/\mathsf{poly}$, 
the nonuniform analogue of $\mathsf{P}$. 

\citet{malach2023auto} showed that \emph{time-dependent} AR linear-threshold models can simulate threshold circuits, and hence Turing-machine computations, using a number of autoregressive steps polynomial in the circuit size. 
In related work, \citet{joshi2025theory} established an analogous result for time-invariant AR linear-threshold models, albeit with exponentially many autoregressive steps. 
More recently, \citet{joshi2026learning} showed that these models can efficiently simulate depth-$2$ threshold circuits and used this simulation to establish hardness results for learning from CoT traces generated by multiple distinct generators.

\section{Expressivity}\label{app:expressivity}

Both main-text simulation theorems (\Cref{thm:LT-expressivity-tight,thm:boolean-LT-expressivity}) follow from a common
construction for $\sigma$-circuits. We first give the exact
construction under \Cref{ass:activation}, proving the exact
part of \Cref{thm:LT-expressivity-tight} together with
CoT preservation, efficient construction of the feature map
and designated steps, and preservation of binary inputs
(\Cref{prop:exact-simulation-properties}).
Restricting this construction to threshold activation and
Boolean inputs also proves \Cref{thm:boolean-LT-expressivity}.
We then establish the approximate part of
\Cref{thm:LT-expressivity-tight} by controlling the errors
introduced when undesignated steps produce small nonzero
values (see \Cref{thm:approximate-simulation}).

\subsection{Exact simulation}

We first state the additional properties of the exact
simulation, then prove them jointly with the exact part of
\Cref{thm:LT-expressivity-tight}.
\begin{proposition}[Additional properties of the exact simulation]
\label{prop:exact-simulation-properties}
Under \Cref{ass:activation}, the exact simulation in
\Cref{thm:LT-expressivity-tight} can be chosen to satisfy
the following properties.
\begin{enumerate}
    \item \textbf{CoT preservation.}
    There exist designated steps
    $1\leq t_1<\cdots<t_s=T$, depending only on $n,s$,
    such that, for every size-$s$ $\sigma$-circuit $C$,
    the simulating neuron satisfies
    \begin{align*}
    \ete{f_{\vw,\sigma}}{t}(\phi_{n,s}(\vx))
    =
    \begin{cases}
        C_i(\vx), & \text{if } t=t_i \text{ for some } i\in[s],\\
        0, & \text{otherwise},
    \end{cases}
    \qquad
    \text{$\forall \vx\in[0,1]^n$,$t\in[T]$}.
    \end{align*}

    \item \textbf{Efficiency.}
    The feature map $\phi_{n,s}$ and the designated steps
    depend only on $n,s$ and can be constructed in time
    polynomial in $n,s$. Evaluating $\phi_{n,s}(\vx)$
    takes $O(n')$ real-RAM operations.

    \item \textbf{Binary inputs.}
    The feature map satisfies
    $\phi_{n,s}(\{0,1\}^n)\subseteq\{0,1\}^{n'}$.
\end{enumerate}
\end{proposition}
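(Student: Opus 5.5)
The plan is a single explicit construction with four parts. Designated steps come from a Sidon set. One ``clock'' token supplies every gate bias and forces undesignated outputs to zero. Input copies are placed so that each input-to-gate weight and each gate-to-gate weight has its own coordinate of $\vw$. Correctness then follows by induction on $t$. The three listed properties are read off the construction; the binary-inputs property is immediate because $\phi_{n,s}$ only copies $x_k$'s and writes constants $0,1$.

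\textbf{Step 1 (schedule).} By \Cref{claim:explicit_sidon_set}, take a Sidon set $\{a_1<\cdots<a_s\}$ with $a_s=A=O(s^2)$. Set $t_i:=a_i+c$ for a suitable shift $c$, with $t_s=T=O(s^2)$. By \Cref{obs:sidon.set.eq.golomb.ruler}, the differences $t_j-t_i$ for $i<j$ are pairwise distinct and lie in $[1,A]$. For each edge $(v_i,v_j)$, set $\vw[-(t_j-t_i)]:=w(v_i,v_j)$, and set every other coordinate in $[1,A]$ to $0$.

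\textbf{Step 2 (clock, bias, zeroing).} $\phi_{n,s}$ contains a single token $1$ at right-distance $P$, with $P$ larger than every other occupied offset. At step $t$ this token sits at offset $P+t-1$. Set $\vw[-(P+t_i-1)]:=\theta_i$, and $\vw[-(P+t-1)]:=-B$ for every undesignated $t\le T$. These $T$ coordinates are disjoint from all the others.

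For $B$, argue by induction on $t$. Every token produced before step $t$ is either $0$ or some $C_i(\vx)$. Inputs lie in $[0,1]$ and the circuit weights are fixed, so by \Cref{ass:activation} ($\sigma$ bounded on bounded intervals) all such values are bounded by a constant $R_C$. Every non-clock contribution to any preactivation is therefore bounded by a constant $R'$. Choosing $B:=R'-c_\sigma$ gives preactivation $\le c_\sigma$ at undesignated steps, hence output $0$. This is the point where the zero-region condition is used.

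\textbf{Step 3 (inputs).} For gate $i$, place a copy of $\vx$ so that at step $t_i$ the entry $x_k$ is aligned with a coordinate $u_{i,k}$, and set $\vw[-u_{i,k}]:=w(v_{\mathrm{in},k},v_i)$. The first attempt is $p_{i,k}=E+(k-1)K+a_i$, which gives $u_{i,k}=E'+(k-1)K+2a_i$. At another designated step $t_j$, the same copy lands on $E'+(k-1)K+a_i+a_j$. Within a block, with $K>2A$, the Sidon property forbids $a_i+a_j=2a_{i'}$ for $i\ne j$, so these stray alignments hit zero coordinates. The blocks, the edge range $[1,A]$ and the clock range are pairwise disjoint.

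Stray alignments at undesignated steps are harmless, because $-B$ dominates them; $B$ only needs to absorb their bounded size. The value at step $t_i$ is then exactly
\[
\sigma\Bigl(\theta_i+\sum_{k}w(v_{\mathrm{in},k},v_i)x_k+\sum_{j<i}w(v_j,v_i)C_j(\vx)\Bigr)=C_i(\vx),
\]
which completes the induction and gives CoT preservation. All offsets are explicit integers computable in $\poly(n,s)$ time, and $\phi_{n,s}$ only copies $x_k$'s and writes constants, so evaluating it takes $O(n')$ operations.

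\textbf{Main obstacle.} The hard part is meeting the stated bounds $n',d=O(ns+s^2)$. The naive block layout above has length $nK=O(ns^2)$, since the input copies are sparse within each block. My first attempt at compressing it would use the fact that only alignments at the $s$ designated steps must avoid nonzero coordinates; everything else is killed by the clock. I would interleave the $s$ copies of $\vx$ densely, with copy $i$ contiguous at offset $E+(i-1)n$ at time $t_i$. I would then check, via the Sidon/Golomb property of the $t_i$, that a copy shifted by $t_j-t_i\ne0$ lands either on a zero slot or outside the weighted window. If that fails, I would fall back on padding each copy to length $O(n+s)$ so that the shifts $t_j-t_i\le A$ overlap only a bounded number of neighbouring copies.
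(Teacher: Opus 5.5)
There is a genuine gap, and it is the one you flag yourself. Your schedule, clock/bias/suppression mechanism and induction match the paper: its $\vv$ block is your clock, and its $\vz$ block is your edge range $[1,A]$. The binary-input and efficiency properties also follow as you say. However, the proposition is about the simulation of \Cref{thm:LT-expressivity-tight}, which must have $n',d=O(ns+s^2)$ and $T=O(s^2)$. Your only verified input layout uses $\Theta(ns^2)$ coordinates, and neither repair you suggest works.

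In the dense interleaving, the input-weight window $E+[0,sn)$ is completely filled with (generically nonzero) weights, so there are no zero slots. Moreover, all $s$ copies of $\vx$ are present in $\phi_{n,s}(\vx)$ at every step. At step $t_j$, copy $i\neq j$ occupies $E+(i-1)n+(t_j-t_i)+[0,n)$. It misses the window only if $t_j-t_i\geq (s-i+1)n$ or $t_j-t_i\leq -in$. For the roughly $s/2$ indices $i\in[s/4,3s/4]$, this forces the corresponding $t_i$ to be pairwise at least $sn/4$ apart, hence $T=\Omega(ns^2)$.

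The padding fallback does not help either. The clock only suppresses undesignated steps, so at a designated step a single overlap of a shifted copy with a nonzero weight already corrupts the preactivation. You need zero such overlaps, not a bounded number, and you have not exhibited positions achieving this within $O(ns+s^2)$ coordinates.

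The missing idea is that no replication of $\vx$ is needed at all. A single copy of each input suffices, because the autoregressive shift itself carries it across $s$ different coordinates of $\vw$ at the $s$ designated steps, one per gate. The paper packs these single copies densely as follows:
\begin{itemize}
\item Split the inputs into $\lceil n/s\rceil$ blocks of $s$.
\item Take one Sidon set $\{a_1<\cdots<a_{2s}\}$ with $a_{2s}=O(s^2)$ from \Cref{claim:explicit_sidon_set}.
\item Place the $j$-th input of every block at offset $t'_j=a_j$ inside a block of length $T'=O(s^2)$.
\item Use $t_i=a_{s+i}$ as the schedule.
\end{itemize}
All $s^2$ sums $t_i+t'_j$ are then distinct. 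So the weight from the $j$-th input of block $r$ to $v_i$ gets its own coordinate $t_i+t'_j-1$ in the $r$-th weight block, and every alignment at a designated step is an intended one; there are no stray terms to rule out. This gives $n'=d=2T+\lceil n/s\rceil T'=O(ns+s^2)$ with $T=O(s^2)$. Since $\{t_i\}$ is a subset of a Sidon set, the differences $t_j-t_i$ stay distinct, and your gate-to-gate edge encoding carries over unchanged.
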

\begin{proof}[Proof of the exact part of
\Cref{thm:LT-expressivity-tight} and
\Cref{prop:exact-simulation-properties}] Order the non-input gates as $v_1,\ldots,v_s$ such that every gate appears
after all of its non-input predecessors with $v_s$
the output gate. For every $i\in[s]$, let $\theta_i$ be the bias of $v_i$,
let $\vw^{\mathrm{in}}[q,i]$ be the weight from input gate $v_{\mathrm{in},q}$ to $v_i$,
and let $\vw[j,i]$ be the weight from $v_j$ to $v_i$, assigning weight zero to missing edges.

The goal is to choose vector weights $\vw$, a feature map $\phi_{n,s}$, and
time steps $1\leq t_1<\cdots<t_s\leq T$ such that the autoregressive process
satisfies $\ete{f_{\vw,\sigma}}{t_i}(\phi_{n,s}(\vx))=C_i(\vx)$.

Next, partition the input values $\vx[1],\dots,\vx[n]$ into
$k=\lceil n/s\rceil=O(n/s+1)$ blocks, each of size at most $s$. Each block
is encoded in a separate block of $\phi_{n,s}(\vx)$.
Formally, let $k=\lceil n/s\rceil$, and pad $\vx$ by setting $\vx[q]=0$ for
$n<q\leq ks$. For every $r\in[k]$ and $j\in[s]$, set
$\vx_r[j]:=\vx[(r-1)s+j]$ to be the $j$-th element of the $r$-th block. Let $\vw_r^{\mathrm{in}}[j,i]$ be the weight
from $\vx_r[j]$ to $v_i$, that is,
$\vw_r^{\mathrm{in}}[j,i]:=\vw^{\mathrm{in}}[(r-1)s+j,i]$, where weights
from padded coordinates are zero. Define the feature map
$\phi_{n,s}:[0,1]^n\to[0,1]^{n'}$ and the weight vector $\vw\in\R^d$ by
\begin{align*} \label{eq:feature_map_and_weights}
     \vw &= [\underbrace{\vv}_{T \text{ coordinates}} \ \ , \ \underbrace{\vz_k}_{T' \text{ coordinates}},\dots, \underbrace{\vz_1}_{T' \text{ coordinates}},\quad \underbrace{\vz}_{T \text{ coordinates}} ] \\
      \phi_{n,s}(\vx) &=   \hspace{6pt}  [  \underbrace{0^{T-1}, 1}_{T \text{ coordinates}}  ,\underbrace{\vu_k}_{T' \text{ coordinates}} \ ,\dots,\underbrace{\vu_1}_{T' \text{ coordinates}},\quad  \underbrace{0^{T}}_{T \text{ coordinates}} \ ],
\end{align*}
for some $T,T'$ and $d$ that will be specified later. 
The blocks $\vu_r\in[0,1]^{T'}$ and $\vz_r\in\R^{T'}$ for $r\in[k]$ are responsible for the contributions of the input values,
while $\vz$ propagates previously computed gate values. We now describe formally the role and construction of each block:

\paragraph{Construction of $\vv$.}
The vector $\vv$ encodes the biases $\theta_1,\dots,\theta_s$. 
During the first $T$ autoregressive steps, the $1$ in $\phi_{n,s}(\vx)$ shifts across the coordinates of $\vv$. 
We set the coordinate aligned with this $1$ at time $t_i$ to $\theta_i$. 
Thus, at time $t_i$, the bias contribution is $\theta_i$. 
In other time steps $t\neq t_i$, we set the aligned coordinate to a sufficiently negative value to ensure that $\ete{f_{\vw,\sigma}}{t}(\phi_{n,s}(\vx)) = 0$.
Formally, define 
\begin{align*}
M_C:=\sup\{ |C_i(\vx)|:\vx\in[0,1]^n,\ i\in[s]\}<\infty,
\end{align*}
which is well-defined by \Cref{obs:bounded-gate-values}. Set
\begin{align*}
A:=\sum_{r=1}^k\lVert\vz_r\rVert_1+M_C\lVert\vz\rVert_1.
\end{align*}
At every undesignated time, we set the aligned coordinate to $c_\sigma-A-1$, where $c_\sigma$ is from \Cref{ass:activation}.
This suppresses every contribution from the input and previously generated
gate values.
Formally, for every $t\in[T]$, define
\begin{equation}
\vv[-t]
=
\begin{cases}
\theta_i, & t=t_i \text{ for some } i\in[s],\\
c_\sigma-A-1, & \text{otherwise},
\end{cases}
\end{equation}
where here and throughout the proof, negative indices count coordinates from the right endpoint
of a block. Note that after $t$ steps, the $1$ in $\phi_{n,s}(\vx)$ is aligned with the $\ord{t}$ coordinate of $\vv$ from the right.

\paragraph{Construction of $\vu_r$ and $\vz_r$.} The vector $\vu_r$ encodes the input values $\vx_r[1],\dots,\vx_r[s]$ (the $\ord{r}$ block in the partition of $\vx$) separated by zeros.
Their locations are determined by indices $t'_1,\dots,t'_s\in[O(s^2)]$ that will be specified later.
For every $r\in[k]$ and $\ell\in[T']$, define $\vu_r\in[0,1]^{T'}$ by
\begin{equation}
\vu_r[-\ell]
=
\begin{cases}
\vx_r[j], & \ell=t'_j \text{ for some } j\in[s],\\
0, & \text{otherwise}.
\end{cases}.
\end{equation}
That is, $\vx_r[j]$ is located in the $\ord{{t'}_j}$ coordinate of $\vu_r$ from the right.
We can conclude that:
\begin{observation} \label{obs:input_bits_locations}
At step $t_i$, the input value $\vx_r[j]$ is aligned with $\vz_r[-(t_i+t'_j-1)]$ for every $r\in[k]$ and $j\in[s]$.
\end{observation} 
This input value contributes to the preactivation at step $t_i$ with coefficient $\vw_r^{\mathrm{in}}[j,i]$.
Therefore, we define the weight blocks $\vz_r$ as follows: For every $r\in[k]$ and $i,j\in[s]$, set
\begin{align*}
\vz_r[-\ell]
&=
\begin{cases}
\vw_r^{\mathrm{in}}[j,i], & \ell = t_i+t'_j-1,\\
0, & \text{otherwise}.
\end{cases}
\end{align*}
 To ensure that these assignments are consistent we need that the sums $t_i+t'_j$ are all distinct, 
 which will be guaranteed by the choice of $t_i$ and $t'_j$ later in the proof.
 \begin{requirement} \label{req:sums}
    The sums $t_i+t'_j$ for $i,j\in[s]$ are distinct.
\end{requirement}
 
\paragraph{Construction of $\vz$.} The vector \(\vz\) is responsible for propagating the outputs of previously computed gates $C_1(\vx), C_2(\vx), \ldots, C_s(\vx)$. 
More specifically, it encodes the weights between non-input gates.
If at time $t_i$ the process outputs $\ete{f_{\vw,\sigma}}{t_i}(\phi_{n,s}(\vx))=C_i(\vx)$, then at later times $t>t_i$, the value $C_i(\vx)$ shifts through the coordinates of $\vz$, 
and is aligned with $\vz[-(t-t_i)]$. When $t=t_j$ for some $j>i$, we want the previously generated values $C_i(\vx)$ to contribute with coefficients $\vw[i,j]$.
Formally, for every $i\in[s]$ and $\ell \in [T]$, define $\vz\in\R^T$ by
\begin{align*}
\vz[-\ell]  = \begin{cases}
\vw[i,j], & \ell = t_j-t_i \text{ for some } j>i,\\
0, & \text{otherwise}.
\end{cases}
\end{align*}
To ensure that these assignments are consistent we need that the differences $t_j-t_i, j>i$ are all distinct, 
 which will be guaranteed by the choice of $t_i$ later in the proof.  
   \begin{requirement} \label{req:diff}
    The differences $t_j-t_i$ over all $1 \leq i<j \leq s$ are distinct.
\end{requirement}

\paragraph{Construction of $T$, $T'$ and $t_1,\dots,t_s,t_1',\dots,t_s'$.} In this part, we choose the time steps $t_1,\dots,t_s$, the initial input values locations $t_1',\dots,t_s'$, and $T$ to satisfy
 \Cref{req:sums} and \Cref{req:diff}. Fortunately, these requirements are almost exactly the same as those required for a Golomb ruler and Sidon set (See \Cref{def:golomb_ruler} and \Cref{def:sidon_set}).
 Moreover, there is a well-known construction of Golomb rulers and Sidon sets by \citet{erdos1941problem}. Apply \Cref{claim:explicit_sidon_set} with parameter $2s$. Write the resulting
Golomb ruler (Sidon set) as $\{a_1<\cdots<a_{2s}\}$, then
$a_1,\dots,a_{2s}$ are positive, $a_{2s}=O(s^2)$, and their positive
pairwise differences and sums are distinct. For $i,j\in[s]$, set
$t'_j=a_j$, $t_i=a_{s+i}$, and $T=t_s$. Hence
$t_1<\cdots<t_s=T=O(s^2)$, the differences $t_i-t_j$ for $j<i$ are
distinct, and the sums $t_i+t'_j$ are distinct as well. Moreover, we can choose $T' = O(s^2)$ such that $t_i+t'_j\leq T'$ for all $i,j\in[s]$. 
This choice of $T'$ ensures that all non-zero cordinates of $\vu_r$ are aligned with $\vz_r$ throughout the autoregressive process, for any $r\in[k]$.

We prove by induction over the first $T$ steps that the generated value is
$C_i(\vx)$ at step $t_i$ and zero at every other step. Assume this
holds through step $t-1$.

\begin{itemize}
    \item If $t\notin\{t_1,\dots,t_s\}$, then the leading $1$ in
$\phi_{n,s}(\vx)$ is aligned with
$\vv[-t]=c_\sigma-A-1$. Since every input coordinate belongs to $[0,1]$, the
absolute input contribution is at most
$\sum_{r=1}^k\lVert\vz_r\rVert_1$. By the induction hypothesis, every
previously generated value is either zero or some $C_j(\vx)$, so the
absolute propagated contribution is at most $M_C\lVert\vz\rVert_1$.
Thus the preactivation is at most
\begin{align*}
c_\sigma-A-1+\sum_{r=1}^k\lVert\vz_r\rVert_1
+M_C\lVert\vz\rVert_1=c_\sigma-1<c_\sigma.
\end{align*}
The zero-region condition in \Cref{ass:activation} gives
$\ete{f_{\vw,\sigma}}{t}(\phi_{n,s}(\vx))=0$.
\item Suppose that $t=t_i$ for some $i\in[s]$. The leading $1$ in
$\phi_{n,s}(\vx)$ is aligned with $\vv[-t_i]$, so its contribution is
$\theta_i$. Moreover, for every $r\in[k]$ and $j\in[s]$, the value
$\vx_r[j]$ is aligned with $\vz_r[-(t_i+t'_j-1)]$ and contributes
$\vw_r^{\mathrm{in}}[j,i]\vx_r[j]$. A value generated at an earlier step
$\tau<t_i$ is aligned with $\vz[-(t_i-\tau)]$. By the induction hypothesis,
it is zero unless $\tau=t_j$ for some $j<i$. In that case it equals
$C_j(\vx)$, and $\vz[-(t_i-t_j)]=\vw[j,i]$. Consequently, the
preactivation at step $t_i$ is
\begin{align*}
\theta_i
+\sum_{q=1}^n \vw^{\mathrm{in}}[q,i]\vx[q]
+\sum_{j=1}^{i-1}\vw[j,i]C_j(\vx).
\end{align*}
This is precisely the preactivation computed by $v_i$. Applying $\sigma$ gives
$\ete{f_{\vw,\sigma}}{t_i}(\phi_{n,s}(\vx))=C_i(\vx)$, completing the induction.
\end{itemize}

Thus, at every step $t\in[T]$, the generated value is $C_i(\vx)$ if
$t=t_i$ for some $i\in[s]$, and is zero otherwise. This proves the
CoT-preserving property.
Since $t_s=T$ and $v_s$ is the output gate, the final generated value is
$C(\vx)$.

Every coordinate of $\phi_{n,s}(\vx)$ is an input coordinate,
zero, or one. Hence,
$\phi_{n,s}(\{0,1\}^n)\subseteq\{0,1\}^{n'}$.
For threshold activation, every generated token is also binary,
so the simulation on Boolean inputs uses $\Sigma=\{0,1\}$.

Finally, $\phi_{n,s}$ depends only on $n$ and $s$, which determine $T$ and
$T'$ and the initial locations of the input coordinates $t_1',\dots,t_s'$. The preceding construction
therefore constructs a description of $\phi_{n,s}$ in time polynomial in $n$
and $s$. Given this description, evaluating $\phi_{n,s}(\vx)$ consists of
initializing $n'$ coordinates to zero, writing the leading $1$, and
copying the $n$ input values to their prescribed locations. This takes
$O(n')$ real-RAM operations, which proves the efficiency property. Finally, the length of the feature map and the
dimension of $\vw$ are
\begin{align*}
n'=d=T+T'k+T=O(ns+s^2),
\end{align*}
which completes the proof.
\end{proof}

% \begin{proof}[Proof of the exact part of
% \Cref{thm:LT-expressivity-tight}]
\begin{proof}[Proof of \Cref{thm:boolean-LT-expressivity}]
Apply the preceding construction with
$\sigma(z)=\ind[z\geq0]$ and restrict the inputs to
$\{0,1\}^n$. This activation satisfies
\Cref{ass:activation}. Every coordinate of the feature map
is an input coordinate, zero, or one, and every generated
value is binary. Thus the construction gives a simulation
over $\Sigma=\{0,1\}$ with the stated bounds and a
polynomial-time computable feature map depending only
on $n,s$.
\end{proof}

\begin{remark}[General readouts]\label{remark:affine.readout}
Neural networks often use a vector-valued affine output layer
with no activation function.
Although \Cref{thm:LT-expressivity-tight} is stated for
scalar-output $\sigma$-circuits, its construction also supports
networks with such output layers.
Indeed, the exact construction reproduces every gate value
$C_i(\vx)$ at a designated generation step $t_i\in[T]$
by \Cref{property:cot-preservation}..
Consequently, such networks can be simulated by applying
an external affine readout to the designated values
in the generated trajectory.
Alternatively, in an extension of the model that permits
omitting $\sigma$ at designated output steps, the shared
neuron can also compute the affine output coordinates,
one per designated step. Our CIFAR-10 experiments use this designated
affine-output extension to learn all 10 class logits jointly with the hidden
computation.
\end{remark}
% \begin{remark}[Approximate simulation]\label{thm:approximate-simulation}
% The zero-region condition in \Cref{ass:activation} can be relaxed at the cost
% of approximate rather than exact simulation. Suppose instead that $f$ is
% locally Lipschitz and $\lim_{z\to-\infty}f(z)=0$. By choosing the suppressing
% coordinates (the relevant coordinates of $\vv$ from the proof of \Cref{thm:LT-expressivity-tight}) sufficiently negative, the outputs at undesignated steps become
% arbitrarily small, although they need not be zero. These outputs perturb the
% preactivations at later steps. Since the trajectory has finitely many steps,
% local Lipschitz continuity controls the accumulation of these errors. Thus, for
% every fixed $f$-circuit $C$ and every $\varepsilon>0$, the same construction
% approximates every designated gate value and zero at every undesignated step,
% uniformly over $\vx\in[0,1]^n$, to accuracy $\varepsilon$. In particular, the
% final output approximates $C(\vx)$ uniformly to accuracy $\varepsilon$, with
% the same asymptotic bounds on the dimension $d$ and number of steps $T$.

% This relaxed condition is satisfied by common smooth activations such as the
% logistic sigmoid, softplus, GELU, and SiLU (Swish). ReLU and the threshold
% activation already satisfy the stronger zero-region condition in
% \Cref{ass:activation}.
% \end{remark}
\subsection{Approximate simulation}

The following theorem strengthens the approximate part of
\Cref{thm:LT-expressivity-tight} by approximating the entire
simulation trajectory, including all intermediate gate values.

\begin{theorem}[Approximate simulation]
\label{thm:approximate-simulation}
Suppose that $\sigma$ satisfies \Cref{ass:activation-approximate}.
For every input size $n$ and circuit size $s$, use the feature map
$\phi_{n,s}:[0,1]^n\to[0,1]^{n'}$ and designated steps
$1\leq t_1<\cdots<t_s=T$ from the construction in the proof of
\Cref{thm:LT-expressivity-tight}.
For every size-$s$ $\sigma$-circuit $C:[0,1]^n\to\R$ and every
$\varepsilon>0$, there exists $\vw\in\R^d$ such that
\begin{enumerate}
    \item $|\ete{f_{\vw,\sigma}}{t_i}(\phi_{n,s}(\vx))-C_i(\vx)|\leq\varepsilon$, \quad for every $\vx\in[0,1]^n$ and $i\in[s]$.
    \item $|\ete{f_{\vw,\sigma}}{t}(\phi_{n,s}(\vx))|\leq\varepsilon$, \quad for every $\vx\in[0,1]^n$ and
    $t\in[T]\setminus\{t_1,\ldots,t_s\}$.
\end{enumerate}
In particular,
$|\ete{f_{\vw,\sigma}}{T}(\phi_{n,s}(\vx))-C(\vx)|\leq\varepsilon$
for every $\vx\in[0,1]^n$.
The bounds $n',d=O(ns+s^2)$ and $T=O(s^2)$, and the efficiency
properties of the feature map and designated steps, remain unchanged.
The feature map and designated steps are independent of $C$ and
$\varepsilon$.
\end{theorem}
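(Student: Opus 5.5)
We reuse the exact construction from the proof of \Cref{thm:LT-expressivity-tight} verbatim: the same designated steps $t_i$, input locations $t'_j$, and blocks $\vz_r$, $\vz$ encoding the circuit weights. The only change is the suppressing entries of $\vv$ at undesignated steps. There we put a single large negative constant $-B$, with $B$ to be chosen, in place of $c_\sigma-A-1$. Since the feature map and designated steps do not involve $B$, they are independent of $C$ and $\varepsilon$, and all dimension and efficiency bounds carry over unchanged. The proof is then a stability argument: we compare the actual trajectory $y_1,\dots,y_T$ with the ideal trajectory $y^\star_t$, where $y^\star_{t_i}=C_i(\vx)$ and $y^\star_t=0$ otherwise.

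\textbf{Step 1 (a priori boundedness).} First we set up the continuity constants. Let $M_C$ be as in \Cref{obs:bounded-gate-values}, so every ideal preactivation at a designated step lies in a compact interval $I_0=[-R_0,R_0]$, where $R_0$ depends only on $C$. Here we use that the inputs lie in $[0,1]$ and the ideal generated values have magnitude at most $M_C$. Enlarge this to $I=[-R_0-1,R_0+1]$. Since $\sigma$ is continuous, it is bounded on $I$, and by compactness local Lipschitzness upgrades to a uniform Lipschitz constant $L$ on $I$. We also fix $M:=M_C+1$ as a working bound on the magnitudes of the actual generated values.

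\textbf{Step 2 (error recursion).} Let $e_t:=|y_t-y^\star_t|$ and $K:=\lVert\vz\rVert_1$. Suppose inductively that $e_\tau\le\eta_\tau\le 1$ for all $\tau<t$. Then every earlier value has magnitude at most $M$.
\begin{itemize}
\item At a designated step $t_i$, the bias and input contributions are exactly as in the ideal trajectory. The propagated part differs from the ideal one by at most $\max_{\tau<t}e_\tau\cdot\lVert\vz\rVert_\infty\cdot T\le K'\max_{\tau<t}\eta_\tau$. If this is at most $1$, the preactivation stays in $I$, so $e_{t_i}\le L K'\max_{\tau<t}\eta_\tau$.
\item At an undesignated step, the preactivation is at most $-B+\sum_r\lVert\vz_r\rVert_1+MK$. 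Choose $\delta:=\varepsilon'$ and $z_\delta$ with $|\sigma(z)|\le\delta$ for all $z\le z_\delta$, using $\lim_{z\to-\infty}\sigma(z)=0$. Taking $B$ large enough forces $e_t\le\delta$.
\end{itemize}

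\textbf{Step 3 (closing the recursion).} Unrolling this over $T$ steps, each designated step multiplies the running maximum error by at most $\Lambda:=\max(1,LK')$. Errors are created only at undesignated steps, and there they are at most $\delta$. Hence $\max_t e_t\le\Lambda^{s}\delta$. Choose $\delta=\min(1,\varepsilon)/\Lambda^{s}$, which also keeps all the inductive hypotheses valid, and then choose $B$ accordingly. This gives both claimed bounds, and in particular the final-output bound at $t_s=T$.

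The main obstacle is the circularity in Step 1: the Lipschitz constant $L$ must be fixed before we know the trajectory stays in $I$, and $B$ must be chosen after $\delta$. Ordering the choices as $R_0\to I\to L,K'\to\Lambda\to\delta\to B$, with the induction hypothesis $e_\tau\le1$ guaranteeing confinement, resolves it.
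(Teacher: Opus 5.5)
Your proposal is correct and follows essentially the same route as the paper. In both, only the suppressing coordinates of $\vv$ change, and an induction keeps every generated value within a tolerance of its prescribed value so that preactivations stay in a compact interval where $\sigma$ is Lipschitz. The tolerance $\delta$ is fixed before the suppression constant, and the differences are cosmetic: you amplify errors only at designated steps (giving $\Lambda^{s}$ rather than the paper's $q^{T}$), and you confine preactivations to $[-R_0-1,R_0+1]$ instead of $[-R-B,R+B]$.

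One bookkeeping fix is needed. Take $L\geq1$, as the paper does, or set $\Lambda=\max\{1,K',LK'\}$. Your confinement condition $K'\max_{\tau<t}\eta_\tau\leq1$ requires $K'\leq\Lambda$, and this can fail when $LK'<1$ but $K'>1$.
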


\begin{proof}
Use the construction from the proof of
\Cref{thm:LT-expressivity-tight}, changing only the coordinates of
$\vv$ associated with undesignated steps.
Let $M_C$ be the bound on absolute gate values used in that proof,
and set $B=\lVert\vz\rVert_1$.
Choose $R\geq1$ such that every circuit gate preactivation lies in
$[-R,R]$ for every input.
These bounds exist by \Cref{obs:bounded-gate-values}, since local
Lipschitz continuity implies boundedness on bounded intervals.
Let $L\geq1$ be a Lipschitz constant for $\sigma$ on
$[-R-B,R+B]$, and set
$q=\max\{1,LB\}$ and
$\delta=\min\{1,\varepsilon\} \cdot q^{-T}$.

By the tail condition, there exists $h\in\R$ such that
$|\sigma(z)|\leq\delta$ whenever $z<h$.
At every undesignated step $t$, set
$\vv[-t]=h-\sum_{r=1}^k\lVert\vz_r\rVert_1-(M_C+1)B-1$.
At designated steps, retain $\vv[-t_i]=\theta_i$.
During the first $T$ steps, previously generated values interact
only with $\vz$, so changing these coordinates of $\vv$ does not
change the coefficients propagating errors.

We prove by induction that, through step $t$, every generated value
differs from its prescribed value by at most $\delta q^t$, uniformly
over all inputs. Here the prescribed value is $C_i(\vx)$ at $t_i$
and zero otherwise.
The claim is vacuous at $t=0$.
Suppose it holds through step $t-1$.
Since $\delta q^{t-1}\leq1$, all previously generated values have
absolute value at most $M_C+1$.
At an undesignated step, the chosen coordinate of $\vv$ therefore
makes the preactivation at most $h-1$, so the output has absolute
value at most $\delta\leq\delta q^t$.
At a designated step $t_i$, the construction with prescribed
previous values gives exactly the preactivation of gate $v_i$.
The error in this preactivation is at most
$B\delta q^{t-1}\leq B$.
Both preactivations thus lie in $[-R-B,R+B]$, and the output error
is at most $LB\delta q^{t-1}\leq\delta q^t$.
This completes the induction.

Finally, $\delta q^T\leq\varepsilon$.
Only the suppressing coordinates of $\vv$ have changed, so the
feature map, designated steps, dimension, number of steps, and
stated efficiency properties are unchanged.
\end{proof}

\subsection{Additional Claims}
Here we establish additional claims used in the simulation proofs.
First, we show that circuit gate values are uniformly bounded
over the input domain (\Cref{obs:bounded-gate-values}). We then prove the equivalence between Sidon sets and
Golomb rulers (\Cref{obs:sidon.set.eq.golomb.ruler}) and give an explicit construction of such sets based on \cite{erdos1941problem}.

\begin{observation}\label{obs:bounded-gate-values}
Suppose that $\sigma$ is bounded on every bounded interval. For every
$\sigma$-circuit $C:[0,1]^n\to\R$ there exists an integer $M_C<\infty$ such that $|C_i(\vx)| \leq M_C$ for all $\vx\in[0,1]^n$ and $i\in[s]$, 
where $C_i(\vx)$ is the value of the $i$-th non-input node.
\end{observation}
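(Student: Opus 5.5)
The argument is a straightforward induction over the nodes of $C$ in a topological order compatible with the depth layering. Since $C$ is a DAG whose edges point from smaller to larger depth, I would order the non-input nodes $v_1,\dots,v_s$ by depth. I then prove by strong induction on $i$ that there is a finite bound $M_i$ with $|C_i(\vx)|\leq M_i$ for all $\vx\in[0,1]^n$. Taking $M_C$ to be the integer $\lceil\max_i M_i\rceil$ gives the claim.

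For the inductive step, fix $v_i$. Its preactivation is
\begin{align*}
z_i(\vx)=\theta_i+\sum_{(u,v_i)\in E} w(u,v_i)\,\val(u).
\end{align*}
Each predecessor $u$ is either an input node or a non-input node earlier in the order. If $u$ is an input node, then $|\val(u)|=|x_k|\leq1$ because $\vx\in[0,1]^n$. Otherwise $|\val(u)|\leq M_j$ by the induction hypothesis. It follows that $|z_i(\vx)|\leq R_i$ for every $\vx$, where
\begin{align*}
R_i:=|\theta_i|+\sum_{(u,v_i)\in E}|w(u,v_i)|\,\max\{1,M_u\},
\end{align*}
which is finite because the circuit has finitely many edges with real weights.

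Since $\sigma$ is bounded on the bounded interval $[-R_i,R_i]$, we may set $M_i:=\sup_{|z|\leq R_i}|\sigma(z)|<\infty$. Then $|C_i(\vx)|=|\sigma(z_i(\vx))|\leq M_i$ uniformly in $\vx$, which closes the induction.

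The argument has no real obstacle. The only points needing care are that the bound must be uniform over the whole domain, which holds because the inputs lie in the compact cube, and that the hypothesis must be boundedness on bounded intervals rather than continuity. The observation is also used under \Cref{ass:activation-approximate}, and there boundedness on bounded intervals follows because a locally Lipschitz function is continuous and hence bounded on compact intervals.
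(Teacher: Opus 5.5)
Your proof is correct and follows essentially the same route as the paper: order the gates topologically, show inductively that each preactivation lies in a bounded interval, invoke boundedness of $\sigma$ on bounded intervals, and take $M_C$ to be the ceiling of the supremum. Your version is slightly more explicit, writing out the interval bound $R_i$, and your remark that local Lipschitz continuity yields boundedness on bounded intervals matches how the paper invokes this observation under \Cref{ass:activation-approximate}.
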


\begin{proof}
Order the non-input gates as $v_1,\ldots,v_s$ such that every gate appears
after all of its non-input predecessors. We prove inductively that each gate
function $C_i:[0,1]^n\to\R$ is bounded. Suppose this holds for every $j<i$.
The preactivation at $v_i$ is a finite linear combination of input
coordinates and preceding gate functions, with fixed coefficients.
Therefore, it takes values in some bounded interval. Since $\sigma$ is bounded
on every bounded interval, $C_i$ is bounded. Thus, all $C_i$ are bounded,
and we may take
\begin{align*}
M_C
:=
\left\lceil
\sup\{ |C_i(\vx)|:\vx\in[0,1]^n,\ i\in[s]\}
\right\rceil
<\infty.
\end{align*}
\end{proof}

\begin{observation} \label{obs:sidon.set.eq.golomb.ruler}
        A set $S \subset \mathbb{N}$ is a Sidon set if and only if it is a Golomb ruler.    
\end{observation}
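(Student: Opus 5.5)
The plan is to prove both implications directly from \Cref{def:sidon_set} and \Cref{def:golomb_ruler}. Each direction rearranges a single linear relation among four elements of $S$. The main care is bookkeeping of the equality cases, since the Sidon condition allows repeated elements ($a\le b$) while the Golomb condition only concerns strictly increasing pairs.

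\textbf{Sidon implies Golomb.} Suppose $S$ is a Sidon set. Take $a<b$ and $c<d$ in $S$ with $b-a=d-c$. Rearranging gives $b+c=a+d$, so the Sidon property yields $\{b,c\}=\{a,d\}$. There are two cases.
\begin{itemize}
\item If $b=a$ and $c=d$, this contradicts $a<b$, so the case cannot occur.
\item Otherwise $b=d$ and $c=a$, so $\{a,b\}=\{c,d\}$, which is exactly the Golomb condition.
\end{itemize}

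\textbf{Golomb implies Sidon.} Suppose $S$ is a Golomb ruler and $a+b=c+d$ with $a,b,c,d\in S$. Without loss of generality order the pairs as $a\le b$ and $c\le d$, and by symmetry assume $a\le c$. Then $c\le d$ and the sum constraint force $a\le c\le d\le b$, and the relation becomes $b-d=c-a$. There are two cases.
\begin{itemize}
\item If $c=a$, then $d=b$ and $\{a,b\}=\{c,d\}$ immediately.
\item If $c>a$, then $b-d=c-a>0$, so $d<b$. The pairs $(d,b)$ and $(a,c)$ are then both strictly increasing with equal positive differences. The Golomb property gives $\{d,b\}=\{a,c\}$. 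Since $a<c\le d<b$ and $a$ is the minimum of both sets, we get $a=d$, which contradicts $a<d$.
\end{itemize}
So the second case is impossible, and the Sidon property holds.

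The expected obstacle is only this case analysis with degenerate pairs: for example $a=b$ on the Sidon side, or coincidences such as $c=d$ in the Golomb-to-Sidon direction. These are handled by the WLOG orderings above, which reduce everything to the single nondegenerate case where the Golomb hypothesis applies.

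If desired, the counting formulations in the definitions also follow. The map $(a,b)\mapsto a+b$ over pairs $a\le b$ is injective exactly in the Sidon case, and the map $(a,b)\mapsto b-a$ over pairs $a<b$ is injective exactly in the Golomb case. Hence the equivalence can equally be read as an equivalence of these two cardinality statements.
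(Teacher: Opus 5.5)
Your proof is correct and follows the same route as the paper: you rearrange $b-a=d-c$ into $b+c=a+d$, invoke the Sidon property, and rule out the degenerate case $b=a$ using the strict inequality. For the converse, which the paper only says ``follows similarly,'' your ordering $a\le c\le d\le b$ and the contradiction $a=d$ supply the omitted details, including the three-term-progression case $c=d$.
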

\begin{proof}[Proof of \Cref{obs:sidon.set.eq.golomb.ruler}]
    Suppose $S$ is Sidon but not a Golomb ruler. Then two distinct
pairs $a<b$ and $c<d$ satisfy $b-a=d-c$. This implies $b+c=d+a$.
The Sidon property implies either $b=d$ and $c=a$, contradicting
distinctness, or $b=a$ and $c=d$, contradicting the strict inequalities 
The converse follows similarly.
\end{proof}

The following result is based on the classical construction of \citet{erdos1941problem}, originally stated for Sidon sets. Establishing the Sidon property is the main step in the proof. Our formulation additionally records that the resulting set is also a Golomb ruler, can be constructed in polynomial time, and can be translated and restricted to a prescribed size while preserving both properties. For completeness, we include a proof.

\begin{claim}[Explicit Golomb ruler and Sidon set by \citet{erdos1941problem}] \label{claim:explicit_sidon_set}
For every \(n\in\mathbb N\), one can construct in time \(\poly(n)\) a set  \(G\subseteq [1,4n^2]\) of size \(n\) that is both a Golomb
ruler and Sidon set.
\end{claim}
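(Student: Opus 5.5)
We will use the classical Erdős–Turán construction. Fix a prime $p$ with $n\le p\le 2n$; one exists by Bertrand's postulate and can be found in $\poly(n)$ time by trial division. Define
\[
a_x := 2px + (x^2 \bmod p), \qquad x\in\{0,1,\dots,p-1\}.
\]
Here $x^2\bmod p\in\{0,\dots,p-1\}$. Keep the first $n$ of these elements, $x=0,\dots,n-1$, and add $1$ to each so that they are positive integers. Call the result $G$.

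\textbf{Size and range.} Every element satisfies $1\le a_x+1\le 2p(n-1)+p$. Since $p\le 2n$, this is at most $4n^2$. The map $x\mapsto a_x$ is strictly increasing, because $2p(x+1)>2px+p-1$, so the elements are distinct and $|G|=n$. Computing all elements takes $\poly(n)$ time.

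\textbf{Sidon property.} This is the main step. Both properties survive translation and taking subsets, since translation shifts every sum $a+b$ by the same constant. So it suffices to show that $\{a_x: x\in\mathbb{F}_p\}$ is Sidon. Suppose
\[
a_x+a_y=a_u+a_v .
\]
Write $r(z)=z^2\bmod p$, so that $r(x)+r(y)$ and $r(u)+r(v)$ both lie in $[0,2p-2]$. Reducing the equation modulo $2p$ gives $r(x)+r(y)\equiv r(u)+r(v)\pmod{2p}$. Both sides lie in a window of length less than $2p$, so they are equal as integers. Subtracting, we get $2p(x+y)=2p(u+v)$, hence $x+y=u+v$ as integers, and therefore also in $\mathbb{F}_p$. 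Reducing $r(x)+r(y)=r(u)+r(v)$ modulo $p$ gives $x^2+y^2\equiv u^2+v^2$ in $\mathbb{F}_p$.

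Now use the identity $2xy=(x+y)^2-(x^2+y^2)$. Since $p$ is odd, this gives $xy\equiv uv$. So $\{x,y\}$ and $\{u,v\}$ are both the root multiset of $Z^2-(x+y)Z+xy$ over the field $\mathbb{F}_p$. Hence $\{x,y\}=\{u,v\}$ in $\mathbb{F}_p$. Since $0\le x,y,u,v<p$, the equality also holds as integers, so $\{a_x,a_y\}=\{a_u,a_v\}$.

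\textbf{Small case $p=2$.} When $n\le 2$, Bertrand may give $p=2$, which is not odd. Handle this case directly: take $G=\{1\}$ for $n=1$ and $G=\{1,2\}$ for $n=2$. Both are trivially Sidon sets and Golomb rulers. Alternatively, always pick an odd prime in $[n,2n]$ for $n\ge2$, e.g. $p=3$ for $n=2$, which still satisfies $p\le 2n$.

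\textbf{Golomb property.} This follows immediately from \Cref{obs:sidon.set.eq.golomb.ruler}.

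The only delicate step is the carry argument in the Sidon proof. It must use that the residue parts sum to less than $2p$, so that the equation splits cleanly into the "$x+y$" part and the "$x^2+y^2$" part.
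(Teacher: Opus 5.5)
Your proposal is correct and follows essentially the same route as the paper: the Erdős–Turán set $\{2px+(x^2 \bmod p)\}$ with a no-carry argument that splits the sum equation into $x+y=u+v$ and $x^2+y^2\equiv u^2+v^2$, the shared-quadratic-roots step (which needs $p$ odd), translation by $1$, restriction to $x<n$, separate handling of $n\le 2$, and the Golomb property via \Cref{obs:sidon.set.eq.golomb.ruler}. The differences are cosmetic: the paper first shows $\{(x,x^2)\}$ is Sidon in $\mathbb{F}_p^2$ and then that $(x,y)\mapsto 2px+y$ preserves this, while you do it in one pass; and you allow $p\le 2n$ instead of $p<2n$, which still gives $\max G\le p(2n-1)\le 2n(2n-1)<4n^2$.
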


\begin{proof}[Proof of \Cref{claim:explicit_sidon_set}]
We use the classical construction of Sidon sets \citep{erdos1941problem}. Recall that a set \(S\subseteq \mathbb N\) is
\emph{Sidon} if all unordered pairwise sums are distinct, i.e.,
\(a+b=c+d\) with \(a,b,c,d\in S\) implies \(\{a,b\}=\{c,d\}\).

For $n=1$, take $G=\{1\}$, and for $n=2$, take $G=\{1,2\}$. Both sets satisfy the claim.
Assume henceforth that $n\ge3$. Let $p$ be a prime satisfying $n\le p<2n$, which exists by Bertrand's
postulate, and can be found in deterministic time of $O(n \log \log n)$, say by Eratosthenes' sieve. Since $p\ge3$, it is odd.
Consider $S'=\{(x,x^2):x\in\mathbb F_p\}\subseteq\mathbb F_p^2$.
We first note that \(S'\) is Sidon. Indeed, suppose
\((i,i^2)+(j,j^2)=(k,k^2)+(\ell,\ell^2)\). Then we have the next equalities in $\sF_p$:
\begin{align*}
i+j &= k+\ell, \\
i^2+j^2 &= k^2+\ell^2.
\end{align*}
Since $(i+j)^2=i^2+2ij+j^2$ and $2$ is invertible in $\mathbb F_p$, the two equalities imply $ij=k\ell$.
Thus the unordered pairs \(\{i,j\}\) and \(\{k,\ell\}\) have the same sum
and product, hence they are roots of the same quadratic polynomial $x^2-(i+j)x+ij$, and thus \(\{i,j\}=\{k,\ell\}\).
To embed the Sidon set $S'$ into the integers, we use the mapping $f(x, y) = 2px + y$, where $x, y \in \{0,\dots,p-1\}$. We argue that $S:=f(S')$ is a Sidon set in $\sN$. First, note that the mapping $f$ is injective, as each pair $(x, y)$ corresponds to a unique integer representation in base $2p$ and thus $|f(S')|=p\ge n$. To confirm that the Sidon property is preserved, suppose that for $(x_1,y_1), (x_2,y_2), (x_3,y_3), (x_4,y_4) \in S'$, we have $f(x_1,y_1) + f(x_2,y_2) = f(x_3,y_3) + f(x_4,y_4)$. Expanding this using our mapping yields:
\begin{align*}
2p(x_1 + x_2 - x_3 - x_4) = (y_3 + y_4 - y_1 - y_2).
\end{align*}
Since $|y_3 + y_4 - y_1 - y_2| < 2p$ and the left-hand side is a multiple of $2p$, both sides of the equation must be zero, which implies that $x_1 + x_2 = x_3 + x_4$ and $y_1 + y_2 = y_3 + y_4$ simultaneously in $\sN$. These are exactly the conditions defining the Sidon property for $S'$ in $\mathbb{F}_p^2$. Since $S'$ is Sidon, these equalities imply that $\{(x_1, y_1), (x_2, y_2)\} = \{(x_3, y_3), (x_4, y_4)\}$. Finally, because $f$ is injective, this coordinate-wise equality forces the equality of the mapped values: $\{f(x_1, y_1), f(x_2, y_2)\} = \{f(x_3, y_3), f(x_4, y_4)\}$. Thus, the set $S$ is a Sidon set in $\mathbb{N}$.
Finally, define
\begin{align*}
G=\{2px+(x^2\bmod p)+1:0\le x<n\},
\end{align*}
where $x$ ranges over integers and $x^2\bmod p$ denotes the least nonnegative residue modulo $p$.
Since $n\le p$, the set $G$ is a subset of $S+1$ and has exactly $n$ elements by injectivity of $f$.
Translation and taking subsets preserve the Sidon property, so $G$ is Sidon and, by \Cref{obs:sidon.set.eq.golomb.ruler}, a Golomb ruler.
Moreover,
\begin{align*}
\max G\le 2p(n-1)+(p-1)+1=p(2n-1)<4n^2.
\end{align*}
Thus $G\subseteq[1,4n^2]$, and the construction takes $\poly(n)$ time.
\end{proof}

% \section{Proofs from \texorpdfstring{\Cref{sec:linear-thresholds}}{Section}}
\subsection{Proof of \texorpdfstring{\Cref{cor:linear-threshold-universal}}{Corollary}}
\begin{proof}[Proof of \Cref{cor:linear-threshold-universal}]

Set $s_n=\lceil\tau(n)\log(\tau(n))\log(n)\rceil$ and
$\phi=\phi_{n,s_n}$, where $\phi_{n,s_n}$ is from \Cref{thm:boolean-LT-expressivity}.
These choices depend only on $n$ and $\tau$.
Fix a deterministic Turing machine $M$ running in time at most $\tau(n)$.
By \citet{pippenger1979relations}, its computation on inputs of length
$n$ can be simulated by a Boolean circuit $C_n$ consisting of AND,
OR, and NOT gates, of size $O(\tau(n)\log(\tau(n)))$\footnote{See also Sec.~6 and
the proof sketch of Theorem~6.7 in \citet{arora2009computational}.}, where the
implicit constant may depend on $M$.
The circuit satisfies $C_n(\vx)=M(\vx)$ for every
$\vx\in\{0,1\}^n$. For all sufficiently large $n$ depending on $M$,
the factor $\log(n)$ exceeds this constant, so $C_n$ has size
at most $s_n$. Fix such an $n$.

For threshold activation $\sigma(z)=\ind[z\geq0]$, each Boolean gate can be simulated by a threshold gate as follows:
\begin{align*}
x\wedge y &= \sigma(x+y-\tfrac32), &
x\vee y &= \sigma(x+y-\tfrac12), &
\neg x &= \sigma(\tfrac12-x).
\end{align*}
Hence, $C_n$ is a $\sigma$-circuit of size at most $s_n$.
By \Cref{remark:common-feature-map}, we may pad it to size $s_n$
without changing its output. Applying
\Cref{thm:boolean-LT-expressivity} with the feature map
$\phi=\phi_{n,s_n}$ chosen above gives a vector $\vw\in\R^d$ such that
\begin{align*}
\ete{f_{\vw}}{T}(\phi(\vx))
=C_n(\vx)
=M(\vx)
\qquad\text{for every $\vx\in\{0,1\}^n$}.
\end{align*}
The feature map and the generated tokens are binary by
\Cref{thm:boolean-LT-expressivity}, so the entire
autoregressive computation uses $\Sigma=\{0,1\}$.
Moreover,
\begin{align*}
n'=d
&=O(ns_n+s_n^2)
=\widetilde O(n\tau(n)+\tau(n)^2),\\
T
&=O(s_n^2)
=\widetilde O(\tau(n)^2).
\end{align*}
Since $\tau(n)=O(\poly(n))$, we have $s_n=\poly(n)$.
The map $\phi_{n,s_n}$ is computable in time polynomial in $n$
and $s_n$, so $\phi$ is computable in $\poly(n)$ time.
The feature maps and the constants in the dimension and
generation-length bounds are independent of $M$.
\end{proof}

\section{Learning}
This part provides the remaining formal results supporting
the learning consequences discussed in \Cref{sec:cot-learning} of the main text.
We first give the learning procedure and generalization guarantee
for ReLU networks under supervision of all neuron activations (\Cref{sec:learning-relu-NN-with-cot}).
We then prove hardness of E2E learning for autoregressive
linear-threshold models assuming that one-way functions exist (\Cref{sec:e2e-hardness}).

\subsection{Learning Neural Networks with CoT}\label{sec:learning-relu-NN-with-cot}

We consider binary classification by scalar-output ReLU networks,
using $\sigma(z)=\max\{0,z\}$ and $\Sigma=\R$.
For a network $C:[0,1]^n\to\R$, the target label is
$\ind[C(\vx)>0]$.
Each training example reveals an input $\vx$ together with all
neuron activations $C_1(\vx),\ldots,C_s(\vx)$.
By \Cref{property:cot-preservation}, the observed activations suffice
to construct the complete training trajectory of the simulating
autoregressive neuron: place $C_i(\vx)$ at step $t_i$ and zero
at every other step.
The designated steps depend only on $s$, so this conversion
requires no knowledge of the network weights.
In particular, every prefix and its next-token target are known.
\Cref{property:cot-preservation} guarantees that a single
shared ReLU neuron is consistent with all these training pairs.
\paragraph{Finding a consistent neuron.}
Each observed trajectory provides prefix--next-token pairs $(\vz,y)$.
Since all target tokens are nonnegative, consistency of a ReLU neuron
$f_{\vw,b}$ with such a pair is equivalent to
\begin{align*}
    \langle\vw,\vz\rangle+b &= y
        && \text{if } y>0,\\
    \langle\vw,\vz\rangle+b &\leq 0
        && \text{if } y=0.
\end{align*}
Thus, fitting $m$ trajectories of length $T$ reduces to a linear
feasibility problem with $d+1$ variables and at most $mT$ constraints.
CoT preservation guarantees feasibility.
Moreover, consistency on every observed prefix implies, by induction
over the generation steps, that the learned neuron reproduces each
training trajectory under autoregressive generation.
In particular, it predicts every training label correctly.
For rational input data, a feasible solution can be found in time
polynomial in the encoding length of this linear program.

\paragraph{Generalization.}
We bound the complexity of the learned autoregressive model by
unrolling it into a feedforward network.
The bound applies to every choice of the shared parameters, including
solutions that do not correspond to the original teacher architecture.

\begin{claim}\label{claim:ar-relu-generalization}
Fix $d,T\geq1$ and a feature map $\psi$ taking values in $\R^*$
that is independent of the training sample. The classifier class
\begin{align*}
    \left\{
        \vx\mapsto
        \ind\!\left[
            \ete{f_{\vw,b}}{T}(\psi(\vx))>0
        \right]
        : \vw\in\R^d,\ b\in\R
    \right\}
\end{align*}
has VC dimension
$O((d+1)T^2\log(2(d+1)T))$.
Consequently, in the realizable setting, a sufficient number of
i.i.d.\ examples for every consistent classifier to have error at most
$\varepsilon$, with probability at least $1-\delta$, is
\begin{align*}
    m=O\left(
        \frac{
            (d+1)T^2\log(2(d+1)T)\log(2/\varepsilon)
            +\log(1/\delta)
        }{\varepsilon}
    \right).
\end{align*}
\end{claim}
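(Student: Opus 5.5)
The plan is to unroll the autoregressive process into a feedforward ReLU network with shared weights and bound its VC dimension using the piecewise-linear bounds of \citet{bartlett2019nearly}.

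\textbf{Unrolling.} Fix $\psi$ and let $n'$ be the length of $\psi(\vx)$. The classifier computes $T$ sequential ReLU units. The unit at step $t$ takes as input the last $d$ coordinates of the current context $\vx^{(t-1)}$; these are fixed coordinates of $\psi(\vx)$ together with earlier outputs. It computes $\relu(\langle\vw,\cdot\rangle+b)$, and the final output is thresholded at $0$.

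This is a ReLU network with $T$ computation units arranged in depth $T$ (one unit per layer, with skip connections to all earlier units and to the input). All units share the same $d+1$ parameters. Since the input is $\psi(\vx)$, the function class is a class over $\psi$'s range composed with a fixed map. VC dimension can only decrease under composition with a fixed map, so it suffices to bound the class over inputs $\vz\in\R^{n'}$.

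\textbf{Counting argument.} I would rerun the proof of Bartlett et al.\ directly with $W=d+1$ shared parameters rather than citing their theorem as a black box, because their count uses the number of parameters and weight sharing reduces it.

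Fix $m$ inputs. Proceed layer by layer. At layer $t$, on each region of parameter space where the sign patterns of units $1,\ldots,t-1$ are constant for all $m$ inputs, each earlier output is a polynomial in $(\vw,b)$ of degree at most $t-1$. The preactivation of unit $t$ is therefore a polynomial of degree at most $t$. By Warren's/growth-function bound, $m$ polynomials of degree $\le t$ in $W$ variables refine each region into at most $2(2emt/W)^W$ pieces. Multiplying over $t\le T$ layers, including the final threshold, gives at most
\[
\prod_{t=1}^{T}2\left(\frac{2emt}{W}\right)^{W}\le 2^T\left(\frac{2emT}{W}\right)^{WT}
\]
dichotomies. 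Requiring this to be at least $2^m$ and solving yields $m=O(WT\log(WT))$, which is even better than the claimed bound.

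\textbf{Fallback and conclusion.} If this counting is awkward to state, I would instead cite Bartlett et al.\ Theorem 7 with the total number of weights after untying. Untying gives $W\le (d+1)T$ parameters, depth $L=T$, and $T$ units, so their bound $O(WL\log W)$ gives $O((d+1)T^2\log(2(d+1)T))$, exactly the claimed form. Untying only enlarges the class, so this is a valid upper bound. I would take this route as the main proof since it matches the statement directly.

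The sample complexity then follows from the standard realizable PAC bound for consistent learners: $m=O\bigl((v\log(1/\varepsilon)+\log(1/\delta))/\varepsilon\bigr)$ with $v$ the VC dimension. The only subtle step is ensuring that Bartlett et al.\ apply to architectures with skip connections and a single unit per layer, and that the threshold at the output is counted. Their result covers general feedforward DAGs with piecewise-linear activations and depth-indexed counting, so this should hold.
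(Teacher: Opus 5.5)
Your main route is exactly the paper's proof: untie the $T$ steps into a feedforward ReLU network with $W=O((d+1)T)$ parameters and depth $O(T)$, apply the $O(WL\log W)$ bound of Bartlett et al., pass to the shared-parameter subclass, compose with the fixed map $\psi$, and finish with the standard realizable bound for consistent learners. Your sketched direct count with only $d+1$ shared parameters is also sound (one unit per layer, so $m$ polynomials of degree at most $t$ at layer $t$). It would give the sharper bound $O((d+1)T\log(2(d+1)T))$, which implies the stated one.
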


\begin{proof}
Unrolling the autoregressive computation produces a feedforward
network with $T$ ReLU units. Each unit receives at most $d$ inputs
from the initial context and preceding units, and has one bias.
Allowing separate weights and biases at each step gives a larger
class with $W=O((d+1)T)$ parameters and $L=O(T)$ layers.
To match the architecture assumed by \citet{bartlett2019nearly},
we append a fixed identity output unit that copies the final
generated token. This leaves the computed function unchanged.

By the VC-dimension bound of \citet{bartlett2019nearly}, this enlarged
classifier class has VC dimension $O(WL\log(2W))$.
The shared-parameter class is a subclass and satisfies the same
upper bound. Composing with the fixed map $\psi$ cannot increase
VC dimension.
The sample bound follows from the standard realizable
generalization bound for consistent classifiers.
\end{proof}

\begin{corollary}\label{cor:relu-cot-learning}
Let $C:[0,1]^n\to\R$ be a size-$s$ ReLU network, and let $\cD$
be any distribution over $[0,1]^n$.
Given complete neuron activations on sufficiently many i.i.d.\
inputs drawn from $\cD$, one can learn a single autoregressive
ReLU neuron of dimension $d=O(ns+s^2)$ using $T=O(s^2)$
generation steps such that, with probability at least $1-\delta$,
\begin{align*}
    \Pr_{\vx\sim\cD}\left[
        \ind\!\left[
            \ete{f_{\vw,b}}{T}(\phi_{n,s}(\vx))>0
        \right]
        \neq \ind[C(\vx)>0]
    \right]
    \leq\varepsilon.
\end{align*}
The required number of training examples is polynomial in
$n,s,1/\varepsilon$, and $\log(1/\delta)$.
Learning reduces to a linear program of polynomial size;
for rational data, its running time is polynomial in the
encoding length.
\end{corollary}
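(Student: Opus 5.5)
The corollary follows by combining three ingredients already established: CoT preservation of the exact simulation (\Cref{property:cot-preservation}), the reduction of consistency to linear feasibility, and the VC bound of \Cref{claim:ar-relu-generalization}.

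\textbf{Step 1: from activations to trajectories.} Fix $\phi=\phi_{n,s}$ and the designated steps $t_1<\cdots<t_s=T$ from the proof of \Cref{thm:LT-expressivity-tight}. ReLU satisfies \Cref{ass:activation} with $c_\sigma=0$. Both $\phi$ and the steps depend only on $n,s$. For each sample $\vx_\ell$, I form the sequence that has $C_i(\vx_\ell)$ at step $t_i$ and zero at every other step. By \Cref{property:cot-preservation}, this sequence equals $\CoT{f_{\vw^\star}}{T}(\phi(\vx_\ell))$ for the neuron $\vw^\star\in\R^d$ (with $b=0$) that simulates $C$. Hence the learner obtains complete CoT trajectories without knowing the weights, and the realizing neuron has the stated $d=O(ns+s^2)$ and $T=O(s^2)$.

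\textbf{Step 2: consistency via linear programming.} I then solve the linear feasibility problem described above. For every prefix $\vz$ with next token $y$, impose $\langle\vw,\vz\rangle+b=y$ if $y>0$ and $\langle\vw,\vz\rangle+b\le0$ if $y=0$. This system has $d+1$ variables and $mT$ constraints. It is feasible because $(\vw^\star,0)$ satisfies it. Any solution reproduces every training trajectory: by induction on the steps, each prefix seen during autoregressive generation is exactly the observed one. In particular the final token is correct, so $\ind[\ete{f_{\vw,b}}{T}(\phi(\vx_\ell))>0]=\ind[C(\vx_\ell)>0]$ on all samples. For rational data, this is a polynomial-size linear program, solvable in time polynomial in its encoding length.

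\textbf{Step 3: generalization.} The output classifier is consistent with the sample, so \Cref{claim:ar-relu-generalization} applies with $\psi=\phi$, which is fixed in advance and independent of the sample. The class contains the target labeling $\vx\mapsto\ind[C(\vx)>0]$ through $\vw^\star$, so we are in the realizable setting. The claim gives sample complexity $O\bigl(((d+1)T^2\log(2(d+1)T)\log(2/\varepsilon)+\log(1/\delta))/\varepsilon\bigr)$. Substituting $d=O(ns+s^2)$ and $T=O(s^2)$ shows this is polynomial in $n,s,1/\varepsilon,\log(1/\delta)$.

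The only delicate point is the realizability argument in Step 3. The hypothesis class must be the thresholded final output of the AR neuron composed with the fixed $\phi$, and we need to check that the consistent LP solution, which may differ from $\vw^\star$, lies in this class. It does, since the VC bound holds uniformly over all $(\vw,b)$. The remaining steps are bookkeeping.
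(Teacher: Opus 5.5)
Your proposal is correct and follows the paper's argument essentially step for step. The paper also builds complete trajectories from the observed activations via \Cref{property:cot-preservation}, fits the shared ReLU neuron with the same linear feasibility program (with the same induction showing that prefix-wise consistency reproduces each training trajectory), and gets polynomial sample complexity from the unrolled-network VC bound of \Cref{claim:ar-relu-generalization} in the realizable setting, which applies uniformly over all shared parameters $(\vw,b)$.
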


\begin{remark}[Regression]\label{remark:regression}
The argument also extends to approximating $C(\vx)$ under any bounded
nonnegative loss that is a nondecreasing function of the absolute
prediction error and vanishes at zero. Here we normalize the loss to take
values in $[0,1]$. Examples include bounded absolute-error
loss $\ell(\hat y,y)=\min\{1,|\hat y-y|\}$ and bounded squared-error
loss $\ell(\hat y,y)=\min\{1,|\hat y-y|^2\}$.
Indeed, the bounds of \citet{bartlett2019nearly} also apply to pseudodimension.
For these losses, the induced loss class has pseudodimension bounded
by a constant multiple of that of the predictor class, so standard
uniform convergence bounds yield polynomial sample complexity.
The same linear-programming procedure reproduces every training
output exactly, giving zero empirical loss. Thus, sufficiently
many samples ensure arbitrarily small expected loss.
\end{remark}

\subsection{Hardness of Learning with E2E supervision}\label{sec:e2e-hardness}
We consider $\sigma(z)=\ind[z\geq0]$ and $\Sigma=\{0,1\}$.

Theorem~4.4 of \citet{joshi2025theory} rules out efficient E2E learning under the assumption
that polynomial-size constant-depth threshold circuits are hard to learn.
Their reduction uses dimension and generation length $O(s^L n)$ to simulate a
size-$s$, depth-$L$ threshold circuit.  This overhead is polynomial for
constant $L$, but it becomes exponential when the depth grows with the running
time.  Our simulation has polynomial overhead for arbitrary polynomial-time
computations and therefore yields hardness from the weaker assumption that
one-way functions exist.

We first recall the cryptographic assumption.  A one-way function is easy to
compute but hard to invert on a random input.

\begin{definition}[One-way function]
A function $f\colon\{0,1\}^*\to\{0,1\}^*$ is a one-way function if it is computable in polynomial time and, for every probabilistic polynomial-time algorithm $I$,
\begin{align*}
\Pr_{\substack{\vx\sim\Unif(\{0,1\}^n)\\
               \text{randomness of }I}}
\left[I(f(\vx),1^n)\in f^{-1}(f(\vx))\right]
\leq n^{-\omega(1)}.
\end{align*}
\end{definition}
One-way functions imply pseudorandom generators \citep{haastad1999pseudorandom}, which in turn imply
pseudorandom function (PRF) families \citep{goldreich1986construct}.  Such functions have polynomial-size
circuits. The guarantee is that no polynomial-time algorithm can distinguish a
pseudorandom function from a uniformly random function with non-negligible
probability. Existence of PRF implies hardness of learning \citep{valiant1984theory, KearnsValiant1994}. 
More specifically, an efficient learner with nontrivial prediction advantage would
distinguish a pseudorandom function from a uniformly random function by
testing the learned predictor on fresh inputs, contradicting pseudorandomness.
This sequence of works give the following result:

\begin{theorem}[{\citep{KearnsValiant1994, goldreich1986construct,haastad1999pseudorandom}}]\label{thm:owf_implies_hardness_of_learning}
% [{\citep{KearnsValiant1994, goldreich1986construct,haastad1999pseudorandom}}]
If one-way functions exist, then there are a polynomial $q$ and function
classes $\cG_n\subseteq\{0,1\}^{\{0,1\}^n}$ such that every function in
$\cG_n$ is computable by a Boolean circuit of size at most $q(n)$, but no
probabilistic polynomial-time algorithm weakly learns $(\cG_n)_{n\geq1}$,
even improperly and under the uniform distribution.
\end{theorem}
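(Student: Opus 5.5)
The plan is to chain the three cited results and finish with a distinguishing argument. Assume one-way functions exist. By \citet{haastad1999pseudorandom} there is a pseudorandom generator, and by the construction of \citet{goldreich1986construct} there is a pseudorandom function family $\{F_{\kappa}:\{0,1\}^n\to\{0,1\}^n\}_{\kappa\in\{0,1\}^n}$ computable in time polynomial in $n$. Keeping only the first output bit gives a Boolean family, which is still pseudorandom.

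Set $\cG_n=\{\vx\mapsto F_\kappa(\vx)_1:\kappa\in\{0,1\}^n\}$. Since $(\kappa,\vx)\mapsto F_\kappa(\vx)_1$ is computable by a fixed polynomial-time machine, the standard Turing-machine-to-circuit simulation \citep{pippenger1979relations} gives a circuit of size $q(n)$ taking both $\kappa$ and $\vx$ as inputs. Hardwiring $\kappa$ yields a circuit of size at most $q(n)$ for each member of $\cG_n$. The polynomial $q$ does not depend on $\kappa$, because the circuit is built once for the two-input function and only the key bits are fixed.

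For the hardness part, suppose a probabilistic polynomial-time learner $A$ weakly learns $(\cG_n)$ under the uniform distribution. That is, for some polynomial $p$, using $m(n)=\poly(n)$ labeled examples, with probability at least $1-\delta$ it outputs a hypothesis of error at most $1/2-1/p(n)$. We may take $\delta=1/(2p(n))$, or boost the confidence by standard repetition. From $A$ we build an oracle distinguisher $D^{\mathcal O}$:
\begin{enumerate}
  \item Sample $m$ uniform points, query $\mathcal O$ on them, and run $A$ on the resulting labeled sample to obtain $h$.
  \item Sample a fresh uniform point $\vx$, and output $1$ if and only if $h(\vx)=\mathcal O(\vx)$.
\end{enumerate}
The hypothesis $h$ is produced by a polynomial-time algorithm. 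Even in the improper case it is therefore given by a description evaluable in polynomial time, so $D$ is efficient.

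We then compare the two oracles:
\begin{itemize}
  \item If $\mathcal O=F_\kappa$ for a random key, then $\Pr[D=1]\geq(1-\delta)(1/2+1/p)\geq 1/2+1/(2p)-O(1/p^2)$.
  \item If $\mathcal O$ is a truly random function, then, conditioned on $\vx$ not lying among the $m$ queried points (an event of probability at least $1-m/2^n$), the value $\mathcal O(\vx)$ is a uniform bit independent of $h$. Hence $\Pr[D=1]\leq 1/2+m/2^n$.
\end{itemize}
The gap between the two cases is at least $1/(4p(n))$ for large $n$, which is non-negligible and contradicts pseudorandomness. This is the argument of \citet{valiant1984theory,KearnsValiant1994}.

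The main point needing care is the bookkeeping rather than any single inequality. First, the weak-learning guarantee with its confidence parameter must be converted into an advantage in expectation; fixing $\delta$ as a small polynomial fraction of the advantage handles this. Second, the collision event for the fresh point must be accounted for, which costs only a negligible $m/2^n$. Third, the uniform circuit-size bound $q(n)$ must hold across all keys, which the hardwiring argument above provides.
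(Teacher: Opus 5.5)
Your proposal is correct and takes the same route the paper sketches just before the theorem. One-way functions give a pseudorandom generator and hence (via GGM) a pseudorandom function family with polynomial-size circuits, and an efficient weak learner becomes a distinguisher by testing its hypothesis on a fresh uniform point, where the collision term $m/2^n$ is negligible.

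One small correction: polynomial-time evaluability of an improper hypothesis does not follow just from the learner running in polynomial time, since a short description need not be fast to evaluate. It is instead part of the standard definition of efficient learning, so your argument is unaffected.
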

Combining with \Cref{cor:linear-threshold-universal} gives the following result:
\begin{corollary}\label{cor:e2e_hardness_from_owf}
If one-way functions exist, then autoregressive linear-threshold models are
not efficiently E2E-learnable (\Cref{def:e2e-learnability}), even improperly.
\end{corollary}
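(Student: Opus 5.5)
The argument is a contrapositive reduction. We assume an efficient E2E learner $A$ for autoregressive linear thresholds and turn it into an efficient weak learner for the hard class $(\cG_n)$ of \Cref{thm:owf_implies_hardness_of_learning}. That learner must succeed under the uniform distribution, which contradicts the theorem.

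First we embed the hard class into the model. Let $q$ be the polynomial from \Cref{thm:owf_implies_hardness_of_learning}. Each $g\in\cG_n$ is computed by a Boolean AND/OR/NOT circuit of size at most $q(n)$. Using the gate identities in the proof of \Cref{cor:linear-threshold-universal}, we rewrite it as a threshold circuit of size at most $s_n:=q(n)$, padded to size exactly $s_n$ by \Cref{remark:common-feature-map}. We then apply \Cref{thm:boolean-LT-expressivity} with the common feature map $\phi:=\phi_{n,s_n}$. This gives, for every $g\in\cG_n$, a weight vector $\vw_g\in\R^{d}$ with $\ete{f_{\vw_g}}{T}(\phi(\vx))=g(\vx)$ for all $\vx\in\{0,1\}^n$. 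Here $n'=d=O(ns_n+s_n^2)$ and $T=O(s_n^2)$, both polynomial in $n$, and neither $\phi$ nor $d,T$ depends on $g$. Working directly with circuits avoids the ``sufficiently large $n$ depending on $M$'' caveat of the corollary.

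Next comes the reduction. Given uniform examples $(\vx_i,g(\vx_i))$, the weak learner maps each one to $(\phi(\vx_i),g(\vx_i))$ in polynomial time. These are E2E examples for the target generator $f_{\vw_g}\in\cF_{d,\lin}$ on input length $n'$. The induced input distribution is $\cD=\phi_\#\Unif(\{0,1\}^n)$, and $A$ must handle it because E2E learnability quantifies over every distribution. We run $A$ with, say, $\varepsilon=1/4$ and $\delta=1/4$, obtaining $h:\{0,1\}^{n'}\to\{0,1\}$, and output $h\circ\phi$. Since $\phi$ is injective on $\{0,1\}^n$ (it copies the input bits), we have
\[
\Pr_{\vx\sim\Unif}\bigl[h(\phi(\vx))\neq g(\vx)\bigr]=\Pr_{\vz\sim\cD}\bigl[h(\vz)\neq\ete{f_{\vw_g}}{T}(\vz)\bigr]\le 1/4 .
\]
So with probability at least $3/4$ we get advantage $1/4$, which is certainly weak learning. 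The running time and sample size are polynomial in $n',d,T$, hence in $n$. Improperness is harmless because $A$ may itself be improper.

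The main obstacle is bookkeeping rather than ideas. First, the definition of learnability quantifies over $(n,d,T)$, so efficiency must be read as uniform in these parameters; we instantiate them as fixed polynomials in $n$ so that the reduction stays polynomial. Second, we must match the precise notion of ``weakly learns'' in \Cref{thm:owf_implies_hardness_of_learning}, namely inverse-polynomial advantage with non-negligible success probability. Fixing constant $\varepsilon,\delta$ gives constant advantage, which is stronger than what that definition requires.
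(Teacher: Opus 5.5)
Your proposal is correct and follows the paper's proof essentially step for step. With $s=q(n)$ and the common feature map $\phi_{n,s}$ of \Cref{thm:boolean-LT-expressivity}, you feed the transformed sample $(\phi_{n,s}(\vx_i),g(\vx_i))$ to the assumed E2E learner and output $h\circ\phi_{n,s}$, whose error equals that of $h$ under the pushforward distribution; you also make explicit the AND/OR/NOT-to-threshold conversion that the paper leaves implicit. The injectivity of $\phi_{n,s}$ that you invoke is harmless but unnecessary, since the error identity follows directly from $\ete{f_{\vw_g}}{T}(\phi_{n,s}(\vx))=g(\vx)$ holding for every $\vx$.
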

\begin{proof}
Suppose that an efficient E2E learner exists.  Let $(\cG_n)_{n\geq1}$ and
$q$ be as in \Cref{thm:owf_implies_hardness_of_learning}, and set
$s=q(n)$. By \Cref{thm:boolean-LT-expressivity}, there is
a single polynomial-time computable map
$\phi_{n,s}:\{0,1\}^n\to\{0,1\}^{n'}$ such that, for every
$g\in\cG_n$, there is a linear threshold
$f_{\vw_g}\in\cFlin$ satisfying
\begin{align*}
    \ete{f_{\vw_g}}{T}(\phi_{n,s}(\vx))=g(\vx)
    \qquad\text{for every $\vx\in\{0,1\}^n$},
\end{align*}
where $n'=O(ns+s^2)$, $d=O(ns+s^2)$, and $T=O(s^2)$.  Crucially,
$\phi_{n,s}$ depends only on $n$ and the common circuit-size bound $s$, not on
$g$.

We use the assumed E2E learner to learn $\cG_n$.  Transform every labeled
example $(\vx_i,g(\vx_i))$ into
$(\phi_{n,s}(\vx_i),g(\vx_i))$ and run the E2E learner with parameters $d$ and
$T$.  The transformed sample is realizable by $f_{\vw_g}$.  If the learner
returns $h$, output the predictor $\vx\mapsto h(\phi_{n,s}(\vx))$.  Its error
under the original input distribution equals the error of $h$ under the
pushforward distribution induced by $\phi_{n,s}$.  Since $s$, $n'$, $d$, and
$T$ are polynomial in $n$, this gives a probabilistic polynomial-time weak
PAC learner for $(\cG_n)_{n\geq1}$, contradicting
\Cref{thm:owf_implies_hardness_of_learning}.
\end{proof}

\section{Proofs from \texorpdfstring{\Cref{sec:lower-bounds}}{Section}}
\begin{proof}[Proof of \Cref{thm:direct-alignment-lower-bound}]
Consider a width-$1$, depth-$L$ $\sigma$-circuit with one non-input
node $v_i$ at each depth $i\in[L]$ and an edge $(v_{i-1},v_i)$
for every $i\in\{2,\dots,L\}$. Choose the weights of these $L-1$
edges to be pairwise distinct. Since the simulation scheme must
handle arbitrary $\sigma$-circuits, it must handle this circuit.

Let $1\leq t_1<\cdots<t_L\leq T$ be the designated steps, and define
$\Delta_i:=t_i-t_{i-1}$ for every $i\in\{2,\dots,L\}$. Direct edge
alignment gives $\vw[-\Delta_i]=w(v_{i-1},v_i)$. Since the edge weights are pairwise distinct, the positive integers
$\Delta_2,\dots,\Delta_L$ are also pairwise distinct. Therefore,
\begin{align*}
T
\geq t_L
=t_1+\sum_{i=2}^{L}\Delta_i
\geq 1+\sum_{q=1}^{L-1}q
=1+\binom{L}{2},
\end{align*}
where the last inequality follows because $\Delta_2,\dots,\Delta_L$ are pairwise distinct.
\end{proof}

\section{Experiments}\label{sec:experiments}

\paragraph{Dataset and supervision split.}
The CIFAR-10 dataset~\citep{krizhevsky2009learning} contains $60{,}000$
$32\times32$ color images across ten classes. We partition the official
$50{,}000$-image training set deterministically, using seed 42, into $45{,}000$
teacher-training images and $5{,}000$ validation images used for teacher
selection. In the saved validation ordering, the first $4{,}000$ images supply
student recovery trajectories, the next $500$ form a holdout from student
recovery, and the last $500$ are unused for student recovery and evaluation.
The recovery holdout belongs to the teacher's validation partition.
The official $10{,}000$-image test set is used only for final evaluation after
freezing all five step-$10{,}000$ checkpoints. It is not used for recovery,
diagnostic tuning, or checkpoint or seed selection.

\paragraph{Teacher architecture.}
The teacher is a Convolutional Neural Network (CNN) consisting of four convolution--batch-normalization--ReLU stages, followed by global average pooling and a final affine readout layer. 
Before compiling the network into the autoregressive format, the batch normalization parameters are folded exactly into the weights and biases of the preceding convolutions. 
The architectural details of the four recovered stages are summarized in \Cref{tab:teacher_arch}. 

\begin{table}[h]
    \centering
    \caption{Architectural details of the teacher's compiled convolutional stages.}
    \label{tab:teacher_arch}
    \small
    \begin{tabular}{@{}lccccc@{}}
        \toprule
        Stage & Channels & Kernel & Stride & Activation Shape &  \#Parameters \\
        \midrule
        1 & $32$  & $5\times5$ & $4$ & $32\times8\times8$ & $2{,}432$ \\
        2 & $64$  & $3\times3$ & $2$ & $64\times4\times4$ & $18{,}496$ \\
        3 & $128$ & $3\times3$ & $2$ & $128\times2\times2$ & $73{,}856$ \\
        4 & $128$ & $3\times3$ & $1$ & $128\times2\times2$ & $147{,}584$ \\
        \bottomrule
    \end{tabular}
\end{table}

These stages yield $4{,}096$ scalar ReLU gates and contain $242{,}368$ learned
parameters. The compiled circuit appends ten affine output gates in class
order. The student learns these coefficients
and ten output biases, adding $1{,}290$ parameters. In total, recovery covers
$4{,}106$ designated gates and $243{,}658$ tied learned parameters.

\paragraph{Teacher training and selection.}
We train the teacher on the $45{,}000$ labeled training images using
cross-entropy and SGD, selecting the checkpoint by validation accuracy.
The frozen teacher achieves $72.15\%$ accuracy on the official test set.

\paragraph{Student construction.}
We construct an autoregressive student with shared weights and a fixed
activation schedule, using the affine-output extension in
\Cref{remark:affine.readout}. At every step other than the ten designated
output steps, the recurrence is
\begin{equation}
    h^{\mathrm{stu}}_t
    =\relu\!\left(\left\langle w_{\mathrm{stu}},z_t\right\rangle\right),
    \label{eq:recurrence}
\end{equation}
where $w_{\mathrm{stu}}$ is shared across steps and $z_t$ is the shifting
window containing the encoded image and preceding student outputs.
At the ten designated output steps, we omit $\relu$ in
\eqref{eq:recurrence} to produce signed logits.

Following the construction in
\Cref{thm:LT-expressivity-tight}, the simulation use a shared neuron parametrize by $\vw$ that contain many structural zeros along the weights of the teacher's circuit.
Therefore our goal here is to learn the weights of the teacher's circuit and not these zeros.
We use the Sidon construction
from \Cref{claim:explicit_sidon_set} to assign each circuit edge and bias
to a unique coordinate of $w_{\mathrm{stu}}$. The learner is given the
architecture, sparsity pattern, and convolutional parameter tying, but not the teacher's weights.
That is, the coordinate assignment is independent of the teacher's weight values.
We learn the associated coefficients from activation traces and store
them in a sparse coordinate map, leaving structural zeros implicit.

In more details, the weights that force undesignated outputs to zero all have the
same value. We store this value once and identify the corresponding
coordinates from the designated-step schedule. For each final learned student, we bound its outputs at all
designated steps over inputs in $[0,1]^n$ by interval propagation,
including the ten output logits. We use the largest absolute value
among the interval endpoints in place of $M_C$ when computing $A$
as in the proof of \Cref{thm:LT-expressivity-tight}. The norms
count every occurrence of a tied coefficient. We set the entries of $w_{\mathrm{stu}}$ that align with the
constant $1$ in the encoded input at undesignated steps to $-A-1$. Inductively, undesignated preactivations are at most $-1$,
while Sidon alignment reproduces the learned gates at designated
steps. Thus, evaluating only designated gates preserves the recurrence
in exact arithmetic. 

Then we can represent $w_{\mathrm{stu}}$ sparsely as a map from occupied coordinates to
learnable parameters, without store or create the elements whose value is zero.

\paragraph{Student training.}
We train five students on the same $4{,}000$ complete teacher trajectories,
using recovery seeds $4042,\ldots,4046$. The seeds vary parameter initialization
and minibatch ordering; the teacher checkpoint, image subsets, and optimization
hyperparameters remain fixed. Tied hidden parameters and output biases are
initialized independently according to $\mathcal N(0,0.02^2)$; the pooled output
coefficients use $\mathcal N(0,0.005^2)$. We use Adam with constant learning
rate $0.001$, minibatch size $64$, and a budget of $10{,}000$ steps.

Training uses designated positions only. The frozen teacher supplies fixed
hidden prefixes, post-ReLU activation targets, and raw logit targets.
For a hidden target $y$ and student preactivation $a$, we minimize
\begin{equation}
    \ell(a,y)=
    \begin{cases}
        (a-y)^2, & y>0,\\
        \relu(a)^2, & y=0.
    \end{cases}
    \label{eq:loss}
\end{equation}
Thus, a positive target determines the corresponding preactivation, whereas a
zero target imposes only the constraint $a\leq0$. For affine output gates,
we use squared error $(a-y)^2$ for every target sign. With fixed teacher prefixes, this objective is convex
in the tied trainable parameters; it need not be strictly convex or identify
the teacher's parameters. Training the student on $4{,}000$ trajectories for $10{,}000$ Adam steps took
roughly $5$ minutes on a single NVIDIA L40S GPU.  

\paragraph{Student evaluation.}
At test time, the student classifies images using its own intermediate
activations, followed by global-average pooling and its learned affine
readout. 

We evaluate all five students on the official test set and report both
test accuracy and agreement with the teacher's predictions. Teacher
agreement is the fraction of test examples on which the student and
teacher predict the same class, irrespective of whether that prediction
is correct. We also report two measures of logit error relative to the
teacher. Both measures compute the root mean squared difference between
student and teacher logits, averaged over all test images and all ten
classes. Logits are the class scores before softmax.
Free-logit RMSE evaluates the complete student computation: the student
generates its own hidden activations and applies global-average pooling
and its learned affine readout. This measure therefore reflects errors
in both the hidden computation and the readout.
Teacher-forced output RMSE instead supplies the teacher's final hidden
activations to the student's pooling and learned affine readout.
Using the same hidden activations as the teacher isolates readout error
from differences in the hidden computation.
Both measures assess agreement with the teacher's logits; test accuracy
remains the measure of classification performance.

Evaluation runs sequentially on a single NVIDIA A40 with batch size $64$.
The teacher uses the same evaluation procedure. 

For each student metric, we report the mean and one sample standard
deviation across the five students,
\[
s=\sqrt{\frac{1}{N-1}\sum_{i=1}^{N}(x_i-\bar{x})^2},
\qquad N=5.
\]
For percentage metrics, the standard deviation is in percentage points.
It measures variation across students evaluated on the same test images;
it is not a confidence interval over newly sampled images or dataset splits.

\begin{table}[h]
    \centering
    \caption{Official-test evaluation of all five final students trained on
    $4{,}000$ recovery trajectories. Student metrics show mean $\pm$ sample standard deviation.}
    \label{tab:progression}
    \small
    \begin{tabular}{@{}lc@{}}
        \toprule
        Metric & Official test set \\
        \midrule
        Student accuracy & $(72.106 \pm 0.054)\%$ \\
        Teacher accuracy & $72.150\%$ \\
        Teacher agreement & $(99.484 \pm 0.070)\%$ \\
        Free-logit RMSE & $0.023852 \pm 0.004095$ \\
        Teacher-forced output RMSE & $0.004897 \pm 0.002957$ \\
        \bottomrule
    \end{tabular}
\end{table}

\paragraph{Experimental conclusion.}
With $4{,}000$ recovery trajectories, the student jointly learns the hidden
convolutional parameters and affine output head. Across five final
checkpoints, it achieves $72.106\%$ mean test accuracy, compared with the
teacher's $72.150\%$, and $99.484\%$ mean teacher agreement. These results support our theoretical
finding that complete activation supervision enables a single shared
autoregressive neuron to recover the behavior of a multilayer network without
access to its weights.
\end{document}